\newcommand{\cmark}{\ding{51}}%
\newcommand{\xmark}{\ding{55}}%
\newcommand{\Df}{\mathcal{D}_f}
\newcommand{\Dg}{\mathcal{D}_g}
\newcommand{\Dfs}{\mathcal{D}_{f^*}}
\newcommand{\Dfse}{\mathcal{D}_{f^*_{|\varepsilon}}}
\newcommand{\Dfsm}{-\Dfs}
\newcommand{\Dfsme}{-\Dfse}
\title{Dual optimization for convex constrained objectives\\without the gradient-Lipschitz assumption}
\author{
  Martin Bompaire % \\
  % \texttt{martin.bompaire@polytechnique.edu}
  \thanks{CMAP, UMR 7641, École Polytechnique CNRS, Paris, France}\\
  %% examples of more authors
  \and
  St\'ephane Ga\"{\i}ffas % \\
  % \texttt{stephane.gaiffas@polytechnique.edu}
  % \footnotemark[1]~
  \thanks{LPSM, UMR 8001, Universit\'e Paris Diderot, Paris, France}\\
  \and
  Emmanuel Bacry % \\
  % \texttt{emmanuel.bacry@polytechnique.edu}
  \footnotemark[1]~
  \thanks{CNRS, CEREMADE Université Paris-Dauphine PSL, Paris France}
}
\date{}
\begin{document}

\maketitle

\begin{abstract}
  The minimization of convex objectives coming from linear supervised learning problems, such as
  penalized generalized linear models, can be formulated as finite sums of convex functions.
  For such problems, a large set of stochastic first-order solvers based on the idea of variance
  reduction are available and combine both computational efficiency and sound theoretical
  guarantees (linear convergence rates) \cite{johnson2013accelerating},
  \cite{schmidt2013minimizing}, \cite{shalev2013stochastic}, \cite{defazio2014saga}.
  Such rates are obtained under both gradient-Lipschitz and strong convexity
  assumptions.
  Motivated by learning problems that do not meet the gradient-Lipschitz assumption, such as
  linear Poisson regression, we work under another smoothness assumption, and
  obtain a linear convergence rate for a shifted version of Stochastic Dual Coordinate Ascent
  (SDCA) \cite{shalev2013stochastic} that improves the current state-of-the-art.
  Our motivation for considering a solver working on the Fenchel-dual problem comes from the
  fact that such objectives include many linear constraints, that are easier to deal with in the
  dual.
  Our approach and theoretical findings are validated on several datasets, for Poisson regression
  and another objective coming from the negative log-likelihood of the Hawkes process, which is a
  family of models which proves extremely useful for the modeling of information
  propagation in social networks and causality
  inference \cite{de2016learning}, \cite{farajtabar2015coevolve}.
\end{abstract}

\section{Introduction} % (fold)
\label{sec:introduction}

In the recent years, much effort has been made to minimize strongly convex finite sums with first
order information.
Recent developments, combining both numerical efficiency and sound theoretical guarantees, such as
linear convergence rates, include SVRG \cite{johnson2013accelerating},
SAG \cite{schmidt2013minimizing}, SDCA \cite{shalev2013stochastic} or
SAGA \cite{defazio2014saga} to solve the following problem:
\begin{equation}
  \label{eq:general_primal}
  \min_{w \in \R^d} \;\; \frac{1}{n} \sum_{i=1}^n \varphi_i(w) + \lambda g(w),
\end{equation}
where the functions $\varphi_i$ correspond to a loss computed at a sample $i$ of the dataset, and
$g$ is a (eventually non-smooth) penalization.
However, theoretical guarantees about these algorithms, such as linear rates guaranteeing a
numerical complexity $O(\log(1 / \varepsilon))$ to obtain a solution
$\varepsilon$-distant to the minimum, require both strong convexity of
$\frac{1}{n} \sum_{i=1}^n \varphi_i + \lambda g$ and a gradient-Lipschitz property on each
$\varphi_i$, namely $\| \varphi_i'(x) - \varphi_i'(y) \| \leq L_i \| x - y \|$ for any
$x, y \in \R^d$, where $\| \cdot \|$ stands for the Euclidean norm on $\R^d$ and $L_i > 0$ is the
Lipschitz constant.
However, some problems, such as the linear Poisson regression, which is of practical importance
in statistical image reconstruction among others (see \cite{bertero2009image} for more than a
hundred references) do not meet such a smoothness assumption.
Indeed, we have in this example $\varphi_i(w) = w^\top x_i - y_i \log(w^\top x_i)$ for
$i = 1, \ldots, n$ where $x_i \in \R^d$ are the features vectors and $y_i \in
\N$ are the labels, and where the model-weights must satisfy the linear
constraints $w^\top x_i > 0$ for all $i=1, \ldots, n$.

Motivated by machine learning problems described in Section~\ref{sec:applications} below,
that do not satisfy the gradient-Lipschitz assumption, we consider a more specific task relying
on a new smoothness assumption.
Given convex functions $f_i : \Df \rightarrow \R$ with $\Df = (0, +\infty)$ such that
$\lim_{t \rightarrow 0} f_i(t) = + \infty$,
a vector $\psi \in \R^d$, features vectors $x_1, \ldots, x_n \in \R^d$ corresponding to the rows
of a matrix $X$ we consider the objective
\begin{equation}
  \label{eq:primal}
  \min_{w \in \Pi(X)} P(w)
  \quad \text{where} \quad P(w) = \psi^\top w + \frac{1}{n} \sum_{i=1}^n
  f_i(w^\top x_i) + \lambda g(w),
\end{equation}
where $\lambda > 0$, $g : \R^d \rightarrow \R$ is a $1$-strongly convex function and
$\Pi(X)$ is the open polytope
\begin{equation}
  \label{eq:feasible_polytope}
  \Pi(X) = \{ w \in \R^d \; : \; \forall i \in \{1 , \dots, n\}, \; w^\top x_i > 0 \},
\end{equation}
that we assume to be non-empty.
Note that the linear term $\psi^\top w$ can be included in the regularization $g$ but
the problem stands clearer if it is kept out.
\begin{definition}
\label{hyp:new-gradient-assumption}
  We say that a function $f : \Df \subset \R \rightarrow \R$ is $L$-\emph{$\log$ smooth},
  where $L > 0$, if it is a differentiable and strictly monotone convex function that satisfies
  \begin{equation*}
    | f'(x) - f'(y) | \leq \frac{1}{L} f'(x) f'(y) | x - y |
  \end{equation*}
  for all $x, y \in \Df$.
\end{definition}
We detail this property and its specificities in Section~\ref{sec:log-smoothness}.
All along the chapter, we assume that the functions $f_i$ are $L_i$-$\log$ smooth.
Note also that the Poisson regression objective fits in this setting, where $f_i(x) = -y_i \log x$
is $y_i$-$\log$ smooth and $\psi = \frac 1n \sum_{i=1}^n x_i$.
See Section~\ref{sub:linear-poisson-regression} below for more details.

\paragraph{Related works.} % (fold)
\label{par:related_works}

% paragraph related_works (end)
Standard first-order batch solvers (non stochastic) for composite convex objectives are ISTA and
its accelerated version FISTA \cite{beck2009fast} and first-order stochastic solvers are mostly
built on the idea of Stochastic Gradient Descent (SGD) \cite{robbins1951stochastic}.
Recently, stochastic solvers based on a combination of SGD and the Monte-Carlo technique of
variance reduction \cite{schmidt2013minimizing}, \cite{shalev2013stochastic},
\cite{johnson2013accelerating}, \cite{defazio2014saga} turn out to be both very efficient
numerically (each update has a complexity comparable to vanilla SGD) and very sound theoretically,
because of strong linear convergence guarantees, that match or even improve the one of batch
solvers.
These algorithms involve gradient steps on the smooth part of the objective and theoretical
guarantees justify such steps under the gradient-Lipschitz assumptions thanks to the descent lemma
\cite[Proposition~A.24]{bertsekas1999nonlinear}.
Without this assumption, such theoretical guarantees fall apart.
Also, stochastic algorithms loose their numerical efficiency if their iterates
are projected on the feasible set $\Pi(X)$ at each iteration
as Equation~\eqref{eq:primal} requires.
STORC \cite{hazan2016variance} can deal with constrained objectives without
a full projection but is restricted to compact sets of constraints which
is not the case of $\Pi(X)$.
Then, a modified proximal gradient method from \cite{tran2015composite} provides convergence
bounds relying on self-concordance \cite{nesterov2013introductory} rather than the
gradient-Lipschitz property.
However, the convergence rate is guaranteed only once the iterates are close to the optimal
solution and we observed in practice that this algorithm is simply not working (since it ends up
using very small step-sizes) on the problems considered here.
Recently, \cite{lu2018relatively} has provided new descent lemmas based on
\emph{relative-smoothness} that hold on a wider set of functions including Poisson regression
losses.
This work is an extension of \cite{bauschke2016descent} that presented the same algorithm and
detailed its application to Poisson regression losses.
While this is more generic than our work, they only manage to reach sublinear convergence rates
$\mathcal{O}(1 / t)$ that applies only on positive solution (namely $w^* \in [0, \infty)^d$) while
we reach linear rates for any solution $w^* \in \R^d$.

\paragraph{Our contribution.} % (fold)
\label{par:our_contribution}

The first difficulty with the objective~\eqref{eq:primal} is to remain in the open
polytope $\Pi(X)$.
To deal with simpler constraints we rather perform optimization on the dual problem
\begin{equation}
  \label{eq:general_dual_problem}
  \max_{\alpha \in \Dfsm^n} D(\alpha)
  \quad \text{where} \quad
  D(\alpha) = \frac{1}{n} \sum_{i=1}^n - f^*_i (-\alpha_i)
        - \lambda g^* \Bigg(\frac{1}{\lambda n} \sum_{i=1}^n \alpha_i x_i -
        \frac{1}{\lambda} \psi \Bigg),
\end{equation}
where for a function $h$, the Fenchel conjugate $h^*$ is given by
$h^*(v) = \sup_u uv - h(u)$, and $\Dfsm$ is the domain of the function
$x \mapsto \sum_{i=1}^n f_i^*(-x)$.
This strategy is the one used by Stochastic Dual Coordinate Ascent
(SDCA) \cite{shalev2013stochastic}.
The dual problem solutions are box-constrained to $\Dfsm^n$ which is much easier
to maintain than the open polytope $\Pi(X)$.
Note that as all $f_i$ are strictly decreasing (because they are strictly monotone on $(0, +\infty)$
with $\lim_{t\rightarrow0} f_i(t)= +\infty$), their dual are defined on $\Dfs \subset (-\infty, 0)$.
By design, this approach keeps the dual constraints maintained all along the iterations
and the following proposition, proved in Section~\ref{sec:duality}, ensures that the primal iterate
converges to a point of $\Pi(X)$.
\begin{proposition}
\label{prop:strong-duality}
  Assume that the polytope $\Pi(X)$ is non-empty,
  the functions $f_i$ are convex, differentiable, with
  $\lim_{t\rightarrow0} f_i(t)= +\infty$ for $i=1, \dots, n$ and that $g$ is strongly convex.
  Then, strong duality holds, namely $P(w^*) = D(\alpha^*)$ and the Karush-Kuhn-Tucker conditions
  relate the two optima as
  \begin{equation*}
    \forall i \in \{ 1, \dots, n\}, \;
    \alpha_i^* = - {f_i^*}'({w^*}^\top x_i)
    \quad \text{and} \quad
    w^* = \nabla g^*\bigg(
      \frac{1}{\lambda n} \sum_{i=1}^n \alpha_i x_i -\frac{1}{\lambda} \psi \bigg),
  \end{equation*}
  where $w^* \in \Pi(X)$ is the minimizer of $P$ and $\alpha^* \in \Dfsm^n$ the maximizer of $D$.
\end{proposition}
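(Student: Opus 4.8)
The plan is to exhibit $D$ as the Lagrangian dual of $P$ and then to verify a constraint qualification that, remarkably, reduces exactly to the standing hypothesis that $\Pi(X)$ is non-empty. First I would decouple the loss from the linear map by introducing auxiliary variables $z_i = w^\top x_i$ and rewriting \eqref{eq:primal} as the minimization of $\psi^\top w + \tfrac1n\sum_{i=1}^n f_i(z_i) + \lambda g(w)$ under the linear equality constraints $z_i = w^\top x_i$. A useful preliminary observation is that the barrier behavior $\lim_{t\to0} f_i(t)=+\infty$ makes the inequalities defining $\Pi(X)$ redundant: any $w$ with finite objective automatically satisfies $w^\top x_i>0$ for every $i$, so the open-polytope constraint never has to be dualized and only the linear equalities enter the Lagrangian.

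Next I would form the Lagrangian with multipliers $\alpha_i/n$,
\[
L(w,z,\alpha) = \psi^\top w + \lambda g(w) + \frac1n \sum_{i=1}^n \big( f_i(z_i) + \alpha_i (z_i - w^\top x_i) \big),
\]
and compute the dual function $q(\alpha)=\inf_{w,z} L(w,z,\alpha)$ by minimizing separately in $z$ and in $w$. The $z_i$-minimization gives $\inf_{z_i}\big(f_i(z_i)+\alpha_i z_i\big) = -f_i^*(-\alpha_i)$, while the $w$-minimization gives $\inf_w \lambda g(w) - \big(\tfrac1n\sum_i \alpha_i x_i - \psi\big)^\top w = -\lambda g^*\big(\tfrac{1}{\lambda n}\sum_i \alpha_i x_i - \tfrac1\lambda \psi\big)$, using $(\lambda g)^*(v)=\lambda g^*(v/\lambda)$. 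Summing, $q(\alpha)=D(\alpha)$, and weak duality $D(\alpha)\le P(w)$ is immediate. Finiteness of $-f_i^*(-\alpha_i)$ confines $\alpha$ to $\Dfsm^n$, recovering the dual feasible set.

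For strong duality I would apply the Fenchel--Rockafellar theorem to $\Phi(z)=\tfrac1n\sum_i f_i(z_i)$, $\Psi(w)=\psi^\top w + \lambda g(w)$ and the linear map $w\mapsto Xw$. Its qualification --- the existence of a $w$ with $Xw$ in the relative interior of $\mathrm{dom}\,\Phi = (0,+\infty)^n$ --- is precisely the non-emptiness of $\Pi(X)$, so the hypothesis is used exactly here. Strong convexity of $g$ makes $\Psi$ coercive, which furnishes a (unique) primal minimizer $w^*$, and guarantees $g^*$ differentiable with $\nabla g^*=(\nabla g)^{-1}$; together these yield $P(w^*)=D(\alpha^*)$ with an attained dual maximizer $\alpha^*$.

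Finally the Karush--Kuhn--Tucker relations come from stationarity at the saddle point: $\partial_{z_i}L=0$ gives $\alpha_i^* = -f_i'({w^*}^\top x_i)$, the primal--dual coupling linking $\alpha^*$ to $w^*$ (read through the conjugate-inversion identity $(f_i^*)'=(f_i')^{-1}$), and $\nabla_w L=0$ gives $\nabla g(w^*) = \tfrac{1}{\lambda n}\sum_i \alpha_i^* x_i - \tfrac1\lambda\psi$; inverting $\nabla g$ via $\nabla g^*$ produces the stated formula for $w^*$. I expect the main obstacle to be the strong-duality step: one must justify the qualification for the \emph{open} polytope and confirm that the optimal $z_i^*={w^*}^\top x_i$ stay strictly positive, so that $f_i'(z_i^*)$ is finite and no dual boundary term appears. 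The barrier property is what rescues this --- it pushes the optimum into the interior of $\prod_i \mathrm{dom}\,f_i$ --- but turning this intuition into a clean relative-interior argument is the delicate part.
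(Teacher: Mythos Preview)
Your argument is correct and leads to the same conclusions, but it takes a genuinely different route from the paper. You invoke the Fenchel--Rockafellar theorem directly on the open problem, using the relative-interior qualification (which, as you observe, is exactly $\Pi(X)\neq\emptyset$) and relying on the barrier property only informally to ensure the optimum stays in the interior. The paper instead avoids any abstract duality theorem: it first shows (Lemma~\ref{lemma:equivalence-constrained-primal}) that for a small enough $\varepsilon>0$ the primal minimizer already lies in the \emph{closed} set $\Pi_{|\varepsilon}(X)=\{w:w^\top x_i\ge\varepsilon\}$, then dualizes this $\varepsilon$-restricted problem where Slater's condition is immediate, and finally checks (Lemma~\ref{lemma:fenchel_conjugates_equality}) that the conjugates $f^*_{i|\varepsilon}$ agree with $f_i^*$ at the optimum, so the dual solutions of the two problems coincide. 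Your approach is shorter and conceptually cleaner if one is willing to cite Fenchel--Rockafellar; the paper's $\varepsilon$-truncation is more elementary and self-contained, and it makes the ``optimum stays strictly inside $\Pi(X)$'' step --- which you flag as the delicate part --- completely explicit rather than leaving it to the barrier intuition. One incidental remark: your stationarity condition $\alpha_i^*=-f_i'({w^*}^\top x_i)$ is the correct one; the $f_i^{*\prime}$ appearing in the proposition statement is a typo (note that ${w^*}^\top x_i>0$ is not even in the domain of $f_i^*$), and the paper itself uses $\alpha_i^*=-f_i'({w^*}^\top x_i)$ in the later computations.
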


In this chapter, we introduce the $\log$ smoothness property and its characteristics and then
we derive linear convergence rates for SDCA \emph{without the gradient-Lipschitz}
assumption, by replacing it with $\log$ smoothness, see Definition~\ref{hyp:new-gradient-assumption}.
Our results provide a state-of-the-art optimization technique for the considered
problem~\eqref{eq:primal}, with sound theoretical guarantees (see Section~\ref{sec:algorithm})
and very strong empirical properties as illustrated on experiments conducted with several datasets
for Poisson regression and Hawkes processes likelihood (see Section~\ref{sec:experiments}).
We study also SDCA with importance sampling \cite{zhao2015stochastic} under $\log$ smoothness and
prove that it improves both theoretical guarantees and convergence rates observed in practical
situations, see Sections~\ref{sub:importance_sampling} and~\ref{sec:experiments}.
We provide also a heuristic initialization technique in Section~\ref{sub:heuristic_for_a_wise_start}
and a "mini-batch" \cite{qu2016sdna} version of the algorithm
in Section~\ref{sub:optimizing_over_several_indices} that allows to end up with a particularly
efficient solver for the considered problems.
We motivate even further the problem considered in this chapter in
Figure~\ref{fig:poisson_toy_example}, where we consider a toy Poisson regression problem
(with 2 features and 3 data points), for which L-BFGS-B typically fails while SDCA works.
This illustrates the difficulty of the problem even on such an easy example.

\begin{figure}
\centering
\includegraphics[width=\textwidth]{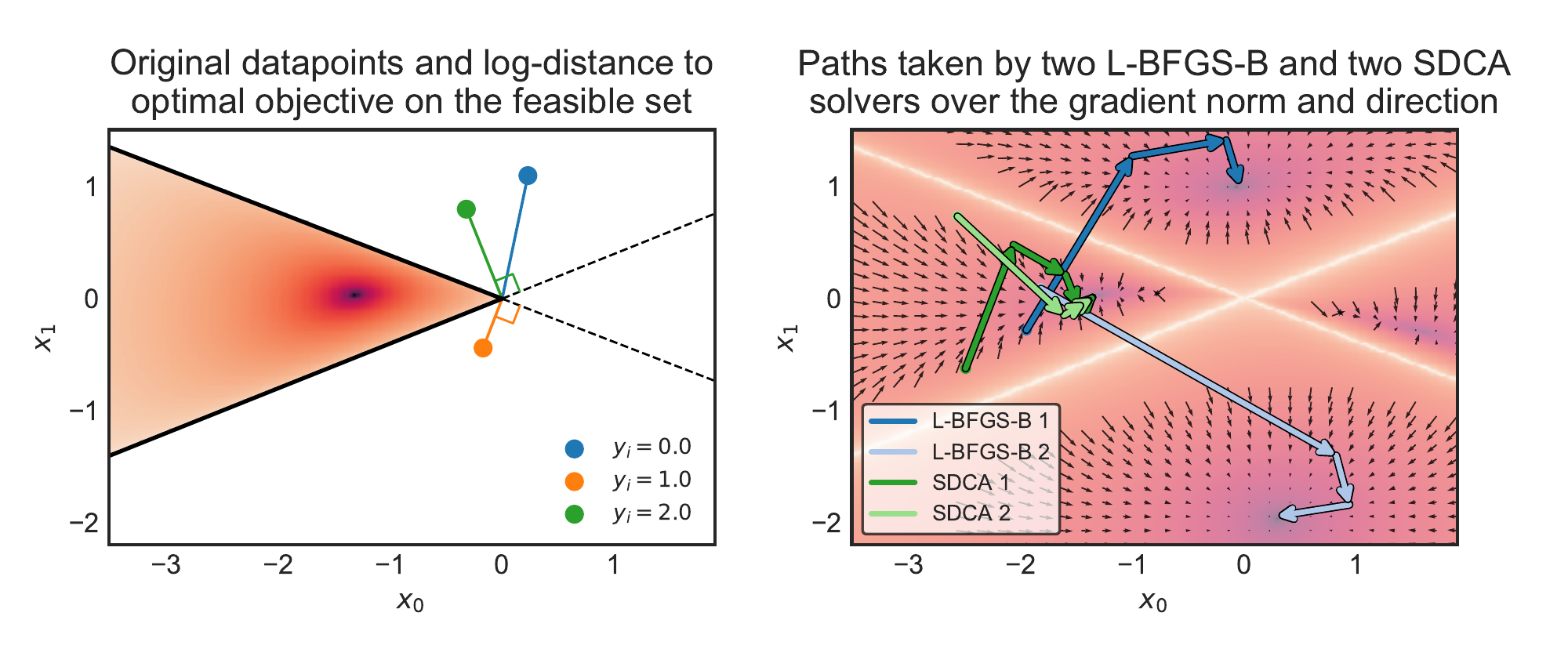}
\caption{Iterates of SDCA and L-BFGS-B on a Poisson regression toy example with three samples
  and two features.
  \emph{Left.} Dataset and value of the objective.
  \emph{Right:} Iterates of L-BFGS-B and SDCA with two different starting points.
  The background represents the gradient norm and the arrows the gradient direction.
  SDCA is very stable and converges quickly towards the optimum, while L-BFGS-B easily converges
  out of the feasible space.
  }
\label{fig:poisson_toy_example}
\end{figure}

\paragraph{Outline.} % (fold)
\label{par:outline}

% paragraph outline (end)

We first introduce the $\log$ smoothness property in Section~\ref{sec:log-smoothness},
relate it to self-concordance in Proposition~\ref{prop:log-smooth-second-order} and
translate it in the Fenchel conjugate space in Proposition~\ref{prop:hyp_dual_hessian_equivalence}.
Then, we present the shifted SDCA algorithm in Section~\ref{sec:algorithm} and state its convergence
guarantees in Theorem~\ref{th:general-convergence} under the $\log$ smoothness assumption.
We also provide theoretical guarantees for variants of the algorithm, one using proximal operators
\cite{shalev2014accelerated} and the second using importance sampling \cite{zhao2015stochastic}
which leads to better convergence guarantees in Theorem~\ref{th:convergence-is}.
In Section~\ref{sec:applications} we focus on two specific problems, namely Poisson
regression and Hawkes processes, and explain how they fit into the considered setting.
Section~\ref{sec:experiments} contains experiments that illustrate the benefits of our approach
compared to baselines.
This Section also proposes a very efficient heuristic initialization and numerical details
allowing to optimize over several indices at each iteration, which is a trick to accelerate even
further the algorithm.

\section{A tighter smoothness assumption}
\label{sec:log-smoothness}

To have a better overview of what $\log$ smoothness is, we formulate the following proposition
giving an equivalent property for $\log$ smooth functions that are twice differentiable.
\begin{proposition}
\label{prop:log-smooth-second-order}
Let $f : \Df \subset \R \rightarrow \R$ be a convex strictly monotone and twice
differentiable function. Then,
\begin{equation*}
  f \text{ is $L$-$\log$ smooth}
  \quad \Leftrightarrow \quad
  \forall x \in \Df, \; f''(x) \leq \tfrac{1}{L} f'(x)^2.
\end{equation*}
\end{proposition}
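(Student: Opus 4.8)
The plan is to convert the log-smoothness inequality into a \emph{Lipschitz} condition on the auxiliary function $x \mapsto 1/f'(x)$ and then to recognize the second-order inequality as the pointwise bound on its derivative; since every transformation is an equivalence, both implications follow simultaneously. The key preliminary observation is that, $f$ being convex and strictly monotone, its derivative $f'$ is monotone and never vanishes, hence keeps a constant sign on $\Df$. In particular $f'(x) f'(y) > 0$ for all $x, y \in \Df$, so the right-hand side $\frac1L f'(x) f'(y) |x-y|$ is nonnegative and we may divide the defining inequality by the positive quantity $f'(x) f'(y)$ without altering it.

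First I would carry out that division. Using the identity $\frac{f'(y) - f'(x)}{f'(x) f'(y)} = \frac{1}{f'(x)} - \frac{1}{f'(y)}$, the log-smoothness condition $|f'(x) - f'(y)| \le \frac1L f'(x) f'(y) |x-y|$ rewrites \emph{equivalently} as
\begin{equation*}
  \Big| \tfrac{1}{f'(x)} - \tfrac{1}{f'(y)} \Big| \le \tfrac1L |x - y|
  \quad \text{for all } x, y \in \Df,
\end{equation*}
that is, the map $x \mapsto 1/f'(x)$ is $\frac1L$-Lipschitz on $\Df$. Because $f$ is twice differentiable and $f'$ never vanishes, $1/f'$ is itself differentiable, so this Lipschitz property is equivalent to the pointwise bound $\big| (1/f')'(x) \big| \le \frac1L$ for every $x \in \Df$: one implication follows by letting $y \to x$ above, the other by the mean value theorem (equivalently, by integrating the derivative).

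To conclude I would simply compute $(1/f')'(x) = -f''(x)/f'(x)^2$ and note that, since $f$ is convex, $f''(x) \ge 0$, whence $\big| (1/f')'(x) \big| = f''(x)/f'(x)^2$. The pointwise bound then reads exactly $f''(x) \le \frac1L f'(x)^2$, which is the claimed characterization. I do not expect a genuine obstacle here; the only points requiring care are the bookkeeping that guarantees $f'(x) f'(y) > 0$ so that dividing yields a true equivalence (this is precisely where strict monotonicity is used), and the invocation of the elementary fact that a globally $\frac1L$-Lipschitz differentiable function is exactly one whose derivative is bounded by $\frac1L$ in absolute value. Both are routine, and the whole argument amounts to the observation that $\log$ smoothness of $f$ is nothing but Lipschitz-continuity of $1/f'$.
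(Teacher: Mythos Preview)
Your proposal is correct and follows essentially the same route as the paper: the key idea in both is that $L$-$\log$ smoothness of $f$ is precisely $\tfrac1L$-Lipschitzness of $\phi(x)=1/f'(x)$, which is in turn equivalent to the pointwise bound $|\phi'(x)|=f''(x)/f'(x)^2\le 1/L$. The only cosmetic difference is that the paper handles the forward implication by taking the limit $y\to x$ directly in the definition before introducing $\phi$, whereas you pass through $\phi$ for both implications at once; the substance is identical.
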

This proposition is proved in Section~\ref{sub:proof_of_proposition_prop:log-smooth-second-order}
and we easily derive from it that ${x \mapsto -L \log x}$ on $(0, +\infty)$, $x \mapsto L x$ on $\R$
and $x \mapsto L \exp(x)$ on $[0, +\infty)$ are $L$-$\log$ smooth.
This proposition is linked to the self-concordance property introduced by Nesterov
\cite{nesterov2013introductory} widely used to study losses involving logarithms.
For the sake of clarity, the results will be presented for functions whose domain $\Df$ is a
subset of $\R$ as this leads to lighter notations.
\begin{definition}
\label{def:self-concordance}
  A convex function $f : \Df \subset \R \rightarrow \R$ is standard self-concordant if
  \begin{equation*}
    \forall x \in \Df, \;
    | f'''(x) | \leq 2 f''(x)^{3/2}.
  \end{equation*}
\end{definition}
This property has been generalized \cite{bach2010self,sun2017generalized} but always consists in
controlling the third order derivative by the second order derivative, initially to bound the second
order Taylor approximation used in the Newton descent algorithm \cite{nesterov2013introductory}.
While right hand sides of both properties ($f'(x)^2$ and $2 f''(x)^{3/2}$) might look arbitrarily
chosen, in fact they reflect the motivating use case of the logarithmic barriers where
$f : t \mapsto - \log (t)$ and for which the bound is reached.
Hence, $\log$ smoothness is the counterpart of self-concordance but to control the second order
derivative with the first order derivative.
As it is similarly built, $\log$ smoothness shares the affine invariant property with
self-concordance.
It means that if $f_1$ is $L$ $\log$-smooth then $f_2: x \mapsto a x + b$ with $a, b \in \R$ is also
$L$-$\log$ smooth with the same constant $L$.
An extension to the multivariate case where $\Df \subset \R^d$ is likely feasible but is
useless for our algorithm and hence beyond the scope of this paper.
From the $\log$ smoothness property of a function $f$, we derive several characteristics for its
Fenchel conjugate $f^*$ starting with the following proposition.
\begin{proposition}
  \label{prop:hyp_dual_hessian_equivalence}
  Let $f : \Df \subset \R \rightarrow \R$ be a strictly monotone convex function and $f^*$ be its
  twice differentiable Fenchel conjugate.
  Then,
  \begin{equation*}
    f \text{ is $L$-$\log$ smooth}
    \quad \Leftrightarrow \quad
    \exists L > 0; \; \forall x \in \Dfs, \; {f^*}''(x) \geq L x^{-2}.
  \end{equation*}
\end{proposition}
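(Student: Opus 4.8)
The plan is to translate the defining inequality of $\log$ smoothness from the primal derivative $f'$ to the conjugate derivative ${f^*}'$ using the fundamental duality relation between them, and then read off the second-order bound on ${f^*}''$. The one fact I rely on throughout is that, for a strictly monotone differentiable convex $f$, the derivative $f'$ is a bijection from $\Df$ onto $\Dfs$ whose inverse is exactly ${f^*}'$; equivalently $x = {f^*}'(u)$ whenever $u = f'(x)$. Strict monotonicity also forces $f'$ to keep a constant nonzero sign (here negative, so that $\Dfs \subset (-\infty,0)$), which guarantees $uv > 0$ for all $u,v \in \Dfs$ — this positivity is what makes the manipulations below legitimate.

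With the substitution $u = f'(x)$, $v = f'(y)$, the $\log$ smoothness inequality $|f'(x) - f'(y)| \le \tfrac1L f'(x) f'(y)\,|x-y|$ is exactly equivalent to
\begin{equation*}
  |u - v| \;\le\; \tfrac1L\, u v\, \bigl| {f^*}'(u) - {f^*}'(v) \bigr|
  \qquad \text{for all } u, v \in \Dfs ,
\end{equation*}
since $|x - y| = |{f^*}'(u) - {f^*}'(v)|$ and the bijectivity of $f'$ makes the two quantifiers coincide. I would then prove the two implications from this single reformulation.

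For the forward direction, I rearrange the displayed inequality into $|{f^*}'(u) - {f^*}'(v)|/|u-v| \ge L/(uv)$ and let $v \to u$; because $f^*$ is twice differentiable the left side converges to $|{f^*}''(u)|$, yielding $|{f^*}''(u)| \ge L\,u^{-2}$, and convexity of $f^*$ (so ${f^*}'' \ge 0$) upgrades this to ${f^*}''(u) \ge L\,u^{-2}$. For the converse, assuming ${f^*}''(t) \ge L\,t^{-2}$ on $\Dfs$, I integrate between $u$ and $v$: since $\int L\,t^{-2}\,dt = -L/t$, the fundamental theorem of calculus gives $|{f^*}'(u) - {f^*}'(v)| \ge L\,|u-v|/(uv)$, which is precisely the displayed inequality rearranged. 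Translating back through $u = f'(x)$, $v = f'(y)$ recovers the $\log$ smoothness of $f$.

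The part needing the most care is not the computation but the justification of the change of variables: I must argue that $f'$ maps $\Df$ onto all of $\Dfs$ (using strict monotonicity together with $\lim_{t\to0} f(t) = +\infty$, so that the range of $f'$ really fills the domain of $f^*$), and that $f'$ keeps a constant sign so that $uv>0$ and the reciprocals $t^{-2}$ are harmless. I would also note that this argument sidesteps any twice-differentiability assumption on $f$ itself — only $f^*$ is assumed $C^2$ — which is why I work directly from the definition of $\log$ smoothness rather than from the second-order characterization of Proposition~\ref{prop:log-smooth-second-order}.
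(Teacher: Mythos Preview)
Your proposal is correct and follows essentially the same route as the paper: translate the $\log$-smoothness inequality to the dual via the reciprocity ${f^*}' = (f')^{-1}$, then pass between the first-order dual inequality and the pointwise bound ${f^*}''(u) \ge L u^{-2}$ using the fundamental theorem of calculus. The paper writes the intermediate step as $\bigl|\tfrac{1}{y}-\tfrac{1}{x}\bigr| \le \tfrac1L\,\bigl|{f^*}'(x)-{f^*}'(y)\bigr|$, which is just your displayed inequality divided through by $uv$; your explicit split into a limit $v\to u$ for the forward direction and an integration for the converse is a touch more careful than the paper's one-line appeal to the fundamental theorem, but the argument is the same.
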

This proposition is proved in Section~\ref{sec:proof_of_proposition_} and is the first step towards
a series of convex inequalities for the Fenchel conjugate of $\log$ smooth functions.
These inequalities, detailed in Section~\ref{sub:convex_inequality}, bounds the Bregman divergence
of such functions and are compared to what can be obtained with self-concordance or strong convexity
(on a restricted set) in the canonical case where $f : t \mapsto - \log (t)$.
It appears with $\log$ smoothness we obtain tighter bounds than what is achievable with other
assumptions and that all these bounds are reached (and hence cannot be improved) in the canonical
case (see Table~\ref{tab:convex-comparison}).

\section{The Shifted SDCA algorithm} % (fold)
\label{sec:algorithm}

The dual objective~\eqref{eq:general_dual_problem} cannot be written as a composite sum of convex
functions as in the general objective~\eqref{eq:general_primal}, which is required for stochastic
algorithms such as SRVG \cite{johnson2013accelerating} or SAGA \cite{defazio2014saga}.
It is better to use a coordinate-wise approach to optimize this problem, which leads to
SDCA \cite{shalev2014accelerated}, in which the starting point has been \emph{shifted}
by $\frac{1}{\lambda} \psi$.
This shift is induced by the relation linking primal and dual variables at the optimum: the second
Karush-Kuhn-Tucker condition from Proposition~\ref{prop:strong-duality},
\begin{equation}
\label{eq:general_primal_dual_relations_at_optimum_w}
  w^* = \nabla g^* \bigg(
          \frac{1}{\lambda n} \sum_{i=1}^n \alpha_i^* x_i -\frac{1}{\lambda} \psi
      \bigg).
\end{equation}
We first present the general algorithm (Algorithm~\ref{alg:general-shifted-sdca}), then its
proximal alternative (Algorithm~\ref{alg:prox-shifted-sdca}) and finally how importance sampling
leads to better theoretical results.
We assume that we know bounds $(\beta_i)_{1\leq i \leq n}$ such that $\beta_i / \alpha_i^* \geq 1$
for any $i=1, \ldots, n$,
such bounds can be explicitly computed from the data in the particular cases considered in this
chapter, see Section~\ref{sec:applications} for more details.

\begin{algorithm}[ht]
\caption{Shifted SDCA}
\begin{algorithmic}[1]
\Require{
  \begin{varwidth}[t]{\linewidth}
    Bounds $\beta_i \in \Dfsm$ such that
    $\forall i \in \{1, \dots, n\},~ \beta_i / \alpha_i^* \geq 1$, \\
    $\alpha^{(0)} \in \Dfsm^n$ dual starting point such that
    $\forall i \in \{1, \dots, n\},~ \beta_i / \alpha_i \geq 1$
 \end{varwidth}}
\State{$v^{(0)} = \frac{1}{\lambda n} \sum_{i=1}^n \alpha_i^{(0)} x_i -\frac{1}{\lambda} \psi$ }
\For{$t = 1, 2 \dots T$}
\State{Sample $i$ uniformly at random in $\{1, \ldots, n \}$}\label{lst:line:random}
\State{Find $\alpha_i$ that maximizes
    $- \frac{1}{n} f_i^*(-\alpha_i) - \lambda g^*\big(
      v^{(t - 1)} +
      (\lambda n)^{-1} (\alpha_i - \alpha_i^{(t-1)}) x_i
    \big)$} \label{lst:line:general_local_max}
\State{$\alpha_i \leftarrow \min(1, \beta_i / \alpha_i) \alpha_i$}
\label{lst:line:general-bounding}
\State{$\Delta \alpha_i \leftarrow \alpha_i - \alpha_i^{(t-1)}$}
\State{$\alpha^{(t)} \leftarrow \alpha^{(t - 1)} + \Delta \alpha_i e_i$}
\State{$v^{(t)} \leftarrow v^{(t - 1)} + (\lambda n)^{-1} \Delta \alpha_i x_i$}
\State{$w^{(t)} \leftarrow \nabla g^*( v^{(t)})$}
\EndFor
\end{algorithmic}
\label{alg:general-shifted-sdca}
\end{algorithm}
% This class of function include self-concordant functions such as ${-\log(x)}$,
% ${x \log(x) - \log(x)}$ or ${x \log(x) - x + \frac{1}{x}}$.
%
The next theorem provides a linear convergence rate for
Algorithm~\ref{alg:general-shifted-sdca} where we assume that each
$f_i$ is $L_i$-$\log$ smooth (see Definition~\ref{hyp:new-gradient-assumption}).
\begin{theorem}
\label{th:general-convergence}
Suppose that we known bounds $\beta_i \in \Dfsm$ such that
$R_i = \frac{\beta_i} {\alpha_i^*} \geq 1$ for $i=1,\dots,n$
and assume that all $f_i$ are $L_i$-$\log$ smooth with differentiable Fenchel conjugates
and $g$ is 1-strongly convex.
Then, Algorithm~\ref{alg:general-shifted-sdca} satisfies
\begin{equation}
\label{eq:general-convergence-rate}
\E [D(\alpha^*) - D(\alpha^{(t)})]
  \leq \Big( 1 - \frac{\min_{i} \sigma_i}{n} \Big)^t ( D(\alpha^*) - D(\alpha^{(0)})),
\end{equation}
where
\begin{equation}
\label{eq:sigma_i_def}
\sigma_i = \bigg( 1 + \frac{\|x_i\|^2  {\alpha_i^*}^2}{2 \lambda n L_i}
                      \frac{(R_i - 1)^2}
                           {\frac{1}{R_i} + \log R_i - 1}
           \bigg)^{-1}.
\end{equation}
\end{theorem}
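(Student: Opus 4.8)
The plan is to adapt the classical SDCA dual-ascent analysis \cite{shalev2013stochastic}, replacing its constant dual strong-convexity modulus by the position-dependent curvature supplied by $\log$ smoothness. I fix an iteration $t$, condition on $\alpha^{(t-1)}$, and observe that because line~\ref{lst:line:general_local_max} returns the exact maximizer of $D$ along the sampled coordinate, the realized gain dominates that of any admissible \emph{test step} $\alpha_i^{(t-1)}\mapsto \alpha_i^{(t-1)}+s(\alpha_i^*-\alpha_i^{(t-1)})$, $s\in[0,1]$. Before using this I must check that the truncation in line~\ref{lst:line:general-bounding} is harmless: the algorithm keeps the invariant $\alpha_i^{(t)}\le\beta_i$, and since $R_i\ge1$ gives $\alpha_i^*\le\beta_i$, the test point is a convex combination of two points $\le\beta_i$, hence itself $\le\beta_i$. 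As $D$ is concave in the $i$-th coordinate, the truncated iterate $\min(\alpha_i^{\max},\beta_i)$ (with $\alpha_i^{\max}$ the unconstrained coordinate maximizer) still has dual value at least that of the test point, because the latter is $\le\beta_i$ and therefore lies on the increasing branch of this concave section. This reduces the theorem to lower bounding the test-step gain.

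Writing $\Delta=s(\alpha_i^*-\alpha_i^{(t-1)})$, I split this gain into the separable conjugate part $-\frac1n[f_i^*(-\alpha_i^{(t-1)}-\Delta)-f_i^*(-\alpha_i^{(t-1)})]$ and the coupling part $-\lambda[g^*(v^{(t-1)}+\frac{\Delta x_i}{\lambda n})-g^*(v^{(t-1)})]$. Since $g$ is $1$-strongly convex, $g^*$ is $1$-smooth, so its quadratic upper bound together with $\nabla g^*(v^{(t-1)})=w^{(t-1)}$ bounds the coupling part below by $-\frac{\Delta}{n}(w^{(t-1)})^\top x_i-\frac{\Delta^2\|x_i\|^2}{2\lambda n^2}$. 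For the conjugate part I use pure convexity to extract the linear-in-$s$ term plus a nonnegative correction: with $\phi_i(u)=f_i^*(-u)$ and $c_s=\alpha_i^{(t-1)}+\Delta$ one has the identity $(1-s)\phi_i(\alpha_i^{(t-1)})+s\phi_i(\alpha_i^*)-\phi_i(c_s)=(1-s)\mathcal{B}_{f_i^*}(-\alpha_i^{(t-1)},-c_s)+s\,\mathcal{B}_{f_i^*}(-\alpha_i^*,-c_s)$, whose two Bregman divergences I lower bound through Proposition~\ref{prop:hyp_dual_hessian_equivalence} (${f_i^*}''(x)\ge L_i x^{-2}$), i.e.\ the convex inequalities of Section~\ref{sub:convex_inequality}. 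Taking the worst case of this correction over the feasible box $\{\alpha_i\le\beta_i\}$, where the argument ratio reaches $R_i$, produces exactly the factor $\frac1{R_i}+\log R_i-1$ and an effective constant modulus $\mu_i=\frac{2L_i}{{\alpha_i^*}^2}\frac{1/R_i+\log R_i-1}{(R_i-1)^2}$, so the correction is at least $\frac{\mu_i}{2}s(1-s)(\alpha_i^*-\alpha_i^{(t-1)})^2$.

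Collecting terms, the gain from updating coordinate $i$ is at least
\[
-\frac{s}{n}\Big[f_i^*(-\alpha_i^*)-f_i^*(-\alpha_i^{(t-1)})+(\alpha_i^*-\alpha_i^{(t-1)})(w^{(t-1)})^\top x_i\Big]+\Big(\frac{\mu_i s(1-s)}{2n}-\frac{s^2\|x_i\|^2}{2\lambda n^2}\Big)(\alpha_i^*-\alpha_i^{(t-1)})^2 .
\]
Averaging over the uniform coordinate and using convexity of $g^*$ with $\nabla g^*(v^{(t-1)})=w^{(t-1)}$, the averaged bracketed term is at most $D(\alpha^{(t-1)})-D(\alpha^*)$, so the averaged linear contribution is at least $\frac{s}{n}(D(\alpha^*)-D(\alpha^{(t-1)}))$. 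The quadratic coefficient of coordinate $i$ is nonnegative precisely when $s\le\sigma_i=(1+\frac{\|x_i\|^2}{\lambda n\mu_i})^{-1}$, which is the closed form~\eqref{eq:sigma_i_def}; choosing the uniform step $s=\min_i\sigma_i$ makes every such coefficient nonnegative, so these terms may be dropped. This yields $\E[D(\alpha^{(t)})-D(\alpha^{(t-1)})\mid\alpha^{(t-1)}]\ge\frac{\min_i\sigma_i}{n}(D(\alpha^*)-D(\alpha^{(t-1)}))$; rearranging, taking total expectation, and iterating from $t=0$ gives~\eqref{eq:general-convergence-rate}.

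I expect the main obstacle to be the conjugate step, namely turning the non-uniform bound ${f_i^*}''\ge L_i x^{-2}$ into a usable constant. This means integrating it into the Bregman expression $\frac1\rho-1+\log\rho$ (with $\rho=\alpha_i^{(t-1)}/\alpha_i^*$) and then proving that the ratio of this divergence to the squared displacement $(\alpha_i^*-\alpha_i^{(t-1)})^2$ is minimized, over the admissible range $\rho\in(0,R_i]$ enforced by the truncation, at the boundary $\rho=R_i$; this monotonicity is what pins down $(1/R_i+\log R_i-1)/(R_i-1)^2$ and hence $\sigma_i$, and it is the single place where the bound $\beta_i$ is indispensable. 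The remaining ingredients---the $1$-smoothness of $g^*$, the duality-gap bookkeeping, and the harmlessness of the truncation---are routine.
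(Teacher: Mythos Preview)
Your approach is essentially the paper's: the SDCA test-step with the $1$-smoothness upper bound on $g^*$, the barycentre inequality for $f_i^*$ (your Bregman-decomposition identity is exactly how the paper proves Lemma~\ref{lemma:barycentre_inequality} from Lemma~\ref{lemma:zero_order_inequality}), and then a worst-case over the box $\rho\in(0,R_i]$ to pin down the effective modulus $\mu_i$.

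One point to be careful about in the step you flag as the main obstacle. The paper splits it into two separate facts: (i) the $s$-dependent ratio $[\log(1-s+s/z)+s\log z]/(1-z)^2$ is decreasing in $z$ for \emph{all} $z>0$ (Lemma~\ref{lemma:barycentre-lower-bound-ratio2-increasing}), and (ii) for $z\ge1$ one has $\log(1-s+s/z)+s\log z\ge s(1-s)\bigl(\tfrac{1}{z}-1+\log z\bigr)$ (Lemma~\ref{lemma:barycentre-lower-bound-ratio}). The order matters: (ii) is false for $z<1$ (try $z=1/2$, $s=1/2$), so you cannot first pass to the $s$-free expression $\tfrac{1}{\rho}-1+\log\rho$ and only then minimize its ratio to $(\rho-1)^2$ over $\rho\in(0,R_i]$, as your final paragraph suggests. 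You must first use (i) to push the $s$-dependent ratio to the boundary $\rho=R_i\ge1$, and only then apply (ii) to extract the $s(1-s)$ factor. With that ordering the argument goes through and matches the paper exactly.
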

The proof of Theorem~\ref{th:general-convergence} is given in Section~\ref{sub:contraction}.
It states that in the considered setting, SDCA achieves a linear convergence rate for the dual
objective.
The bounds $\beta_i$ are provided in Section~\ref{sec:applications} below for two particular cases:
Poisson regression and likelihood Hawkes processes.
Equipped with these bounds, we can compare the rate obtained in Theorem~\ref{th:general-convergence}
with already known linear rates for SDCA under the gradient-Lipschitz assumption
\cite{shalev2013stochastic}.
Indeed, we can restrict the domain of all $f_i^*$ to $(-\beta_i, 0)$
on which Proposition~\ref{prop:hyp_dual_hessian_equivalence} states that all $f_i$ are
$L_i / ({\alpha_i^*}^2 R_i^2)$-strongly convex.
Now, following carefully the proof in \cite{shalev2013stochastic} leads to the convergence rate
given in Equation~\eqref{eq:general-convergence-rate} but with
% Thus we can compare the rate obtained in Theorem~\ref{th:general-convergence} with already
% known linear rates for SDCA under the gradient-Lipschitz assumption (see
% \cite{shalev2013stochastic}).
\begin{equation*}
  \sigma_i = \bigg( 1 + \frac{\|x_i\|^2  {\alpha_i^*}^2} {\lambda n L_i}
  R_i^2 \bigg)^{-1}.
\end{equation*}
Since $2 \big(\frac{1}{R} + \log R - 1 \big) (R - 1)^{-2} \geq R^{-2}$ for any $R \geq 1$,
Theorem~\ref{th:general-convergence} provides a faster convergence rate.
The comparative gain depends on the values of $(\|x_i\|^2  {\alpha_i^*}^2) / (\lambda n L_i)$ and
$R_i$ but it increases logarithmically with the value of $R_i$.
Table~\ref{tab:convergence-rates} below compares the explicit values of these linear rates on a
dataset used in our experiments for Poisson regression.

\begin{remark}
\label{rmk:sdca-has-no-primal-path}
Convergence rates for the primal objective are not provided since the primal iterate $w^{(t)}$
typically belongs to $\Pi(X)$ only when it is close enough to the optimum.
This would make most of the values of the primal objective $P(w^{(t)})$ undefined and therefore not
comparable to $P(w^*)$.
\end{remark}

\subsection{Proximal algorithm} % (fold)
\label{sub:proximal_algorithm}

Algorithm~\ref{alg:general-shifted-sdca} maximizes the dual over one coordinate at
Line~\ref{lst:line:general_local_max} whose solution might not be explicit and
requires inner steps to obtain $\alpha_i^{(t)}$.
But, whenever $g$ can be written as
\begin{equation}
  \label{eq:g_ridge}
  g(w) = \tfrac{1}{2} \|w\|^2 + h(w),
\end{equation}
where $h$ is a convex, prox capable and possibly non-differentiable function,
we use the same technique as Prox-SDCA \cite{shalev2014accelerated} with a proximal lower bound
that leads to
\begin{equation*}
  \alpha_i^{(t)} =
      \argmax_{a_i \in \Dfsm}
      - f^*_i (-\alpha_i)
          - \frac{\lambda n}{2} \Big\| w^{(t-1)}
          - (\lambda n)^{-1} (\alpha_i - \alpha_i^{(t-1)}) x_i  \Big\|^2,
\end{equation*}
with
\begin{equation*}
  w^{(t-1)} =
    \prox_{h} \bigg(
      \frac{1}{\lambda n} \sum_{i=1}^n \alpha_i^{(t-1)} x_i -\frac{1}{\lambda} \psi
    \bigg),
\end{equation*}
see Section~\ref{sec:proximal_algorithm_supp} for details.
This leads to a proximal variant described in Algorithm~\ref{alg:prox-shifted-sdca} below, which
is able to handle various regularization techniques and which has the same convergence guarantees as
Algorithm~\ref{alg:general-shifted-sdca} given in Theorem~\ref{th:general-convergence}.
Also, note that assuming that $g$ can be written as~\eqref{eq:g_ridge} with a prox-capable function
$h$ is rather unrestrictive, since one can always add a ridge penalization term in the objective.
\begin{algorithm}[ht]
\caption{Shifted Prox-SDCA}
\begin{algorithmic}[1]
\Require{
  \begin{varwidth}[t]{\linewidth}
    Bounds $\beta_i \in \Dfsm$ such that
    $\forall i \in \{1, \dots, n\},~ \beta_i / \alpha_i^* \geq 1$, \\
    $\alpha^{(0)} \in \Dfsm^n$ dual starting point such that
    $\forall i \in \{1, \dots, n\},~ \beta_i / \alpha_i \geq 1$
 \end{varwidth}}
\State{$v^{(0)} = \frac{1}{\lambda n} \sum_{i=1}^n \alpha_i^{(0)} x_i -\frac{1}{\lambda} \psi$}
\For{$t = 1, 2 \dots T$}
\State{Sample $i$ uniformly at random in $\{1, \ldots, n \}$}\label{lst:line:random_sampling}
\State{Find $\alpha_i$ that maximize
       $- \frac{1}{n} f_i^*(-\alpha_i)
       - \frac{\lambda}{2} \Big\|
           w^{(t-1)} + (\lambda n)^{-1} (\alpha_i - \alpha_i^{(t - 1)}) x_i
       \Big\|^2$}
       \label{lst:line:local_max}
\State{$\alpha_i \leftarrow \min(1, \beta_i / \alpha_i) \alpha_i$}
\label{lst:line:bounding}
\State{$\Delta \alpha_i \leftarrow \alpha_i - \alpha_i^{(t-1)}$}
\State{$\alpha^{(t)} \leftarrow \alpha^{(t - 1)} + \Delta \alpha_i e_i$}
\State{$v^{(t)} \leftarrow v^{(t - 1)} + (\lambda n)^{-1} \Delta \alpha_i x_i$}
\State{$w^{(t)} \leftarrow \prox_h(v^{(t)})$}
\EndFor
\end{algorithmic}
\label{alg:prox-shifted-sdca}
\end{algorithm}

\subsection{Importance sampling} % (fold)
\label{sub:importance_sampling}

Importance sampling consists in adapting the probabilities of choosing a sample $i$ (which is
by default done uniformly at random, see Line~\ref{lst:line:random} from
Algorithm~\ref{alg:general-shifted-sdca}) using the improvement which is expected by sampling it.
% In SDCA case, roughly if we sample index $i$, the bigger the bound $\beta_i$ is, the
% smaller is the expected gain.
% We might then adapt the sampling probabilities to obtain more often indices $i$ associated to
% lower bounds $\beta_i$.
Consider a distribution $\rho$ on $\{ 1, \dots, n \}$ with probabilities
$\{ \rho_1, \dots, \rho_n \}$ such that $\rho_i \geq 0$ for any $i$ and $\sum_{i=1}^n \rho_i = 1$.
The Shifted SDCA and Shifted Prox-SDCA with importance sampling algorithms are simply obtained by
modifying the way $i$ is sampled in Line~\ref{lst:line:random_sampling} of
Algorithms~\ref{alg:general-shifted-sdca} and~\ref{alg:prox-shifted-sdca}: instead of sampling
uniformly at random, we sample using such a distribution~$\rho$.
The optimal sampling probability $\rho$ is obtained in the same way as \cite{zhao2015stochastic}
and it also leads under our $\log$ smoothness assumption to a tighter convergence rate, as
stated in Theorem~\ref{th:convergence-is} below.
\begin{theorem}
\label{th:convergence-is}
Suppose that we known bounds $\beta_i \in \Dfsm$ such that
$R_i = \frac{\beta_i} {\alpha_i^*} \geq 1$ for $i=1,\ldots,n$
and assume that all $f_i$ are $L_i$-$\log$ smooth with differentiable Fenchel conjugates and
$g$ is 1-strongly convex.
Consider $\sigma$ defined by~\eqref{eq:sigma_i_def} and consider the distribution
\begin{equation*}
  \rho_i = \frac{\sigma_i^{-1}}{\sum_{j=1}^{n}\sigma_j^{-1}}
\end{equation*}
for $i \in \{1, \ldots, n \}$.
Then, Algorithm~\ref{alg:general-shifted-sdca} and~\ref{alg:prox-shifted-sdca} where
Line~\ref{lst:line:random_sampling} is replaced by sampling $i \sim \rho$ satisfy
\begin{equation*}
    \E [D(\alpha^*) - D(\alpha^{(t)})]
      \leq \Big( 1 - \frac{\bar{\sigma}}{n} \Big)^t ( D(\alpha^*) - D(\alpha^{(0)}),
\end{equation*}
where $ \bar{\sigma} = \big(\frac{1}{n} \sum_{i=1}^{n} \sigma_i^{-1}\big)^{-1}$.
\end{theorem}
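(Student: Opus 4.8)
The plan is to reuse the one-step contraction already established in the proof of Theorem~\ref{th:general-convergence} (Section~\ref{sub:contraction}) and merely re-optimize the law used to draw the coordinate. The crucial observation is that, once a coordinate $i$ is drawn, Line~\ref{lst:line:general_local_max} (and its proximal counterpart) performs an \emph{exact} maximization over $\alpha_i$, so the resulting per-coordinate improvement is a deterministic function of $\alpha^{(t-1)}$ that does not depend on the sampling distribution. In particular there is no gradient-estimator variance to correct for, which is precisely why the sampling law $\rho$ enters only through the outer expectation and why importance sampling is cleaner here than for SGD.

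Concretely, I would extract from Section~\ref{sub:contraction} the inequality showing that for every $i$,
\[
  \E\big[D(\alpha^{(t)}) - D(\alpha^{(t-1)}) \,\big|\, \mathcal{F}_{t-1},\ i \text{ chosen}\big] \;\ge\; \sigma_i\, u_i,
\]
where $\sigma_i$ is given by~\eqref{eq:sigma_i_def} and the $u_i \ge 0$ are the per-coordinate gap contributions obtained by comparing the exact update against the feasible move $\alpha_i \mapsto \alpha_i^{(t-1)} + s(\alpha_i^* - \alpha_i^{(t-1)})$ and optimizing over $s$; the factor $\sigma_i$ is exactly what results from balancing the first-order gain against the quadratic penalty, the latter being controlled by the $1$-smoothness of $g^*$ (dual to $1$-strong convexity of $g$) and the Bregman-divergence inequality for $f_i^*$ coming from $\log$ smoothness (Proposition~\ref{prop:hyp_dual_hessian_equivalence}). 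By concavity of $D$ these quantities satisfy $\sum_{i=1}^n u_i \ge D(\alpha^*) - D(\alpha^{(t-1)})$. This is the very inequality from which the uniform analysis extracts the factor $\min_i \sigma_i$, since $\tfrac1n\sum_i \sigma_i u_i \ge \tfrac{\min_i \sigma_i}{n}\sum_i u_i$; I would re-derive it verbatim, keeping $\rho$ unspecified.

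Taking the expectation under $i \sim \rho$ then gives, for \emph{any} sampling law,
\[
  \E\big[D(\alpha^{(t)}) - D(\alpha^{(t-1)}) \,\big|\, \mathcal{F}_{t-1}\big] \;\ge\; \sum_{i=1}^n \rho_i\, \sigma_i\, u_i \;\ge\; \Big(\min_{1\le i\le n}\rho_i\sigma_i\Big)\big(D(\alpha^*) - D(\alpha^{(t-1)})\big).
\]
The contraction factor guaranteed by $\rho$ is thus $\min_i \rho_i\sigma_i$, and the remaining task is a worst-case (max-min) problem over the unknown nonnegative weights $u_i$: choose $\rho$ on the simplex to maximize $\min_i \rho_i\sigma_i$. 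By the standard equalization argument, the maximum subject to $\sum_i \rho_i = 1$ is attained when all products $\rho_i\sigma_i$ are equal, which forces $\rho_i \propto \sigma_i^{-1}$, i.e.\ $\rho_i = \sigma_i^{-1}/\sum_j \sigma_j^{-1}$, and then $\rho_i\sigma_i = (\sum_j \sigma_j^{-1})^{-1} = \bar\sigma/n$ with $\bar\sigma = (\tfrac1n\sum_j\sigma_j^{-1})^{-1}$.

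With this choice the one-step inequality reads $\E[D(\alpha^*) - D(\alpha^{(t)}) \mid \mathcal{F}_{t-1}] \le (1 - \bar\sigma/n)\,(D(\alpha^*) - D(\alpha^{(t-1)}))$; taking total expectations and iterating over $t$, together with $0 < \bar\sigma/n < 1$, yields the claimed rate. I expect the main obstacle to be the bookkeeping needed to confirm that the per-coordinate bound of Section~\ref{sub:contraction} is genuinely sampling-independent and separates as $\sigma_i u_i$ with $\sum_i u_i \ge D(\alpha^*)-D(\alpha^{(t-1)})$; once this is secured, identifying $\rho_i \propto \sigma_i^{-1}$ as the max-min optimal law and the resulting harmonic mean $\bar\sigma$ is routine.
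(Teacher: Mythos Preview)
Your approach is essentially the paper's: both extract from Lemma~\ref{lemma:weighted_dual_gain} a per-coordinate lower bound on the dual increment, then choose the sampling law so that the coordinate-wise slack constants equalize, yielding the harmonic mean $\bar\sigma$. The only substantive difference is packaging: the paper keeps the free parameters $s_i$ of Lemma~\ref{lemma:weighted_dual_gain}, sets $s_i=\bar\sigma/(n\rho_i)$, and then solves $\max\bar\sigma$ subject to $s_i\le\sigma_i$ and $\sum_i\rho_i=1$ (via~\cite{zhao2015stochastic}); you instead fix $s_i=\sigma_i$ upfront and equalize $\rho_i\sigma_i$ directly.

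One point does need fixing. Your claim that the per-coordinate contributions $u_i$ are nonnegative is not justified. Unwinding the per-$i$ inequality in Section~\ref{sub:contraction} gives
\[
u_i \;=\; \tfrac{1}{n}\Big( f_i^*(-\alpha_i^{(t-1)})-f_i^*(-\alpha_i^*) - (\alpha_i^*-\alpha_i^{(t-1)})\,x_i^\top w^{(t-1)} \Big),
\]
and only the \emph{sum} $\sum_i u_i$ is shown to dominate $D(\alpha^*)-D(\alpha^{(t-1)})$, after applying convexity of $g^*$ to the aggregated linear term; individual $u_i$'s can be negative. This breaks your ``for any $\rho$'' inequality $\sum_i\rho_i\sigma_i u_i\ge(\min_i\rho_i\sigma_i)\sum_i u_i$ and hence the max--min optimality argument as written. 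Fortunately it does not break the theorem: for the specific choice $\rho_i\propto\sigma_i^{-1}$ the products $\rho_i\sigma_i$ are all equal to $\bar\sigma/n$, so $\sum_i\rho_i\sigma_i u_i=(\bar\sigma/n)\sum_i u_i$ holds with equality regardless of signs, and the stated rate follows. Either note this explicitly, or mirror the paper and work with the weighted sum in Lemma~\ref{lemma:weighted_dual_gain} directly, which sidesteps the sign issue entirely.
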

The proof is given in Section~\ref{sub:theorem_th:convergence-is}.
This convergence rate is stronger than the previous one from Theorem~\ref{th:general-convergence}
since $(\frac{1}{n} \sum_{i=1}^{n} \sigma_i^{-1})^{-1} \geq \min_i \sigma_i$.
Table~\ref{tab:convergence-rates} below compares the explicit values of these linear rates on a
dataset used in our experiments for Poisson regression (facebook dataset).
We observe that the $\log$ smooth rate with importance sampling is orders of magnitude better
than the one obtained with the standard theory for SDCA which exploits only the
$L_i / ({\alpha_i^*}^2 R_i^2)$ strong convexity of the functions $f_i^*$.
\begin{table}[htbp]
  \centering
  \begin{tabular}{cccc}
    \toprule
    strongly convex & strongly convex with & $\log$ smooth &  $\log$ smooth with \\
            &  importance sampling &      & importance sampling \\
    \midrule
    $(0.9999)^t$ & $(0.9969)^t$ & $(0.9984)^t$ &  {$(\boldsymbol{0.9679})^t$} \\
    \bottomrule
  \end{tabular}
  \caption{Theoretical convergence rates obtained on the facebook dataset
  (see Section~\ref{sub:poisson_regression_expe}) in four different settings: strongly convex
  (which is the rate obtained when all functions $f_i$ are considered
  $L_i / ({\alpha_i^*}^2 R_i^2)$-strongly convex) with and without
  importance sampling \cite{shalev2013stochastic,zhao2015stochastic}
  and the rate obtained in the setting considered in the chapter, with and
  without importance sampling.
  In this experiment, the maximum value for $R_i$ is 9062 and its average value is 308.
  As expected, the best rate is obtained by combining the $\log$-smoothness property with
  importance sampling.}
  \label{tab:convergence-rates}
\end{table}

\section{Applications to Poisson regression and Hawkes processes} % (fold)
\label{sec:applications}

In this Section we describe two important models that fit into the setting of this chapter.
We precisely formulate them as in Equation~\eqref{eq:primal} and give the explicit value of bounds
$\beta_i$ such as $\alpha_i^* \leq \beta_i$, where $\alpha^*$ is the solution to the dual problem
\eqref{eq:general_dual_problem}.

\subsection{Linear Poisson regression} % (fold)
\label{sub:linear-poisson-regression}

Poisson regression is widely used to model count data, namely when, in the dataset, each
observation $x_i \in \R^d$ is associated an integer output $y_i \in \N$ for $i = 1, \dots, n$.
It aims to find a vector $w \in \R^d$ such that for a given function
$\phi : \mathcal{D}_\phi \subset \R \rightarrow (0, +\infty)^+$,
$y_i$ is the realization of a Poisson random variable of intensity $\phi(w^\top x_i)$.
A convenient choice is to use $\exp$ for $\phi$ as it always guarantees that $\phi(w^\top x_i) > 0$.
However, using the exponential function assumes that the covariates have a multiplicative effect
that often cannot be justified.
The tougher problem of linear Poisson regression, where $\phi(t) = t$ and
$\mathcal{D}_\phi$ is the polytope $\Pi(X)$, appears to model additive effects.
For example, this applies in image reconstruction.
The original image is retrieved from photons counts $y_i$ distributed as a Poisson distribution with
intensity $w^\top x_i$, that are received while observing the image with different detectors
represented by the vectors $x_i \in \R^d$.
This application has been extensively studied in the literature, see
\cite{harmany2012spiral,bauschke2016descent,tran2015composite} and \cite{bertero2009image} for a
review with a hundred references.
Linear Poisson regression is also used in various fields such as survival analysis with additive
effects \cite{boshuizen2010fitting} and web-marketing \cite{chen2009large} where the intensity
corresponds to an intensity of clicks on banners in web-marketing.
To formalize, we consider a training dataset $(x_1, y_1), \ldots, (x_{n_0}, y_{n_0})$ with
$x_i \in \R^d$ and $y_i \in \N$ and assume without loss of generality that $y_i > 0$ for
$i \in \{ 1, \ldots, n \}$ while $y_i = 0$ for $i \in \{ n + 1, \ldots, n_0 \}$ where
$n = \# \{ i : y_i > 0 \} \leq n_0$
(this simply means that we put first the samples corresponding to a label $y_i > 0$).
The negative log-likelihood of this model with a penalization function $g$ can be written as
\begin{equation*}
  P_0(w) = \frac{1}{n_0} \sum_{i=1}^{n_0} ( w^\top x_i - y_i \log(w^\top x_i))
  + \lambda_0 g(w)
\end{equation*}
where $\lambda_0 > 0$ corresponds to the level of penalization, with the constraint that
$w^\top x_i$ for $i=1, \ldots, n$.
This corresponds to Equation~\eqref{eq:primal} with $f_i(w) = -y_i \log(x_i^\top w)$ for
$i=1, \ldots, n$, which are $y_i$-$\log$ smooth functions, and with
\begin{equation*}
\psi = \frac{1}{n}\sum_{i=1}^{n_0} x_i \quad \text{and} \quad
\lambda = \frac{n_0}{n} \lambda_0.
\end{equation*}
Note that the zero labeled observations can be safely removed from the sum and are fully encompassed
in $\psi$.
The algorithms and results proposed in Section~\ref{sec:algorithm} can therefore be applied for
this model.

\subsection{Hawkes processes}

Hawkes processes are used to study cross causality that might occur in one or several events
series.
First, they were introduced to study earthquake propagation, the network across which the
aftershocks propagate can be recovered given all tremors timestamps \cite{ogata1999seismicity}.
Then, they have been used in high frequency finance to describe market reactions to different
types of orders  \cite{bacry2015hawkes}.
In the recent years Hawkes processes have found many new applications including crime
prediction \cite{mohler2013modeling} or social network information
propagation \cite{lukasik2016hawkes}.
A Hawkes process \cite{hawkes1974cluster} is a multivariate point-process:
it models timestamps $\{t_k^i\}_{i \geq 1}$ of nodes
$i=1, \ldots, I$ using a multivariate counting process with a particular auto-regressive
structure in its intensity.
More precisely, we say that a multivariate counting process
$N_t = [N_t^1, \ldots, N_t^I]$ where
$N_t^i = \sum_{k \geq 1} \ind{t_k^i \leq t}$ for $t \geq 0$ is a Hawkes process if the intensity
of $N^i$ has the following structure:
\begin{equation*}
\lambda^i(t) = \mu_i + \sum_{j=1}^I \int \phi_{ij}(t - s) \dif N^j(s)
             = \mu_i + \sum_{j=1}^I \sum_{k \; : \; t_k^j < t} \phi_{ij}(t - t_k^j).
\end{equation*}
The $\mu_i \geq 0$ are called \emph{baselines} intensities, and correspond to the exogenous
intensity of events from node $i$, and the functions $ \phi_{ij}$ for $1 \leq i, j \leq I$ are
called \emph{kernels}.
They quantify the influence of past events from node $j$ on the intensity of events from node $i$.
The main parametric model for the kernels is the so-called \emph{exponential} kernel, in which
we consider
\begin{equation}
  \label{eq:sum_of_exponentials_kernel}
  \phi^{ij}(t) = \sum_{u=1}^U a_u^{ij} b_u \exp (- b_u t)
\end{equation}
with $b_u > 0$.
In this model the matrix $A = [\sum_{u=1}^U a_u^{ij}]_{1 \leq i, j \leq d}$ is understood as an
\emph{adjacency matrix}, since entry $A_{i, j}$ quantifies the impact of the activity of node $j$
on the activity of node $i$, while $b_u > 0$ are memory parameters.
We stack these parameters into a vector $\theta$ containing the baselines $\mu_i$ and the self and
cross-excitation parameters $a_u^{ij}$.
Note that in this model the memory parameters $b_u$ are supposed to be given.
The associated goodness-of-fit is the negative log-likelihood, which is given by the general
theory of point processes (see \cite{daley2007introduction}) as
\begin{equation*}
-\ell(\theta) = - \sum_{i=1}^I \ell_i(\theta),
\quad \text{with} \quad
- \ell_i(\theta) = \int_0^T \lambda_\theta^i(t) dt - \int_0^T \log (\lambda_\theta^i(t))\dif N^i(t).
\end{equation*}
Let us define the following weights for $i, j = 1, \dots I$ and $u = 1, \dots, U$,
\begin{equation}
  \label{eq:hawkes_precom_weights}
  g^{j}_u(t) = \sum_{k \; : \; t_k^j < t} b_u e^{-b_u (t - t_k^j)},
  \quad g^{ij}_{u, k} = g^{j}_u(t_k^i) \quad \text{and} \quad
G^{j}_u = \int_0^T g^{j}_u(t) dt
\end{equation}
that can be computed efficiently for exponential kernels thanks to recurrence formulas (the
complexity is linear with respect to the number of events of each node).
Using the parametrization of the kernels from Equation~\eqref{eq:sum_of_exponentials_kernel} we can
rewrite each term of the negative log-likelihood as
\begin{equation*}
    - \ell_i(\mu_i, a^i)
    = \sum_{i=1}^I \bigg[  \mu^i T + \sum_{j=1}^I \sum_{u=1}^U a^{ij}_u G_u^j
         - \sum_{k=1}^{n_i} \log \Big(
              \mu^i + \sum_{j=1}^I \sum_{u=1}^U a^{ij}_u g^{ij}_{u, k}
            \Big) \bigg].
\end{equation*}
To rewrite $\ell_i$ in a vectorial form we define
$n_i$ as the number of events of node $i$ and
the following vectors for $i=1, \dots, I$:
\begin{equation*}
w^i =
\begin{bmatrix}
  \mu^i & a^{i,1}_1 & \cdots & a^{i,1}_U & \cdots & a^{i, I}_1 &
  \cdots & a^{i, I}_U
\end{bmatrix}^\top,
\end{equation*}
that are the model weights involved in $\ell_i$, and
\begin{equation*}
% \label{eq:hawkes-psi}
\psi^i = \frac{1}{n_i}
\begin{bmatrix}
   T & G^1_1 & \cdots G^1_U & \cdots & G^I_1 & \cdots & G^I_U \\
\end{bmatrix}^\top,
\end{equation*}
which correspond to the vector involved in the linear part of the primal objective
\eqref{eq:primal} and finally
\begin{equation*}
  % \label{eq:hawkes_feature_definition}
  x^i_k =
  \begin{bmatrix}
    1 & g^{i, 1}_{1, k} & \cdots & g^{i, 1}_{U, k} & \cdots &
    g^{i, I}_{1, k} & \cdots & g^{i, I}_{U, k} \\
  \end{bmatrix}^\top,
\end{equation*}
for $k = 1, \dots, n_i$ which contains all the timestamps data computed in the weights computed
in Equation~\eqref{eq:hawkes_precom_weights}.
With these notations the negative log-likelihood for node $i$ can be written as
\begin{equation*}
  -\ell(w) = - \sum_{i=1}^I \ell_i(w^i) \quad \text{with}
  \quad - \tfrac{1}{n_i} \ell_i(w^i) = (w^i)^\top \psi^i - \frac{1}{n_i} \sum_{k=1}^{n_i}
  \log ( (w^i)^\top x_k^i).
\end{equation*}
First, it shows that the negative log-likelihood can be separated into $I$ independent sub-problems
with goodness-of-fit $-\ell_i(w^i)$ that corresponds to the intensity of node $i$ with the
weights $x_{i, k}$ carrying data from the events of the other nodes $j$.
Each subproblem is a particular case of the primal objective \eqref{eq:primal},
where all the labels $y_i$ are equal to $1$.
As a consequence, we can use the algorithms and results from Section~\ref{sec:algorithm} to train
penalized multivariate Hawkes processes very efficiently.

\subsection{Closed form solution and bounds on dual variables}

In this Section with provide the explicit solution to Line~\ref{lst:line:local_max} of
Algorithm~\ref{alg:prox-shifted-sdca} when the objective corresponds to the linear Poisson
regression or the Hawkes process goodness-of-fit.
In Proposition~\ref{prop:closed_form_solution_log_losses} below we provide the
closed-form solution of the local maximization step corresponding to Line~\ref{lst:line:local_max}
of Algorithm~\ref{alg:prox-shifted-sdca}.

\begin{proposition}
\label{prop:closed_form_solution_log_losses}
  For Poisson regression and Hawkes processes, Line~\ref{lst:line:local_max} of
  Algorithm~\ref{alg:prox-shifted-sdca} has a closed form solution, namely
  \begin{equation*}
  \alpha^t_i = \frac{1}{2} \Bigg(
           \sqrt{
              \Big(
                  \alpha_i^{(t - 1)}
                  - \frac{\lambda n}{\|x_i\|^2} x_i^\top w^{(t-1)} \Big)^2
              + 4 \lambda n \frac{y_i}{\|x_i\|^2}
            }
            + \alpha_i^{(t - 1)}
            - \frac{\lambda n}{\|x_i\|^2} x_i^\top w^{(t-1)} \Bigg).
  \end{equation*}
\end{proposition}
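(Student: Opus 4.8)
The plan is to reduce the one-dimensional maximization at Line~\ref{lst:line:local_max} of Algorithm~\ref{alg:prox-shifted-sdca} to solving a single quadratic equation. The first step is to compute the Fenchel conjugate of $f_i$. For both models the loss has the common form $f_i(u) = -y_i \log u$ on $(0,+\infty)$ (with $y_i = 1$ in the Hawkes case), so evaluating $f_i^*(v) = \sup_{u>0}\{uv + y_i \log u\}$ directly—the supremum being attained at $u = -y_i/v$—gives, for $v < 0$,
\begin{equation*}
  f_i^*(v) = -y_i - y_i \log(-v) + y_i \log y_i .
\end{equation*}
In particular, for $\alpha_i > 0$ (that is $-\alpha_i \in \Dfs$) we get $-f_i^*(-\alpha_i) = y_i \log \alpha_i$ up to an additive constant independent of $\alpha_i$.

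Next I would substitute this into the objective maximized at Line~\ref{lst:line:local_max} and expand the squared norm. Writing $a = \alpha_i$, $a_0 = \alpha_i^{(t-1)}$ and $w = w^{(t-1)}$, and discarding every term independent of $a$, the objective reduces to
\begin{equation*}
  \Psi(a) = \frac{y_i}{n}\log a - \frac{1}{n}(a - a_0)\, x_i^\top w - \frac{\|x_i\|^2}{2\lambda n^2}(a - a_0)^2 ,
\end{equation*}
which is strictly concave on $(0,+\infty)$ since $\log$ is concave and the quadratic term has a negative leading coefficient. Hence it admits a unique maximizer, characterized by the stationarity condition $\Psi'(a) = 0$. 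Writing out $\Psi'(a) = \tfrac{y_i}{na} - \tfrac{1}{n} x_i^\top w - \tfrac{\|x_i\|^2}{\lambda n^2}(a - a_0)$, setting it to zero, and clearing the denominator by multiplying through by $\lambda n^2 a / \|x_i\|^2$ turns this condition into the quadratic equation
\begin{equation*}
  a^2 - \Big(a_0 - \tfrac{\lambda n}{\|x_i\|^2}\, x_i^\top w\Big)\, a - \tfrac{\lambda n\, y_i}{\|x_i\|^2} = 0 ,
\end{equation*}
whose roots are given by the quadratic formula; the claimed expression is exactly its larger root.

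The only point that genuinely requires justification—and the step I would treat most carefully—is the selection of the correct root together with the verification that it lies in the domain $\Dfsm = (0,+\infty)$, so that the box constraint $\alpha_i \in \Dfsm$ is inactive and the interior critical point is the true maximizer. Since $y_i > 0$ and $\lambda > 0$, the constant term $-\lambda n y_i / \|x_i\|^2$ is strictly negative, so the product of the two roots is negative and exactly one root is positive; this is precisely the root carrying the $+$ sign in front of the square root, namely the expression in the statement. Strict concavity of $\Psi$ on $(0,+\infty)$ then guarantees that this positive critical point is the global maximizer, which establishes the formula for both Poisson regression and Hawkes processes. The remaining manipulations are routine algebraic simplifications.
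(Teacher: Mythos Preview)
Your proposal is correct and follows essentially the same approach as the paper: compute the Fenchel conjugate, differentiate the resulting concave one-dimensional objective, and solve the ensuing quadratic, selecting the unique positive root. If anything, your argument is slightly more thorough than the paper's, which simply asserts that the second-order equation ``has a unique positive solution'' without explicitly invoking the sign of the product of the roots.
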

This closed-form expression allows to derive a numerically very efficient training algorithm, as
illustrated in Section~\ref{sec:experiments} below.
For these two use cases, the dual loss is given by
$f_i^*(- \alpha_i) = - y_i -y_i \log (\frac{\alpha_i}{y_i})$ for any $\alpha_i > 0$
(with $y_i = 1$ for the Hawkes processes).
For this specific dual loss, we can provide also upper bounds $\beta_i$ for
all optimal dual variables $\alpha_i^*$, as stated in the next Proposition.
\begin{proposition}
\label{prop:dual_large_bound}
For Poisson regression and Hawkes processes, if $g(w) = \tfrac{1}{2} \|w\|^2$
and if ${x_i}^\top x_j \geq 0$ for all $1 \leq i, j \leq n$,
we have the following upper bounds on the dual variables at the optimum\textup:
\begin{equation*}
    \alpha_i^* \leq \beta_i
    \quad  \text{where} \quad
    \beta_i = \frac{1}{2 \|x_i\|^2} \bigg( n \psi^\top x_i
            + \sqrt{ ( n \psi^\top x_i )^2 +
                     4 \lambda n y_i \|x_i\|^2 }\bigg)
\end{equation*}
for any $i = 1, \ldots, n$.
\end{proposition}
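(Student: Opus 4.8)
The plan is to combine the two Karush--Kuhn--Tucker identities from Proposition~\ref{prop:strong-duality}, specialized to the present setting, into a single scalar relation for each coordinate $i$, and then to exploit the sign hypotheses to turn that relation into a quadratic inequality in $\alpha_i^*$ whose positive root is exactly $\beta_i$.

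First I would specialize the two KKT conditions. Since $g(w) = \tfrac12\|w\|^2$ we have $g^* = \tfrac12\|\cdot\|^2$ and $\nabla g^* = \mathrm{id}$, so the second condition reads $w^* = \tfrac{1}{\lambda n}\sum_{j=1}^n \alpha_j^* x_j - \tfrac{1}{\lambda}\psi$. For the first condition, with $f_i(t) = -y_i\log t$ one has $f_i'(t) = -y_i/t$, so the relation $\alpha_i^* = -f_i'({w^*}^\top x_i)$ becomes $\alpha_i^*\,{w^*}^\top x_i = y_i$, i.e. ${w^*}^\top x_i = y_i/\alpha_i^*$. Note that $w^* \in \Pi(X)$ forces ${w^*}^\top x_i > 0$, and since $y_i > 0$ (with $y_i = 1$ for Hawkes) this shows $\alpha_j^* > 0$ for every $j$, a fact I will need below.

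Next I would take the inner product of the expression for $w^*$ with $x_i$ and substitute ${w^*}^\top x_i = y_i/\alpha_i^*$, obtaining the identity $\tfrac{y_i}{\alpha_i^*} = \tfrac{1}{\lambda n}\sum_{j=1}^n \alpha_j^* (x_j^\top x_i) - \tfrac{1}{\lambda}\psi^\top x_i$, which after multiplying through by $\lambda\alpha_i^* > 0$ reads $\lambda y_i = \tfrac{\alpha_i^*}{n}\sum_{j=1}^n \alpha_j^* (x_j^\top x_i) - \alpha_i^*\,\psi^\top x_i$. Here the hypotheses $\alpha_j^* > 0$ and $x_j^\top x_i \geq 0$ enter: every term of the sum is nonnegative, so discarding all $j \neq i$ terms gives the lower bound $\sum_{j=1}^n \alpha_j^*(x_j^\top x_i) \geq \alpha_i^*\,\|x_i\|^2$. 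Combined with $\alpha_i^* > 0$ this yields $\lambda y_i \geq \tfrac{1}{n}(\alpha_i^*)^2\|x_i\|^2 - \alpha_i^*\,\psi^\top x_i$, that is, the quadratic inequality $\|x_i\|^2 (\alpha_i^*)^2 - n\,\psi^\top x_i\,\alpha_i^* - \lambda n y_i \leq 0$.

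Finally, since this quadratic in $\alpha_i^*$ has positive leading coefficient $\|x_i\|^2$ and strictly negative constant term $-\lambda n y_i$, it has one negative and one positive root and is nonpositive precisely between them; as $\alpha_i^* > 0$ it must lie below the positive root, which the quadratic formula identifies as exactly $\beta_i$. The main point to get right is the direction of the inequality when dropping the cross terms: this is the only place the assumption $x_j^\top x_i \geq 0$ is used, and it crucially relies on the positivity of all dual variables $\alpha_j^*$ established from the first KKT condition. The rest is the routine solution of the quadratic.
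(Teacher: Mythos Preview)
Your proof is correct and follows essentially the same route as the paper: specialize the two KKT identities for $g=\tfrac12\|\cdot\|^2$ and $f_i=-y_i\log$, combine them into a scalar relation at coordinate $i$, drop the nonnegative cross terms $\alpha_j^*\,x_j^\top x_i$ for $j\neq i$, and solve the resulting quadratic in $\alpha_i^*$. The only difference is cosmetic: the paper keeps the relation in the form $\alpha_i^*=\lambda n y_i\big/\big(\sum_j\alpha_j^*x_j^\top x_i-n\psi^\top x_i\big)$ and shrinks the denominator, whereas you first clear denominators by multiplying by $\lambda\alpha_i^*>0$; your ordering has the minor advantage of never dividing by a quantity whose sign has not been checked.
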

The proofs of Propositions~\ref{prop:closed_form_solution_log_losses}
and~\ref{prop:dual_large_bound} are provided in Section~\ref{sub:closed_for_solution_for_log_losses}.
Note that the inner product assumption ${x_i}^\top x_j \geq 0$ from Proposition
\ref{prop:dual_large_bound} is mild: it is always met for the Hawkes process with kernels given
by~\eqref{eq:sum_of_exponentials_kernel} and it it met for Poisson regression whenever one applies
for instance a min-max scaling on the features matrix.

% \begin{remark}
%   The order of $\beta_i$ is $n \tfrac{\psi^\top x_i}{\|x_i\|^2}$ as we can expect
%   $n \psi^\top x_i \gg 2 \sqrt{\lambda n y_i \|x_i\|^2}$.
%   This bound is so large because it is very complicated to quantify the positive quantity
%   $\sum_{j\neq i} \alpha_j^* x_i^\top x_j$.
%   These $n-1$ positive terms are then removed from inequality while they should be retrieved from
%   $n \psi^\top x_i$.
% \end{remark}
%
\begin{remark}
  \label{rmk:closed-form-bounded}
  The closed form solution from Proposition~\ref{prop:closed_form_solution_log_losses} is always
  lower than the generic bound $\beta_i$, as explained in
  Section~\ref{sec:proof_of_remark_rmk:closed-form-bounded}.
  Hence, we actually do not need to manually bound $\alpha_i^{(t)}$ at
  line~\ref{lst:line:general-bounding}
  of Algorithm~\ref{alg:general-shifted-sdca} in this particular case.
\end{remark}

% subsection specification_for_log_losses (end)

% subsection hawkes_processes (end)

% section applications (end)

\section{Experiments} % (fold)
\label{sec:experiments}

To evaluate efficiently Shifted SDCA we have compared it with other optimization algorithms that
can handle the primal problem~\eqref{eq:primal} nicely, without the gradient-Lipschitz assumptions.
We have discarded the modified proximal gradient method
from \cite{tran2015composite} since most of the time it was diverging while computing the initial
step with the Barzilai-Borwein method on the considered examples.
We consider the following algorithms.

\paragraph{NoLips.} % (fold)
This is a first order batch algorithm that relies on relative-smoothness \cite{lu2018relatively}
instead of the gradient Lipschitz assumption.
Its application to linear Poisson regression has been detailed in \cite{bauschke2016descent} and its
analysis provides convergence guarantees with a sublinear convergence rate in $\mathcal{O}(1/n)$.
However, this method is by design limited to solutions with positive entries (namely
$w^* \in [0, \infty)^d$) and provides guarantees only in this case.
Its theoretical step-size decreases linearly with $1/n$ and is too small in practice.
Hence, we have tuned the step-size to get the best objective after 300 iterations.

\paragraph{SVRG.} % (fold)

This is a stochastic gradient descent algorithm with variance reduction introduced
in \cite{johnson2013accelerating, xiao2014proximal}.
We used a variant introduced in \cite{tan2016barzilai}, which uses Barzilai-Borwein in order to
adapt the step-size, since gradient-Lipschitz constants are unavailable in the considered setting.
We consider this version of variance reduction, since alternatives such as
SAGA\cite{defazio2014saga} and SAG \cite{schmidt2013minimizing} do not propose variants with
Barzilai-Borwein type of step-size selection.

\paragraph{L-BFGS-B.} % (fold)

L-BFGS-B is a limited-memory quasi-Newton algorithm \cite{nocedal1980updating,
nocedal2006nonlinear}.
It relies on an estimation of the inverse of the Hessian based on gradients differences.
This technique allows L-BFGS-B to consider the curvature information leading to faster convergence
than other batch first order algorithms such as ISTA and FISTA \cite{beck2009fast}.

\paragraph{Newton algorithm.} % (fold)

This is the standard second-order Newton algorithm which computes at each iteration the hessian of
the objective to solve a linear system with it.
In our experiments, the considered objectives are both $\log$-smooth and self-concordant
\cite{nesterov2013introductory}.
The self-concordant property bounds the third order derivative by the second order derivative,
giving explicit control of the second order Taylor expansion \cite{bach2010self}.
This ensures supra-linear convergence guarantees and keeps all iterates in the open
polytope~\eqref{eq:feasible_polytope} if the starting point is in it \cite{nesterov1994interior}.
However, the computational cost of the hessian inversion makes this algorithm scale very poorly with
the number of dimensions $d$ (the size of the vectors $x_i$).

\paragraph{SDCA.}

This is the Shifted-SDCA algorithm, see Algorithm~\ref{alg:prox-shifted-sdca}, without importance
sampling.
Indeed, the bounds given in Proposition~\ref{prop:dual_large_bound} are not tight enough to improve
convergence when used for importance sampling in the practical situations considered in this
Section (despite the fact that the rates are theoretically better).
A similar behavior was observed in~\cite{priol2017adaptive}.

\medskip
SVRG and L-BFGS-B are almost always diverging in these experiments just like in the simple
example considered in Figure~\ref{fig:poisson_toy_example}.
Hence, the problems are tuned to avoid any violation of the open polytope
constraint~\eqref{eq:feasible_polytope}, and to output comparable results between algorithms.
Namely, to ensure that $w^\top x_i> 0$ for any iterate $w$, we scale the vectors $x_i$ so that
they contain only non-negative entries, and the iterates of SVRG and L-BFGS-B are projected
onto $[0, +\infty)^{d}$.
This highlights two first drawbacks of these algorithms: they cannot deal with a generic feature
matrix and their solutions contain only non-negative coefficients.
For each run, the simply take $\lambda = \overline{x} / n$ where $
{\overline{x} = \tfrac{1}{n} \sum_{i=1}^n\|x_i\|^2}$.
This simple choice seemed relevant for all the considered problems.

\subsection{Poisson regression} % (fold)
\label{sub:poisson_regression_expe}

For Poisson regression we have processed our feature matrices to obtain coefficients between 0
and~1.
Numerical features are transformed with a min-max scaler and categorical features are one hot
encoded.
We run our experiments on six datasets found on UCI dataset repository
\cite{Lichman:2013} and Kaggle\footnote{\url{https://www.kaggle.com/datasets}} (see
Table~\ref{tab:poisson-datasets} for more details).
\begin{table}
  \caption{Poisson datasets details.}
  \label{tab:poisson-datasets}
  \centering
  \begin{tabular}{c|cccccc}
    \toprule
    dataset &
    wine \footnote{ \cite{cortez2009modeling}
    \url{https://archive.ics.uci.edu/ml/datasets/wine+quality}} &
    facebook \footnote{\cite{moro2016predicting}
    \url{https://archive.ics.uci.edu/ml/datasets/Facebook+metrics}} &
    vegas \footnote{\cite{moro2017stripping}
    \url{https://archive.ics.uci.edu/ml/datasets/Las+Vegas+Strip}} &
    news \footnote{\cite{fernandes2015proactive}
    \url{https://archive.ics.uci.edu/ml/datasets/online+news+popularity}}&
    property \footnote{
    \url{https://www.kaggle.com/c/liberty-mutual-group-property-inspection-prediction}} &
    simulated \footnote{The features have been generated with a folded normal distribution and the
    generating vector $w$ has 30 non zero coefficients sampled with a normal distribution.} \\
    \midrule
    \# lines & 4898 & 500 & 2215 & 504 & 50099 & 100000 \\
    \# features & 11 & 41 & 102 & 160 & 194 & 100 \\
    \bottomrule
  \end{tabular}
\end{table}
These datasets are used to predict a number of interactions for a social post (news
and facebook), the rating of a wine or a hotel (wine and vegas) or the number of hazards
occurring in a property (property).
The last one comes from simulated data which follows a Poisson regression.
In Figure~\ref{fig:poisson-real-data} we present the convergence speed of the five algorithms.
As our algorithms follow quite different schemes, we measure this speed regarding to the
computational time.
In all runs, NoLips, SVRG and L-BFGS-B cannot reach the optimal solution as the problem minimizer
contains negative values.
This is illustrated in detail in Figure~\ref{fig:poisson-vegas-stem}
for vegas dataset where it appears that all solvers obtain similar results for the positive values
of $w^*$ but only Newton and SDCA algorithms are able to estimate the negatives values of
$w^*$.
As expected, the Newton algorithm becomes very slow as the number of features $d$ increases.
SDCA is the only first order solver that reaches the optimal solution.
It combines the best of both world, the scalability of a first order solver and the ability to reach
solutions with negative entries.

\begin{figure}
\centering
\includegraphics[width=0.9\textwidth]{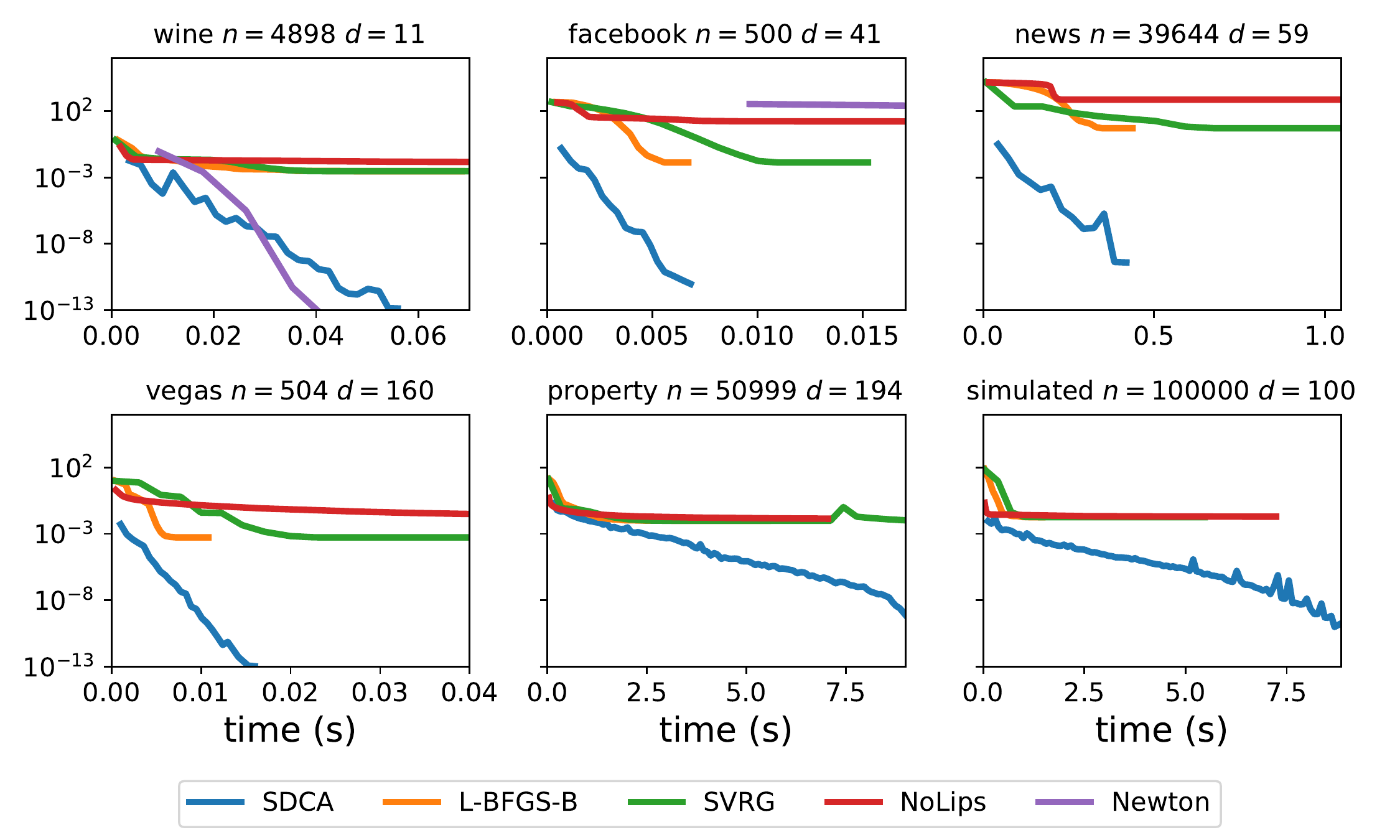}
\caption{Convergence over time of five algorithms SDCA, SVRG, NoLips, L-BFGS-B and Newton on~6
         datasets of Poisson regression.
         SDCA combines the best of both worlds: speed and scalability of SVRG and L-BFGS-B with the
         precision of Newton's solution.}
\label{fig:poisson-real-data}
\end{figure}

\begin{figure}
\centering
\includegraphics[width=0.8\textwidth]{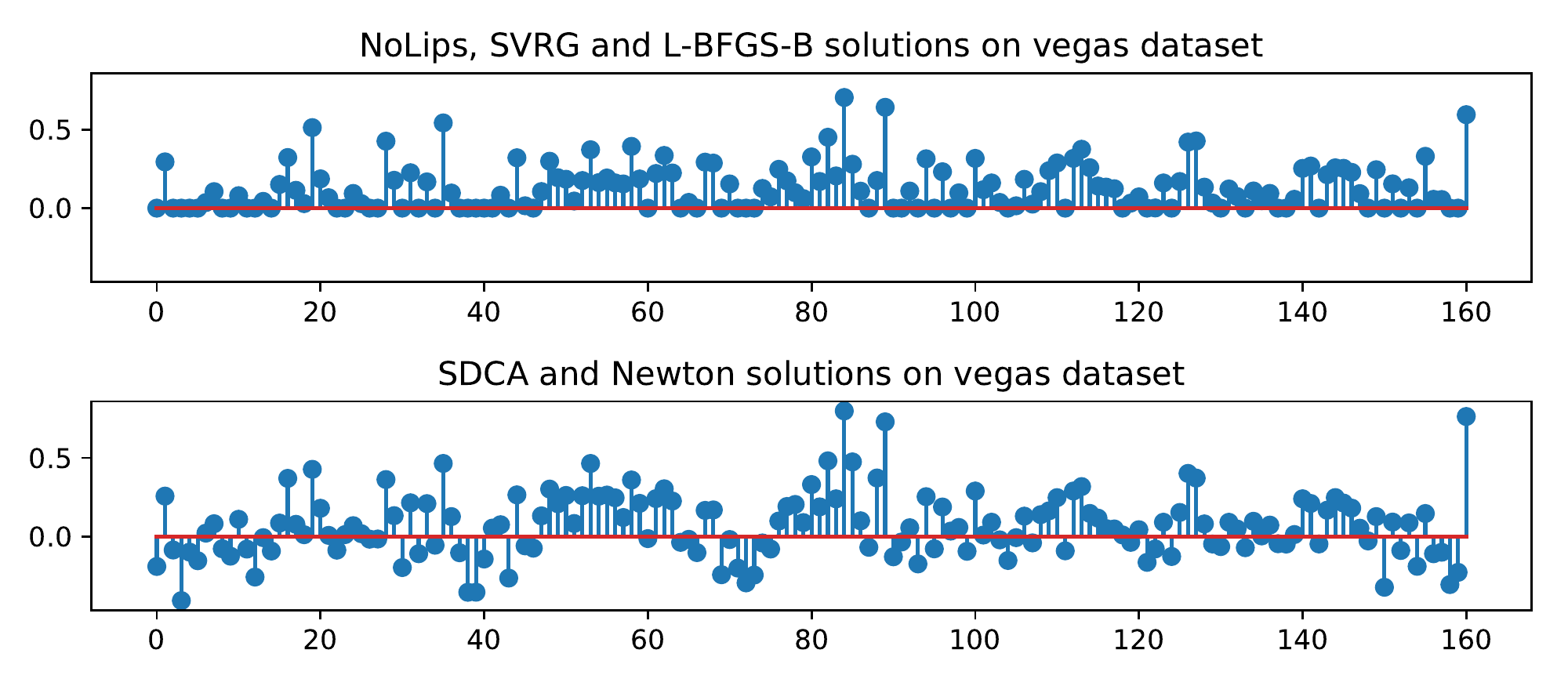}
\caption{Estimated minimizers $w^*$ on the vegas dataset (160 features).
The positive entries are roughly similarly recovered by all solvers but the negative entries are
only retrieved by SDCA and Newton algorithms.
% For example this allows the intercept (last index on the right) to be higher as it can be
% compensated by negative features.
}
\label{fig:poisson-vegas-stem}
\end{figure}

% subsection poisson_regression (end)

\subsection{Hawkes processes} % (fold)
\label{sub:hawkes_processes_exp}

If the adjacency matrix $A$ is forced to be entrywise positive, then no event type can have an
inhibitive effect on another.
% Neglecting negative effects in Hawkes processes would mean that we cannot discover any
% inhibitive effect of one event type over another.
%
This ability to exhibit inhibitive effect has direct implications on real life datasets especially
in finance where these effects are common
\cite{bacry2014estimation,bacry2015hawkes,rambaldi2017role}.
In Figure~\ref{fig:hawkes-bund} we present the aggregated influence of the kernels obtained after
training a Hawkes process on a finance dataset exploring market
microstructure \cite{bacry2015hawkes}.
While L-BFGS-B (or SVRG, or NoLips) recovers only excitation in the adjacency matrix,
SDCA also retrieves inhibition that one event type might have on another.
\begin{figure}
\centering
\includegraphics[width=0.7\textwidth]{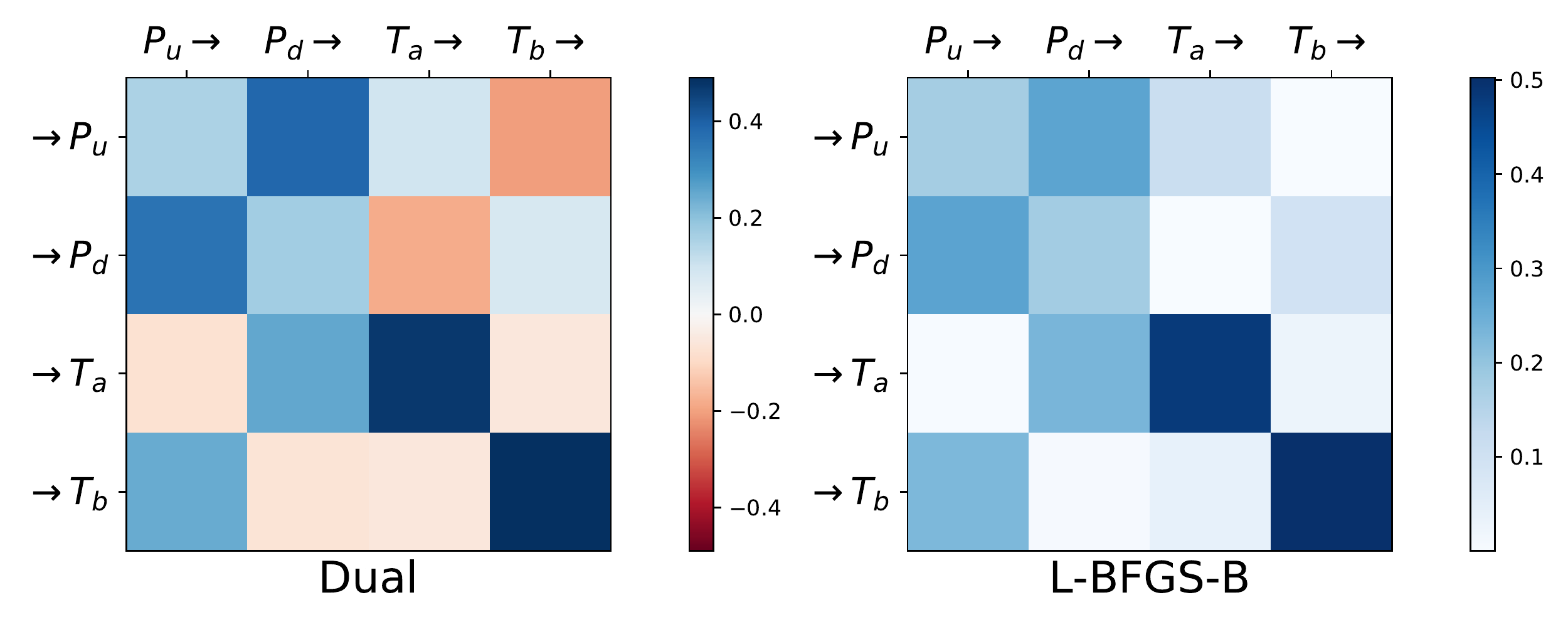}
\caption{
    Adjacency matrix $A$ of a Hawkes process fitted on high-frequency financial data from the Bund
    market.
    This reproduces an experiment run in \cite{bacry2015hawkes} where $P_u$ (resp. $P_d$)
    counts the number of upward (resp. downward) mid-price moves and $T_a$ (resp. $T_b$) counts the
    number of market orders at the best ask (resp. best bid) that do not move the price.
    SDCA detects inhibitive behaviors while L-BFGS-B cannot.
}
\label{fig:hawkes-bund}
\end{figure}
It is expected that when stocks are sold (resp. bought) the price is unlikely to go up (resp. down)
but this is retrieved by SDCA only.
% This shows how valuable it is to unset the positive constraint in order to detect inhibition.
On simulated data this is even clearer and in Figure~\ref{fig:hawkes-simulated-data} we observe
the same behavior when the ground truth contains inhibitive effects.
Our experiment consists in two simulated Hawkes processes with 10 nodes and sum-exponential
kernels with 3 decays.
There are only excitation effects - all $a^{ij}_u$ are positive
%for $i, j = 1, \ldots, n$, $u=1, \ldots I$
- in the first case and we allow inhibitive effects in the second.
Events are simulated according to these kernels that we try to recover.
While it would be standard to compare the performances in terms of log-likelihood
obtained on the a test sample, nothing ensures that the problem optimizer lies in the feasible set
of the test set.
Hence the results are compared by looking at the estimation error (RMSE) of the adjacency matrix
$A$ across iterations.
Figure~\ref{fig:hawkes-simulated-data} shows that SDCA always converges faster towards its solution
in both cases and that when the adjacency matrix contains inhibitive effects, SDCA obtains a better
estimation error than L-BFGS-B.
\begin{figure}
\centering
\includegraphics[width=0.7\textwidth]{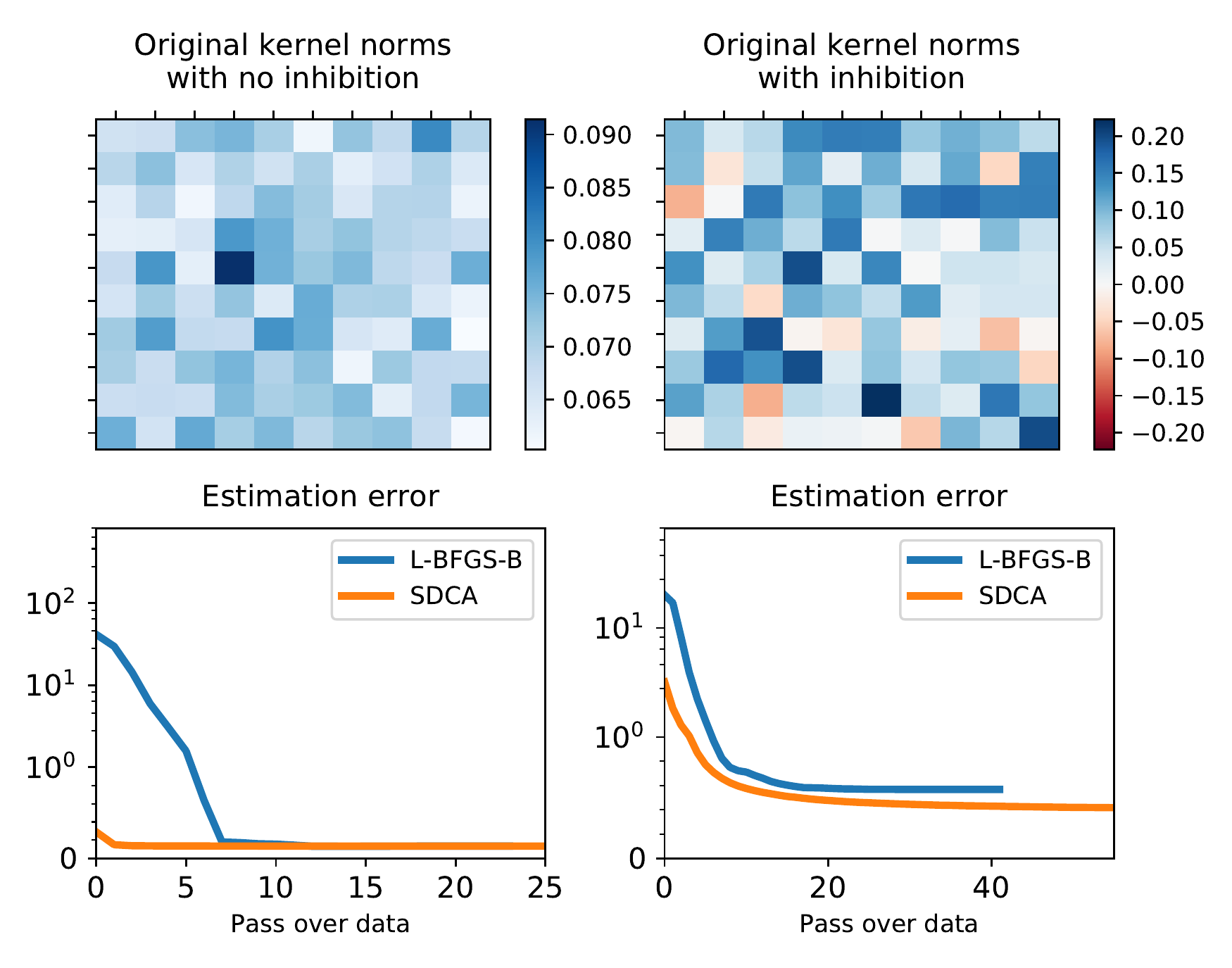}
\caption{
\emph{Top:} Adjacency matrix of the Hawkes processes used for simulation.
\emph{Bottom:} estimation error of the adjacency matrix $A$ across iterations.
In both case SDCA is faster than L-BFGS-B and reaches a better estimation error when there are
inhibitive effects to recover (\emph{Right}).
}
\label{fig:hawkes-simulated-data}
\end{figure}

% subsection hawkes_processes (end)

% section experiments (end)

\subsection{Heuristic initialization} % (fold)
\label{sub:heuristic_for_a_wise_start}

The default dual initialization in \cite{shalev2014accelerated} ($\alpha^{(0)} = 0_n$) is not a
feasible dual point.
Instead of setting arbitrarily $\alpha^{(0)}$ to $1_n$, we design, from three properties, a
vector $\kappa \in \Dfsm^n$ that is linearly linked to $\alpha^*$ and then rely on
Proposition~\ref{prop:dual-initial-guess-rescale} to find a heuristic starting point $\alpha^{(0)}$
from $\kappa$ for Poisson regression and Hawkes processes.

\paragraph{Property 1: link with $\|x_i\|$.} % (fold)
\label{par:observation_1_link_to_}
Proposition~\ref{prop:dual-optimum-linked-to-norm} relates exactly $\alpha_i^*$ to
the inverse of the norm of $x_i$.
\begin{proposition}
\label{prop:dual-optimum-linked-to-norm}
For Poisson regression and Hawkes processes,
the value of the dual optimum $\alpha_i^*$ is linearly linked to the inverse of the norm of $x_i$.
Namely, if there is $c_i > 0$ such that $\xi_i = c_i x_i$
for any $i \in \{ 1, \dots n \}$, then $\zeta^*$, the solution of the dual problem
\begin{equation*}
\argmax_{\zeta \in (0, +\infty)^n}
      \frac{1}{n} \sum_{i=1}^n y_i + y_i \log \bigg( \frac{\zeta_i}{y_i} \bigg)
          - \lambda g^* \bigg(\frac{1}{\lambda n} \sum_{i=1}^n \zeta_i \xi_i -
          \frac{1}{\lambda} \psi \bigg),
\end{equation*}
satisfies $\zeta^*_i = \alpha_i^* / c_i$ for all $i = 1, \dots n$.
\end{proposition}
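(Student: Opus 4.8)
The plan is to reduce the maximization defining $\zeta^*$ to the original dual problem by a coordinatewise change of variables. Using the explicit dual loss for Poisson regression and Hawkes processes, $-f_i^*(-\alpha_i)=y_i+y_i\log(\alpha_i/y_i)$, the original dual objective reads
\[
 D(\alpha)=\frac1n\sum_{i=1}^n\Big(y_i+y_i\log\tfrac{\alpha_i}{y_i}\Big)
   -\lambda g^*\Big(\tfrac{1}{\lambda n}\sum_{i=1}^n\alpha_i x_i-\tfrac1\lambda\psi\Big),
\]
whose unique maximizer is $\alpha^*$ (existence and uniqueness following from the strong convexity of $g$, as in Proposition~\ref{prop:strong-duality}). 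The expression in the statement is exactly this objective with $x_i$ replaced by $\xi_i=c_ix_i$.

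First I would substitute $\zeta_i=\alpha_i/c_i$. Since $c_i>0$ this is a bijection of $(0,+\infty)$ onto itself, so the feasible sets correspond. The key cancellation is $\zeta_i\xi_i=(\alpha_i/c_i)(c_ix_i)=\alpha_i x_i$: the argument of $g^*$ is left unchanged and the regularization term is identical in both problems. For the separable part, $y_i\log(\zeta_i/y_i)=y_i\log(\alpha_i/y_i)-y_i\log c_i$, so the two objectives differ only by the additive term $-\frac1n\sum_i y_i\log c_i$, which is independent of $\alpha$. Hence, under $\zeta_i=\alpha_i/c_i$, the $\zeta$-objective equals $D(\alpha)$ up to this constant; maximizing one is equivalent to maximizing the other, and the maximizers correspond, giving $\zeta_i^*=\alpha_i^*/c_i$.

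I expect no real obstacle: the whole argument rests on the multiplicative cancellation $\zeta_i\xi_i=\alpha_ix_i$ combined with the logarithm turning the rescaling into a constant additive shift. The only points meriting care are that this uses the explicit logarithmic form of $f_i^*$ specific to Poisson and Hawkes (for a general $L$-$\log$ smooth loss the shift would not be additive and the clean relation would fail), and that the additive constant plays no role in the location of the maximizer, so no uniqueness subtlety arises beyond the strict concavity already guaranteed by the strong convexity of $g$.
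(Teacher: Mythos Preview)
Your proposal is correct and follows essentially the same approach as the paper's proof: both rely on the change of variables $\alpha_i = c_i\zeta_i$, the cancellation $\zeta_i\xi_i = \alpha_i x_i$ in the $g^*$ term, and the fact that the logarithm turns the rescaling into an additive constant that does not affect the argmax. The paper phrases it slightly differently by first inserting the constant $y_i\log c_i$ and then recognizing the resulting expression as $D(c\cdot\zeta)$, but the content is identical.
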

This Proposition is proved in Section~\ref{sub:proposition_prop:dual-optimum-linked-to-norm}.
It suggests to consider $\kappa_i \propto 1 / \|x_i\|$ for all $i = 1, \dots n$.

\paragraph{Property 2: link with $y_i$.} % (fold)
\label{par:observation_2_link_to_y_i}

For Poisson regression and Hawkes processes where $f_i(x) = -y_i \log x$, the second
Karush-Kuhn-Tucker Condition~\eqref{eq:general_primal_dual_relations_at_optimum_alpha}
(see Section~\ref{sec:duality} for more details) writes
\begin{equation*}
    \alpha_i^* = \frac{y_i}{{w^*}^\top x_i}
\end{equation*}
for $i  = 1, \dots, n$.
Hence, $\alpha^*$ and $y$ are correlated (a change in $y_i$ only leads to a minor change in $w^*$),
so we will consider $\kappa_i \propto y_i / \|x_i\|$.

% paragraph observation_2_link_to_y_i (end)

\paragraph{Property 3: link with the features matrix.} % (fold)
\label{par:observation_3_link_to_the_features_matrix}

The inner product ${w^*}^\top x_i$ is positive and at the optimum, the Karush-Kuhn-Tucker
Condition~\eqref{eq:general_primal_dual_relations_at_optimum_w}
(which links $w^*$ to $x_i$ through $\alpha_i^*$) tells that $\alpha_i^*$ is likely to be large if
$x_i$ is poorly correlated to other features, i.e. if $x_i^\top \sum_{j=1}^n x_j$ is small.
Finally, the choice
\begin{equation}
\label{eq:kappa-linear-dual-guess}
    \kappa_i = \frac{y_i}{x_i^\top \sum_{j=1}^n x_j}
\end{equation}
for $i = 1, \dots, n$, takes these three properties into account.

Figure~\ref{fig:poisson-unscaled-dual-inits} plots the optimal dual variables $\alpha^*$ from
the Poisson regression experiments of Section~\ref{sub:poisson_regression_expe} against the the
$\kappa$ vector from Equation~\ref{eq:kappa-linear-dual-guess}.
We observe in these experiments a good correlation between the two, but $\kappa$ is only a good
guess for initialization $\alpha^{(0)}$ up to a multiplicative factor that the following
proposition aims to find.
\begin{figure}
\centering
\includegraphics[width=0.75\textwidth]{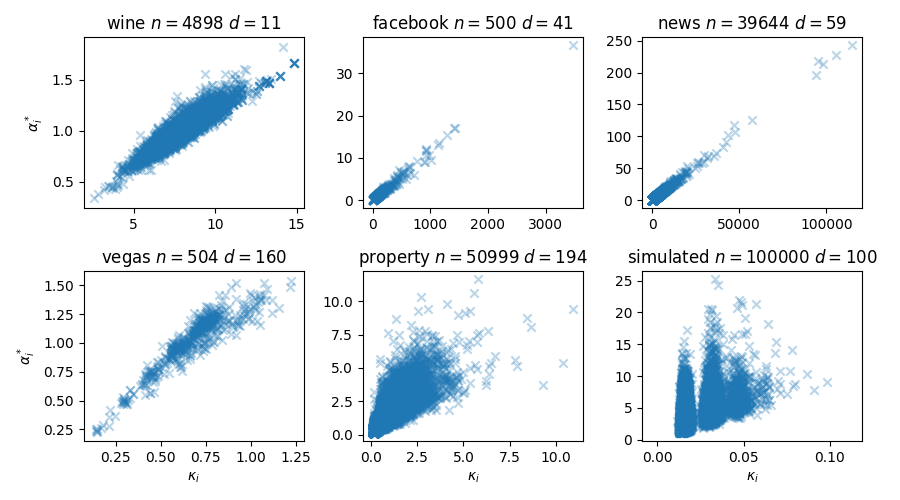}
\caption{Value of $\alpha_i^*$ given $\kappa_i$ for $i=1, \ldots, n$.
There is a linear link relating initial guess $\kappa_i$ to the dual optimum $\alpha_i^*$ on
Poisson datasets but the amplitude is not adjusted yet.
}
\label{fig:poisson-unscaled-dual-inits}
\end{figure}
\begin{proposition}
\label{prop:dual-initial-guess-rescale}
For Poisson regression and Hawkes processes and $g(w) = \tfrac{1}{2} \|w\|^2$, if we constraint the
dual solution ${\alpha^* \in (0, +\infty)^n}$ to be collinear with a given vector
$\kappa \in (0, +\infty)^n$,
i.e. $\alpha^* = \bar{\alpha} \kappa$ for some $\bar{\alpha} \in \R,$ then the optimal value for
$\bar{\alpha}$ is given by
\begin{equation*}
\bar{\alpha} =
    \frac{\psi^\top \chi_\kappa
          + \sqrt{
              (\psi^\top \chi_\kappa)^2
              + 4 \lambda \| \chi_\kappa \|^2  \tfrac{1}{n} \sum_{i=1}^{n} y_i }
          }
          {2  \| \chi_\kappa \|^2}
\quad \text{with} \quad
\chi_\kappa = \frac{1}{n} \sum_{i=1}^{n} \kappa_i x_i.
\end{equation*}
Combined with the previous Properties, we suggest to consider
\begin{equation}
  \label{eq:dual_initialization}
  \alpha^{(0)}_i = \bar{\alpha} \kappa_i
\end{equation}
as an initial point, where $\kappa_i$ is defined in Equation~\eqref{eq:kappa-linear-dual-guess}.
\end{proposition}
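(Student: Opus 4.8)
The plan is to specialize the dual objective~\eqref{eq:general_dual_problem} to the present setting, substitute the collinearity ansatz $\alpha = \bar\alpha\,\kappa$, and reduce the maximization to a one-dimensional strictly concave problem in the scalar $\bar\alpha$, whose first-order condition turns out to be a quadratic. First I would record the two ingredients specific to this case. Since $g(w)=\tfrac12\|w\|^2$ is its own Fenchel conjugate, $g^*(v)=\tfrac12\|v\|^2$; and for both Poisson regression and Hawkes processes the dual loss was computed in Section~\ref{sub:closed_for_solution_for_log_losses} as $f_i^*(-\alpha_i)=-y_i-y_i\log(\alpha_i/y_i)$. Plugging these into $D$ gives, for $\alpha\in(0,\infty)^n$,
\begin{equation*}
  D(\alpha) = \frac{1}{n}\sum_{i=1}^n \Big( y_i + y_i\log\frac{\alpha_i}{y_i}\Big)
    - \frac{1}{2\lambda}\Big\|\frac{1}{n}\sum_{i=1}^n \alpha_i x_i - \psi\Big\|^2 .
\end{equation*}

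Next I would impose $\alpha_i=\bar\alpha\,\kappa_i$. The crucial simplification is that the scalar factors out of the linear combination of features, $\frac{1}{n}\sum_i \bar\alpha\kappa_i x_i = \bar\alpha\,\chi_\kappa$ with $\chi_\kappa=\frac1n\sum_i\kappa_i x_i$, while in the entropy term the logarithm splits as $\log(\bar\alpha\kappa_i/y_i)=\log\bar\alpha + \log(\kappa_i/y_i)$. Collecting all terms that do not depend on $\bar\alpha$ into a constant $C$ leaves
\begin{equation*}
  D(\bar\alpha\,\kappa) = C + \bar y\,\log\bar\alpha
    - \frac{1}{2\lambda}\big\|\bar\alpha\,\chi_\kappa - \psi\big\|^2 ,
  \qquad \bar y := \tfrac1n\textstyle\sum_{i=1}^n y_i .
\end{equation*}
This reduction is what makes the whole statement work.

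Then I would maximize over $\bar\alpha>0$. Differentiating, $\partial_{\bar\alpha}D = \bar y/\bar\alpha - \lambda^{-1}\big(\|\chi_\kappa\|^2\bar\alpha - \psi^\top\chi_\kappa\big)$, and the second derivative $-\bar y/\bar\alpha^2 - \lambda^{-1}\|\chi_\kappa\|^2$ is strictly negative (here $\bar y>0$ since all $y_i>0$), so the map is strictly concave and its unique maximizer is the stationary point. Setting the derivative to zero and clearing $\lambda\bar\alpha$ yields the quadratic
\begin{equation*}
  \|\chi_\kappa\|^2\,\bar\alpha^2 - (\psi^\top\chi_\kappa)\,\bar\alpha - \lambda\,\bar y = 0 .
\end{equation*}
Since the constant term $-\lambda\bar y$ is negative, the two roots have opposite signs; as $\bar\alpha>0$ is required (both $\kappa$ and $\alpha^*$ lie in $(0,\infty)^n$), I keep the positive root of the quadratic formula, which is exactly the claimed expression for $\bar\alpha$.

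The computation is essentially routine once the reduction is in place; the only points deserving care are the bookkeeping of which terms are absorbed into $C$, the concavity check guaranteeing that the critical point is the maximizer rather than merely a stationary point, and the sign argument selecting the positive root. I do not expect any genuine obstacle beyond these.
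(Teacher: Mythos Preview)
Your proposal is correct and follows essentially the same approach as the paper: substitute $\alpha=\bar\alpha\kappa$ into the dual, reduce to a one-dimensional concave problem in $\bar\alpha$, set the derivative to zero, and select the positive root of the resulting quadratic. Your treatment is in fact slightly more careful than the paper's, since you explicitly verify strict concavity and justify the choice of root via the sign of the constant term.
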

This Proposition is proved in Section~\ref{sub:proposition_prop:dual-initial-guess-rescale}.
Figure~\ref{fig:poisson-scaled-dual-inits} presents the values of $\alpha_i^*$ given its initial
value $\alpha_i^{(0)}$ for $i = 1, \ldots, n$ and shows that the rescaling has worked properly.
We validate this heuristic initialization by showing that it leads to a much faster convergence in
Figure~\ref{fig:poisson-dual-inits-convergence} below.
Indeed, we observe that SDCA initialized with Equation~\eqref{eq:dual_initialization} reaches
optimal objective much faster than when initialization consists in setting all dual variables
arbitrarily to 1.
\begin{figure}
\centering
\includegraphics[width=0.75\textwidth]{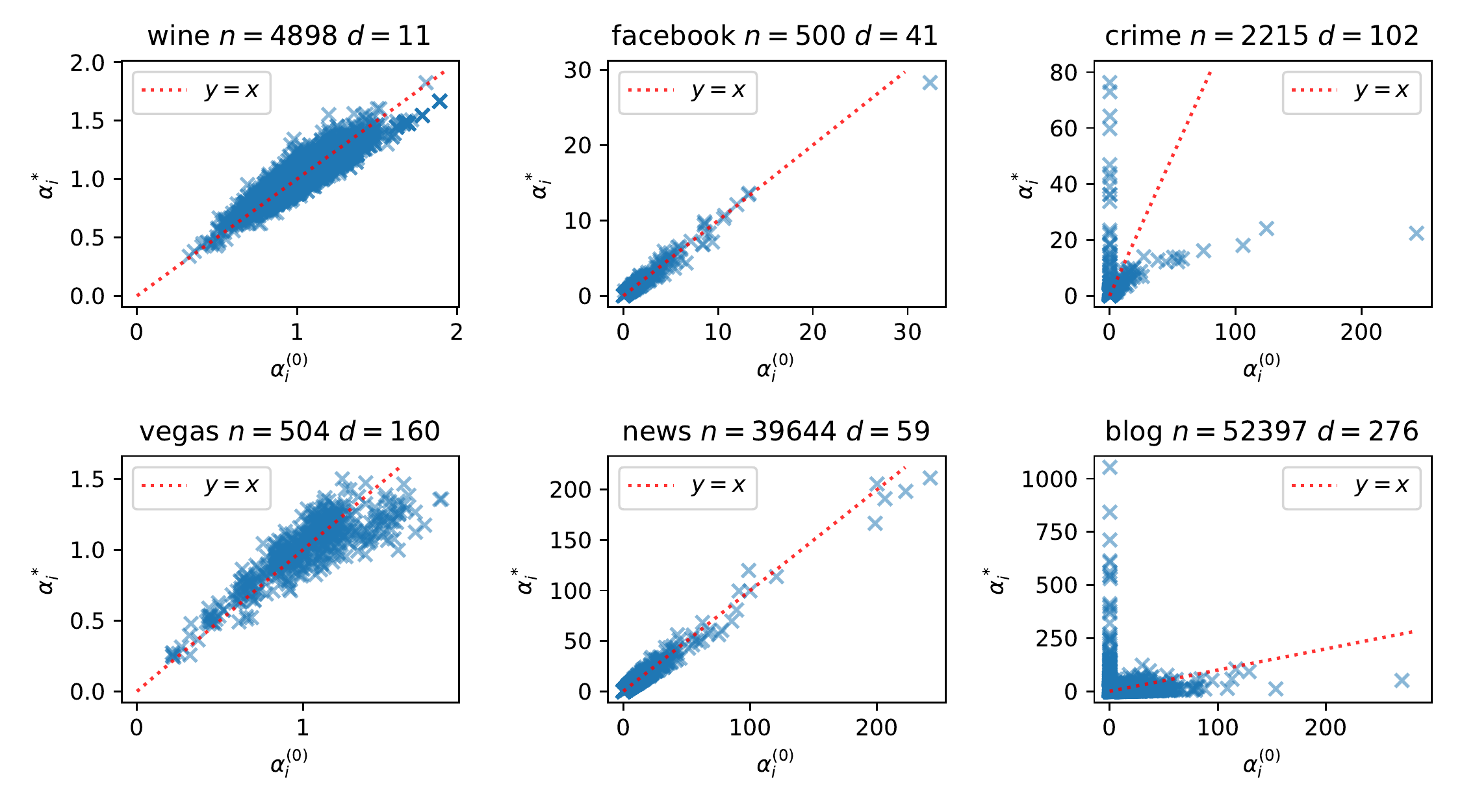}
\caption{
Value of $\alpha_i^*$ given $\alpha_i^{(0)}$ from Equation~\eqref{eq:dual_initialization}
for $i=1, \ldots, n$.
These values are close an correlated which makes $\alpha^{(0)}$ a good initialization value.
}
\label{fig:poisson-scaled-dual-inits}
\end{figure}

\begin{figure}
\centering
\includegraphics[width=0.8\textwidth]{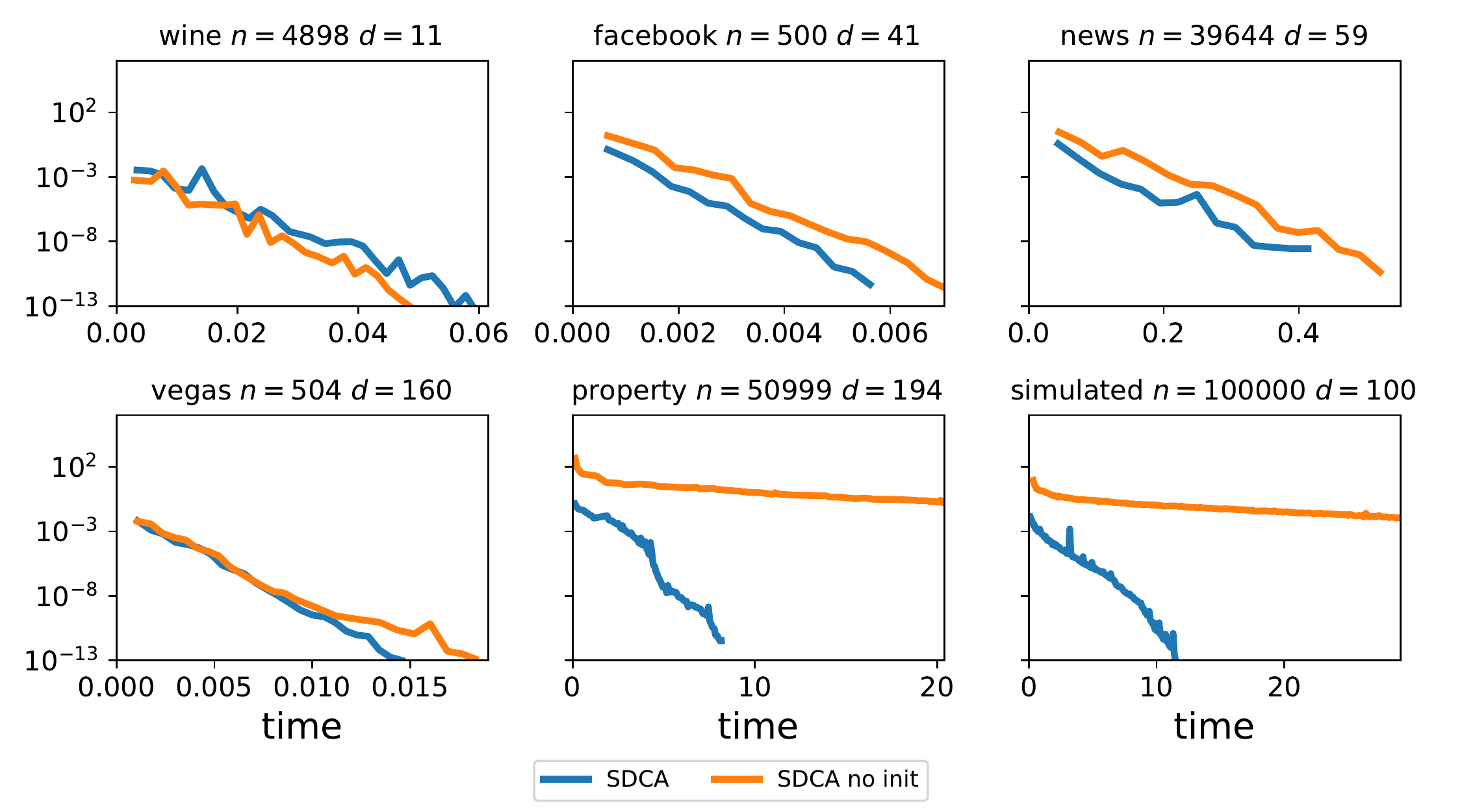}
\caption{Convergence over time of SDCA with wise initialization
from Equation~\eqref{eq:dual_initialization} and SDCA arbitrarily initialized with
$\alpha^{(0)} =1$.
}
\label{fig:poisson-dual-inits-convergence}
\end{figure}

\subsection{Using mini batches} % (fold)
\label{sub:optimizing_over_several_indices}

At each step $t$, SDCA \cite{shalev2013stochastic} maximizes the dual objective by
picking one index $i_t \in \{1, \dots, n\}$ and maximizing the dual objective over the coordinate
$i_t$ of the dual vector $\alpha$, and sets
\begin{equation*}
\alpha_i^{t+1} = \argmax_{v \in \Dfsm}
     D(\alpha_1^t, \dots , \alpha_{i_t -1}^t, v, \alpha_{i_t + 1}^t, \dots, \alpha_n).
\end{equation*}
In some cases this maximization has a closed-form solution, such as for Poisson regression (see
Proposition~\ref{prop:closed_form_solution_log_losses}) or least-squares regression where
$f_i(w) = (y_i - w^\top x_i)^2$ leads to the explicit update
\begin{equation*}
    \alpha_i^{t+1} =
    \alpha_i^t + \frac{y_i - w^\top x_i - \alpha_i^{t}}{1 + (\lambda n)^{-1} \|x_i\|^2}.
\end{equation*}
In some other cases, such as logistic regression, this closed form solution cannot be exhibited and
we must perform a Newton descent algorithm.
Each Newton step consists in computing $\partial D(\alpha) / \partial \alpha_i$ and
$\partial^2 D(\alpha) / \partial \alpha_i^2$, which are one dimensional operations given
$\|x_i\|^2$ and $w^\top x_i$.
Hence, in a large dimensional setting, when the observations $x_i$ have many non zero entries,
the main cost of the steps resides mostly in computing $\|x_i\|^2$ and $w^\top x_i$.
Since $\|x_i\|^2$ and $w^\top x_i$ must also be computed when using a closed-form solution, using
Newton steps instead of the closed-form is eventually not much more computationally expensive.
So, in order to obtain a better trade-off between Newton steps and inner-products computations, we
can consider more than a single index on which we maximize the dual objective.
This is called the mini-batch approach, see Stochastic Dual Newton Ascent (SDNA) \cite{qu2016sdna}.
It consists in selecting a set $\mathcal{I} \subset \{ 1, \dots, n\}$ of $p$ indices at each
iteration $t$.
The value of $\alpha_i^{t+1}$ becomes in this case
\begin{equation*}
\alpha_i^{t+1} = \argmax_{v \in (\Dfsm)^p}
    D(b_1, \dots, b_n)
    \quad \text{where} \quad
    b_i = \begin{cases}
      v_j & \text{if} \quad i \in \mathcal{I} \text{ and } j \text{ is the position of } i
            \text{ in } \mathcal{I}\\
      \alpha_i^t & \text{otherwise.}
    \end{cases}
\end{equation*}
The two extreme cases are $p = 1$, which is the standard SDCA algorithm, and $p = n$ for which we
perform a full Newton algorithm.
After computing the inner products $w^\top x_i$ and $x_i^\top x_j$ for all
$(i, j) \in \mathcal{I}^2$ each iteration will simply performs up to 10 Newton steps in which the
bottleneck is to solve a $p \times p$ linear system.
This allows to better exploit curvature and obtain better
convergence guarantees for gradient-Lipschitz losses \cite{qu2016sdna}.

We can apply this to Poisson regression and Hawkes processes where $f_i(x) = -y_i \log x$.
The maximization steps of Line~\ref{lst:line:local_max} in Algorithm~\ref{alg:prox-shifted-sdca}
is now performed on a set of coordinates $\mathcal{I} \subset \{ 1, \dots n \}$ and consists in
finding
\begin{equation*}
  \max_{\alpha_i ; i \in \mathcal{I}} D_\mathcal{I}^t(\alpha_\mathcal{I}) \quad \text{where} \quad
  D_\mathcal{I}^t(\alpha_\mathcal{I}) =
      \frac{1}{n} \sum_{i \in \mathcal{I}} \Big( y_i + y_i \log \tfrac{\alpha_i}{y_i} \Big)
       - \frac{\lambda}{2} \Big\|
               w^t + \frac{1}{\lambda n} \sum_{i \in \mathcal{I}} (\alpha_i - \alpha_i^t ) x_i
            \Big\|^2,
\end{equation*}
where we denote by $\alpha_\mathcal{I}$ the sub-vector of $\alpha$ of
size $p$ containing the values of all indices in $\mathcal{I}$.
We initialize the vector $\alpha_\mathcal{I}^{(0)} \in (\Dfsm)^p$ to the corresponding values of
the coordinates of $\alpha^t$ in $\mathcal{I}$ and then perform the Newton steps, i.e.
\begin{equation}
  \label{eq:newton-linear-system}
  \alpha_\mathcal{I}^{k+1} = \alpha_\mathcal{I}^{k} - \Delta \alpha_\mathcal{I}^k
  \quad \text{where } \Delta \alpha_I^k \text{ is the solution of} \quad
  \nabla^2 D_\mathcal{I}^t(\alpha_\mathcal{I}^k) \Delta \alpha_\mathcal{I}^k
    = \nabla D_\mathcal{I}^t(\alpha_\mathcal{I}^k).
\end{equation}
The gradient $\nabla D_\mathcal{I}^t(\alpha_\mathcal{I}^k)$
and the hessian $\nabla^2 D_\mathcal{I}^t(\alpha_\mathcal{I}^k)$ have the following explicit
formulas:
\begin{equation*}
(\nabla D_\mathcal{I}^t(\alpha_\mathcal{I}^k))_i =
    \frac{\partial D(\alpha_\mathcal{I}^k)}{\partial \alpha_i} =
    \frac{1}{n} \Big(
        \frac{y_i}{\alpha_i^k}
        - {w^t}^\top x_i
        - \frac{1}{\lambda n} \sum_{j \in \mathcal{I}} (\alpha_j^k - \alpha_j^t ) x_j^\top x_i
      \Big),
\end{equation*}
and
\begin{equation*}
\big( \nabla^2 D_I^t(\alpha_I^k) \big)_{i, j}
   = \frac{\partial^2 D(\alpha_I^k)}{\partial \alpha_i \partial \alpha_j} =
   - \frac{1}{n} \Big( \frac{y_i}{{\alpha_i^k}^2} \mathbf{1}_{i = j}
                + \frac{1}{\lambda n} x_i^\top x_j \Big).
\end{equation*}
Note that $D_I^t$ is a concave function hence $- \nabla^2 D_I^t(\alpha_I^k)$ will be positive
semi-definite and the system in Equation~\eqref{eq:newton-linear-system} can be solved very
efficiently with BLAS and LAPACK libraries.
Let us explicit computations when $p=2$.
Suppose that $\mathcal{I} = \{i, j\}$ and put
\begin{equation*}
  \delta_i = \alpha_i - \alpha_i^{(t-1)}, \quad p_i = x_i^\top w^{(t- 1)}
  \quad \text{and} \quad g_{ij} = \frac{x_i^\top x_j}{\lambda n}.
\end{equation*}
The gradient and the Hessian inverse are then given by
\begin{equation*}
\nabla D(\alpha_I) = \frac{1}{n}
\begin{bmatrix}
    \frac{y_i}{\alpha_i} - p_i - \delta_i g_{ii} - \delta_j g_{ij} \\
    \frac{y_j}{\alpha_j} - p_j - \delta_j g_{jj} - \delta_i g_{ij}
 \end{bmatrix},
\end{equation*}
and
\begin{equation*}
 \nabla^2 D(\alpha_I)^{-1} =
     \frac{n^2}{(\frac{y_i}{\alpha_i^2} + g_{ii})(\frac{y_i}{\alpha_i^2} + g_{ii}) - g_{ij}^2}
\begin{bmatrix}
    - \frac{y_j}{\alpha_j^2} - g_{jj} & g_{ij} \\
    g_{ij} & - \frac{y_i}{\alpha_i^2} - g_{ii}
 \end{bmatrix}.
\end{equation*}
This direct computation leads to even faster computations than using the dedicated libraries.
We plot in Figure~\ref{fig:poisson-batches-convergence} the convergence speed for three sizes of
batches 1, 2 and 10.
Note that in all cases using a batch of size $p=2$ is faster than standard SDCA.
Also, in the last simulated experiment where $d$ has been set on purpose to $1000$, the solver
using batches of size $p=10$ is the fastest one.
The bigger number of features $d$ gets, the better are solvers using big batches.

\begin{figure}
\centering
\includegraphics[width=0.85\textwidth]{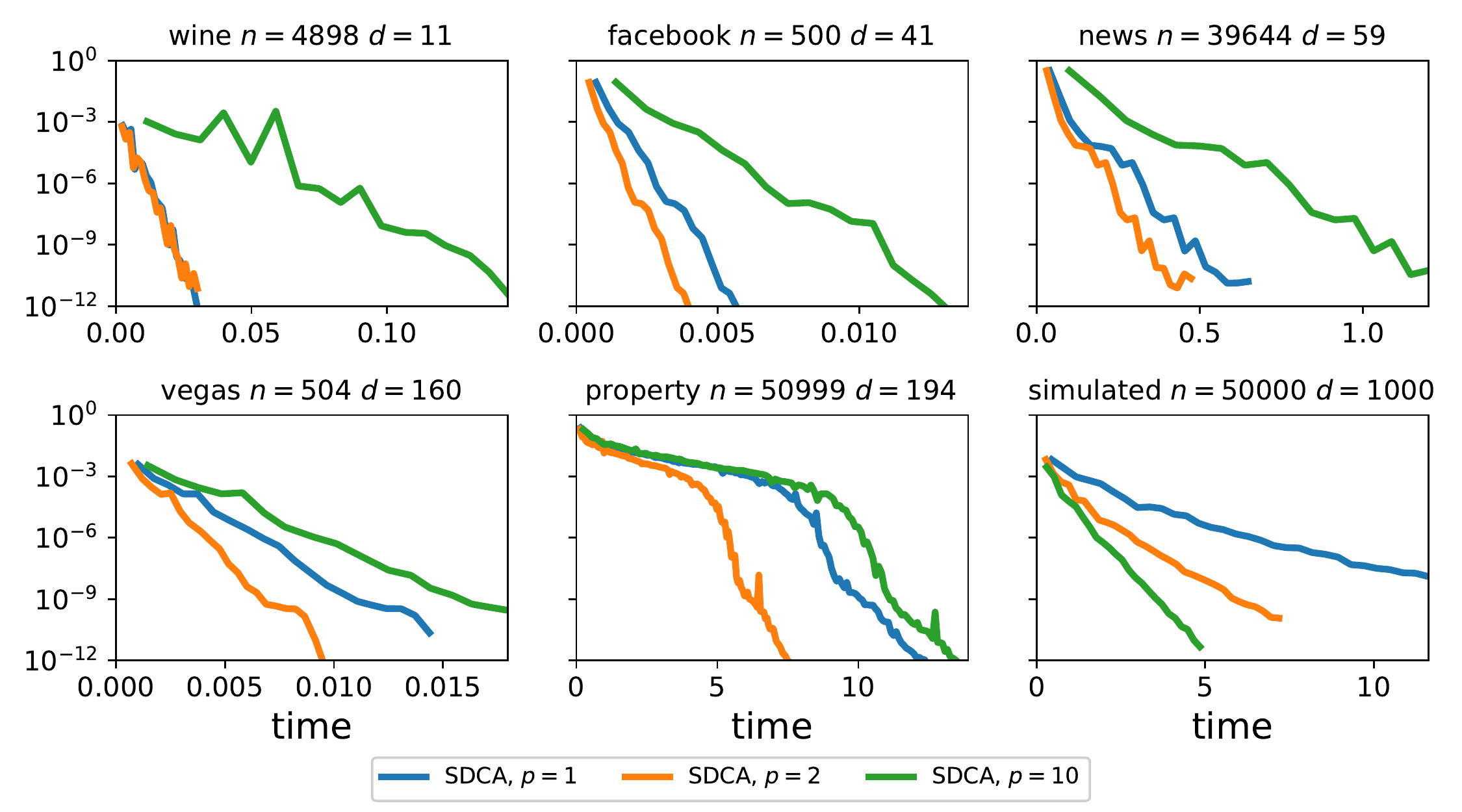}
\caption{Convergence speed comparison when the number of indices optimized at each step changes.}
\label{fig:poisson-batches-convergence}
\end{figure}

% subsection optimizing_over_several_indices (end)

\subsection{About the pessimistic upper bounds} % (fold)
\label{sub:about_the_pessimistic_upper_bounds}

The generic upper bounds derived in Proposition~\ref{prop:dual_large_bound} are general but
pessimistic as they depend linearly on $n$.
In fact this dependence is also observed in the NoLips algorithm \cite{bauschke2016descent} where
the rate depends on a constant $L = \sum_{i}^n y_i$.
Note that, for Nolips algorithms, $L$ is involved in the step size definition and leads to step too
small to be used in practice but that in our algorithm, these bounds have little or even no impact
in practice (see Remark~\ref{rmk:closed-form-bounded}) and are mainly necessary for convergence
guarantees.
These bounds are derived by lower bounding $x_i^\top \sum_{j \neq i} \alpha_j^* x_j$ by $0$.
This lower bound is very conservative and can probably be tightened by setting specific hypotheses
on the dataset, for example on the Gram matrix
($[G]_{i,j} = x_i^\top x_j$ for $i,j = 1, \ldots, n$).
For Poisson regression, this lower bound is reached in the extreme case where all observations are
orthogonal (all entries of $G$ are zero except on the diagonal).
Then $\psi^\top x_i = \frac{1}{n} \|x_i\|^2$ and the upper bounds from
Proposition~\ref{prop:dual_large_bound} become
\begin{equation*}
  \beta_i = \frac{1}{2} + \sqrt{\frac{1}{4} +
  \frac{\lambda n y_i}{ \|x_i\|^2}}
\end{equation*}
for $i= 1, \ldots, n$.
In this extreme case, the bounds are $O(\sqrt n)$ instead of $O(n)$ as stated in
Proposition~\ref{prop:dual_large_bound}.
Experimentally, we do not observe a dependence in $O(n)$ either.
Figure~\ref{fig:poisson-duals-n-evolution} shows the evolution of the maximum optimal dual
obtained ($\max_{i=1, \ldots, n} \alpha_i^*$) for the six datasets considered in
Section~\ref{sub:poisson_regression_expe} for Poisson regression, on an increasing fraction
of the dataset.
These values are averaged over 20 samples and we provide the associated 95\% confidence interval
on this value.
We observe that $\max_{i=1, \ldots, n} \alpha_i^*$ has a much lower dependence in $n$ than the
bounds given by Proposition~\ref{prop:dual_large_bound}.
\begin{figure}
\centering
\includegraphics[width=0.85\textwidth]{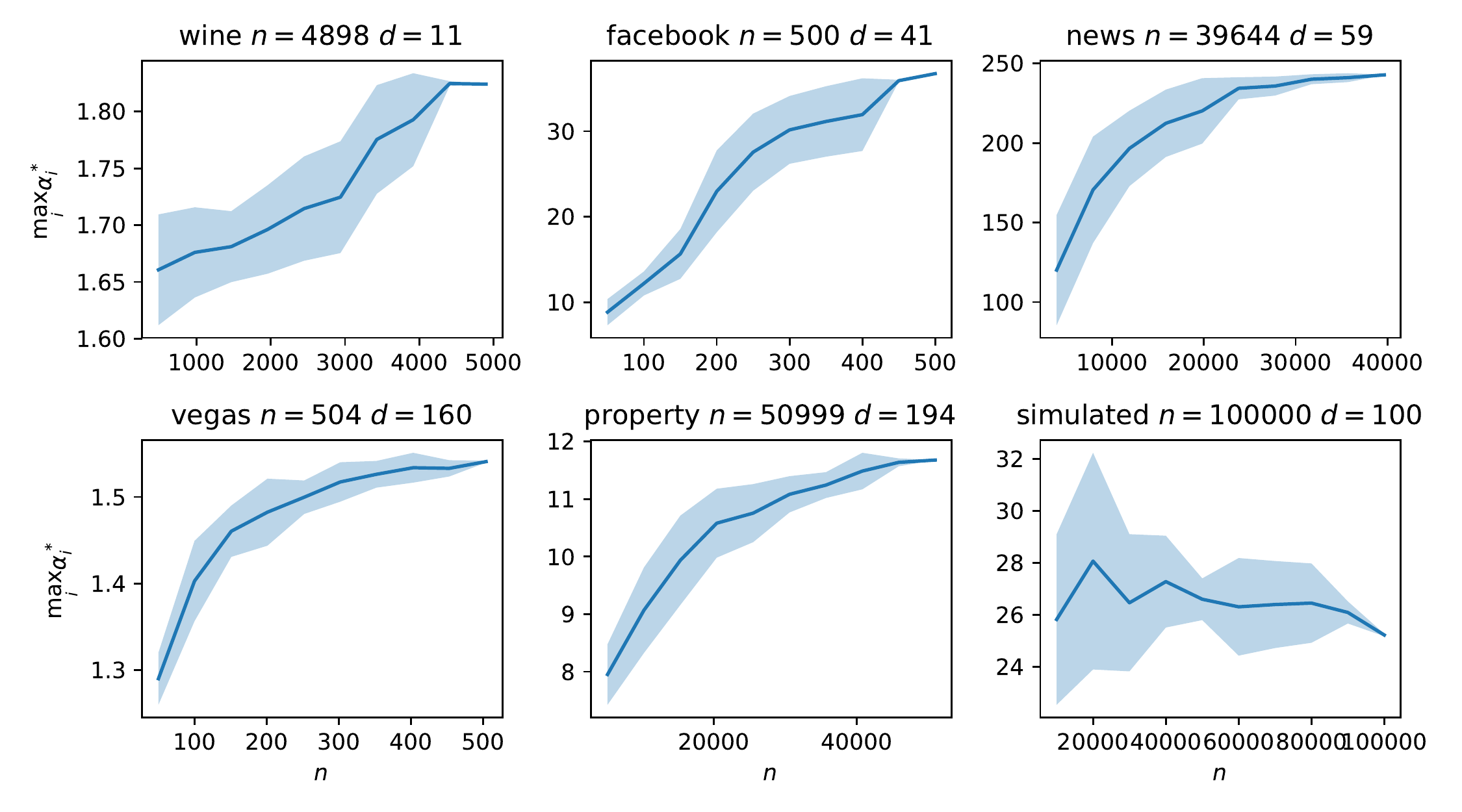}
\caption{Evolution of $\max_{i=1, \ldots, n} \alpha_i^*$ for an increasing value of $n$.
We observe that $\max_{i=1, \ldots, n} \alpha_i^*$ is not increasing linearly with $n$ as quick as
the bound $\beta_i$ obtained in Proposition~\ref{prop:dual_large_bound}.}
\label{fig:poisson-duals-n-evolution}
\end{figure}

% subsection about_the_pessimistic_upper_bounds (end)

\section{Conclusion} % (fold)
\label{sec:conclusion}

This work introduces the $\log$-smoothness assumption in order to derive improved linear rates for
SDCA, for objectives that do not meet the gradient-Lipschitz assumption.
This provides, to the best of our knowledge, the first linear rates for a stochastic first order
algorithm without the gradient-Lipschitz assumption.
The experimental results also prove the efficiency of SDCA to solve such problems and its ability
to deal with the open polytope constraints, improving the state-of-the-art.
Finally, this work also presents several variants of SDCA and experimental heuristics to make the
most of it on real world datasets.
Future work could provide better linear rates under more specialized assumptions on the Gram matrix,
as observed on numerical experiments.
Also, to extend this work to more applications, we aim to find a generalization of the $\log$
smoothness assumption such as what \cite{sun2017generalized} has done for self-concordance.

\subsection*{Acknowledgments}
We would like to acknowledge support for this project from the Datascience Initiative
of \'Ecole polytechnique

% \FloatBarrier
% \clearpage
% \newpage

% {\noindent \huge \textbf{Appendices}}

\section{Proofs} % (fold)
% \label{sec:some_technical_details}
\label{sec:proofs-chap-dual}
% section some_technical_details (end)

We start this Section by providing extra details on the derivation of the dual problem and the
proximal version of SDCA.
We then provides the proofs of all the results stated in the chapter, namely
Proposition~\ref{prop:log-smooth-second-order},
Proposition~\ref{prop:hyp_dual_hessian_equivalence},
Theorem~\ref{th:general-convergence},
Theorem~\ref{th:convergence-is},
Proposition~\ref{prop:closed_form_solution_log_losses},
Proposition~\ref{prop:dual_large_bound},
Remark~\ref{rmk:closed-form-bounded},
Proposition~\ref{prop:dual-optimum-linked-to-norm}
and Proposition~\ref{prop:dual-initial-guess-rescale}.

\subsection{Duality and proof of Proposition~\ref{prop:strong-duality}} % (fold)
\label{sec:duality}

It is not straightforward to obtain strong duality for a convex problem with strict inequalities
(such as the ones enforced by the polytope $\Pi(X)$ from Equation~\ref{eq:feasible_polytope}).
To bypass this difficulty we consider the same problem but constrained on a closed set and show how
it relates to the original Problem~\eqref{eq:primal}.
But first we formulate the two following lemmas.
\begin{lemma}
\label{lemma:equivalence-constrained-primal}
It exists $\varepsilon > 0$ such that the following problem
\begin{equation*}
  \min_{w \in \Pi_{|\varepsilon}(X)} P(w)
  \quad \text{where} \quad
  \Pi_{|\varepsilon}(X) = \{ w \in \R^d \; : \; \forall i \in \{1 , \dots, n\}, \;
                        w^\top x_i \geq \varepsilon \}
\end{equation*}
has a solution $w^*$ that is also the solution of the original Problem~\eqref{eq:primal}.
\end{lemma}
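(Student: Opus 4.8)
The plan is to avoid working directly with the open feasible set and instead to show that the minimization effectively takes place on a compact set sitting strictly inside $\Pi(X)$. Fix any reference point $w_0 \in \Pi(X)$, which exists since the polytope is assumed non-empty, and consider the sublevel set $S = \{ w \in \Pi(X) : P(w) \le P(w_0) \}$. If I can prove that $S$ is bounded and bounded away from the boundary of $\Pi(X)$, i.e. that there is $\delta > 0$ with $w^\top x_i \ge \delta$ for all $i$ and all $w \in S$, then $S$ is a compact subset of $\Pi(X)$; continuity of $P$ on $\Pi(X)$ then yields a minimizer $w^*$ of $P$ over $S$, which is automatically a minimizer over all of $\Pi(X)$ because every $w \notin S$ satisfies $P(w) > P(w_0) \ge P(w^*)$. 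Choosing any $0 < \varepsilon < \delta$ finishes the argument: $w^*$ lies in $\Pi_{|\varepsilon}(X) \subseteq \Pi(X)$, so it minimizes $P$ over the smaller closed set $\Pi_{|\varepsilon}(X)$ as well, which is exactly the claim.

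To show that $S$ is bounded I would use the strong convexity of $g$ together with convex tangent lower bounds on the $f_i$. Since each $f_i$ is convex and differentiable, $f_i(w^\top x_i) \ge f_i(a_i) + f_i'(a_i)(w^\top x_i - a_i)$ with the fixed abscissa $a_i = w_0^\top x_i > 0$, so $\frac{1}{n} \sum_i f_i(w^\top x_i)$ is bounded below by an affine function of $w$. Combining this affine bound with the linear term $\psi^\top w$ and the quadratic lower bound $\lambda g(w) \ge \lambda g(w_0) + \langle s, w - w_0 \rangle + \frac{\lambda}{2}\|w - w_0\|^2$ coming from $1$-strong convexity (with $s$ any subgradient of $\lambda g$ at $w_0$), I obtain an estimate of the form $P(w) \ge \frac{\lambda}{2}\|w\|^2 - C\|w\| - C'$, which forces $\|w\| \le M$ on $S$ for some radius $M$.

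The delicate step is the barrier estimate showing that $S$ stays away from the boundary. On the bounded set $\{\|w\| \le M\} \cap \Pi(X)$, isolate one index $j$ and write $P(w) \ge \frac{1}{n} f_j(w^\top x_j) - C_j$, where the constant $C_j$ absorbs $\psi^\top w$, the affine lower bounds of the remaining $f_i$, and $\lambda g(w) \ge \lambda \min g$, all of which are bounded on $\{\|w\| \le M\}$ independently of how small $w^\top x_j$ is. Because $f_j(t) \to +\infty$ as $t \to 0^+$, the inequality $P(w) \le P(w_0)$ caps $f_j(w^\top x_j)$ and hence bounds $w^\top x_j$ away from $0$ by some $\delta_j > 0$; taking $\delta = \min_j \delta_j$ gives the desired uniform clearance. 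The main obstacle is precisely organizing these bounds so that the blow-up of a single $f_j$ dominates every other term uniformly over $S$, i.e. making the constants independent of the chosen index and of the proximity to the boundary; once this is in place the rest is routine. Note finally that no first-order optimality condition nor differentiability of $g$ is needed, since the conclusion is obtained purely at the level of set minimization and the inclusion $\Pi_{|\varepsilon}(X) \subseteq \Pi(X)$.
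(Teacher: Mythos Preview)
Your proof is correct and shares the same core idea with the paper's: use the blow-up $f_i(t)\to+\infty$ as $t\to 0^+$ together with the lower-boundedness coming from the strong convexity of $g$ to show that minimizing $P$ effectively takes place away from the boundary of $\Pi(X)$. The paper argues more directly: it lower bounds $\psi^\top w+\lambda g(w)$ by a constant $M$, picks $\varepsilon$ so small that $f_i(t)>nP(w_0)-nM$ for all $t<\varepsilon$, and concludes $P(w)>P(w_0)$ whenever some $w^\top x_i<\varepsilon$. Your route through the sublevel set and compactness is slightly longer but more careful on one point: when isolating the bad index $j$, you control the remaining $f_k$ via their affine tangent lower bounds on the already-established bounded set, whereas the paper simply drops them in the estimate, which tacitly requires $\sum_{k\neq j} f_k\ge 0$ and is not guaranteed by the stated hypotheses (e.g.\ $f_k(t)=-y_k\log t$ is negative for $t>1$). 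So your argument is in fact more robust and works without that implicit positivity assumption, at the cost of first establishing boundedness of the sublevel set.
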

\begin{proof}
First, notice that this problem has a unique solution which is unique as we minimize a convex
function on a closed convex set $\Pi(X)$.
Also, as the function $w \mapsto \psi^\top w + \lambda g(w)$ is strongly convex, since
$g$ is strongly convex, it is lower bounded.
We denote by $M \in \R$ a lower bound of this function.
Then, we consider $w^0 \in \Pi(X)$, and choose $\varepsilon$ sufficiently small such that for
all $i = 1, \dots, n$,
\begin{equation*}
\forall t < \varepsilon, \; f_i(t) > n P (w^0) - nM,
\end{equation*}
this value of $\varepsilon$ always exists since by assumption
$\lim_{t \rightarrow 0} f_i(t) = + \infty$ for all $i=1, \dots, n$.
For any $w_\varepsilon \in \Pi(X) \setminus \Pi_\varepsilon(X)$ (so a $w_\varepsilon$ is such
that $\exists i \in \{1 , \dots, n\}, \; w_\varepsilon^\top x_i < \varepsilon$), we thus have
\begin{equation*}
P(w_\varepsilon) > \psi^\top w_\varepsilon + P(w^0) - M + \lambda g(w_\varepsilon) \geq P(w^0).
\end{equation*}
Hence, for such a value of $\varepsilon$, the solution to the original Problem~\eqref{eq:primal}
is necessarily in $\Pi_\varepsilon(X)$ and both the problems constrained on $\Pi_\varepsilon(X)$ and
$\Pi(X)$ share the same solution $w^*$.
\end{proof}
\begin{lemma}
\label{lemma:fenchel_conjugates_equality}
For all $i = 1, \dots, n$, if we define by
\begin{equation*}
  \forall \alpha_i \in \Dfs, \;
  f^*_{i|\varepsilon}(v) := \max_{u \geq \varepsilon} uv - f_i(u),
\end{equation*}
then $f^*_{i|\varepsilon}$ is equal to the Fenchel conjugate of $f_i$, $f_i^*$, on
$\{ v \; ; \; \exists u \geq \varepsilon \; ; \; v = f_i'(u) \}$.
\end{lemma}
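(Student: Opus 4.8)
The plan is to exploit the fact that the only difference between $f_i^*$ and $f^*_{i|\varepsilon}$ is the feasible region of the inner maximization, and to show that on the prescribed set the unconstrained maximizer already lies in the restricted region $[\varepsilon, +\infty)$, so the two maxima coincide. Fix $v$ belonging to $\{ v : \exists u \geq \varepsilon, \; v = f_i'(u)\}$ and pick $u_0 \geq \varepsilon$ with $f_i'(u_0) = v$. The object to analyse is the map $h(u) = uv - f_i(u)$ on $(0, +\infty)$, whose unconstrained supremum is $f_i^*(v)$ and whose supremum over $u \geq \varepsilon$ is $f^*_{i|\varepsilon}(v)$.

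First I would record that $h$ is concave, since $f_i$ is convex, and differentiable with $h'(u) = v - f_i'(u)$. Because $f_i'(u_0) = v$ we get $h'(u_0) = 0$, so $u_0$ is a stationary point of a concave function and therefore a global maximizer of $h$ over all of $(0, +\infty)$. This gives $f_i^*(v) = \sup_{u>0} h(u) = h(u_0)$, and in particular the unconstrained supremum is attained at such $v$ (it is a genuine maximum, not merely a supremum).

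Then I would transfer this to the constrained problem. Since $u_0 \geq \varepsilon$, the point $u_0$ is feasible for the constraint $u \geq \varepsilon$, hence $f^*_{i|\varepsilon}(v) = \max_{u \geq \varepsilon} h(u) \geq h(u_0)$. Conversely, restricting the domain can only lower the supremum, so $f^*_{i|\varepsilon}(v) \leq \sup_{u>0} h(u) = h(u_0)$. Combining the two inequalities yields $f^*_{i|\varepsilon}(v) = h(u_0) = f_i^*(v)$, which is the claim.

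There is no genuinely hard step here: the whole content is the observation that the argmax of the concave profile $h$ equals the prescribed $u_0$, which is feasible for the $\varepsilon$-constraint exactly on the stated set. The only points demanding a little care are to justify that a stationary point of the concave $h$ is a global maximizer (so that the unrestricted conjugate is finite and attained at such $v$), and to keep straight that the restricted maximum is squeezed between $h(u_0)$ and the unrestricted supremum. Note that strict convexity of $f_i$ is not needed; convexity together with $f_i'(u_0) = v$ already pins down $u_0$ as a maximizer of $h$.
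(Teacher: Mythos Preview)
Your proof is correct and follows essentially the same approach as the paper: both arguments identify the stationary point $u_0$ of the concave map $u \mapsto uv - f_i(u)$ via $f_i'(u_0)=v$ (what the paper calls Fermat's rule), note that $u_0 \geq \varepsilon$ is feasible for the constrained problem, and conclude that the constrained and unconstrained maxima coincide. You spell out the squeeze argument and the concavity reasoning a bit more explicitly than the paper, but there is no genuine difference in method.
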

\begin{proof}
For all $i = 1, \dots, n$, the functions $f_i$ are convex and differentiable.
Hence, by Fermat's rule if $\exists u^* \geq \varepsilon; \; v = f_i'(u^*)$ then
$f^*_{i|\varepsilon} (v) = \max_{u \geq \varepsilon} uv - f_i(u) = u^* f_i'(u^*) - f_i(u^*)$.
Likewise, the maximization step in the computation of $f_i^*$ would share the same maximizer and
${f_i^*} (v) = u^* f_i'(u^*) - f_i(u^*)$ as well.
Hence,
\begin{equation*}
\forall v \; \text{such that} \; \exists u \geq \varepsilon \; ; \; v = f_i'(u); \;
f^*_{i|\varepsilon} (v) = {f_i^*} (v).
\end{equation*}
\end{proof}
We form the dual of the problem constrained on $\Pi_\varepsilon(X)$ where $\varepsilon$ is such
that Lemma~\ref{lemma:equivalence-constrained-primal} applies.
We replace the inner products $w^\top x_i$ by the scalars $u_i$ for $i = 1, \ldots, n$ and
their equality is constrained to form the strictly equivalent problem:
\begin{equation*}
\min_{\substack{w \in \R^d, u \in {[\varepsilon, +\infty)}^n\\
      \forall i, u_i = w^\top x_i}}
        \psi^\top w + \frac{1}{n} \sum_{i=1}^n f_i(u_i) + \lambda g(w).
\end{equation*}
We maximize the Lagrangian to include the constraints.
This introduces the vector of dual variables $\alpha \in \R^n$ as following:
\begin{equation*}
\max_{\alpha \in \R^n} \min_{\substack{w \in \R^d \\ u \in {[\varepsilon, +\infty)}^n}}
\psi^\top w + \frac{1}{n} \sum_{i=1}^n f_i(u_i) + \lambda g(w) +
    \frac{1}{n} \sum_{i=1}^n \alpha_i (u_i - x_i^\top w)
\end{equation*}
that leads to the corresponding dual problem:
\begin{equation*}
\max_{\alpha \in (\Dfsme)^n} D_{|\varepsilon} (\alpha), \quad
    D_{|\varepsilon}(\alpha) =
    \frac{1}{n} \sum_{i=1}^n - f^*_{i|\varepsilon} (-\alpha_i)
    - \lambda g^* \bigg(\frac{1}{\lambda n} \sum_{i=1}^n \alpha_i x_i
                        - \frac{1}{\lambda} \psi \bigg),
\end{equation*}
where $\Dfsme$ is the domain of all $f^*_{i|\varepsilon}$.
The primal problem constrained on $\Pi_\varepsilon(X)$ verifies the Slater's conditions so strong
duality holds and the maximizer of $D_{|\varepsilon}$, $\alpha^*_{|\varepsilon}$, is reached.
As $D_{|\varepsilon}$ is concave, $\alpha^*_{|\varepsilon}$ is the only vector such that
$\nabla D_{|\varepsilon} (\alpha^*_{|\varepsilon}) = 0$.
Also, as strong duality holds, we can relate $\alpha^*_{|\varepsilon}$ to the primal optimum
though the Karush-Kuhn-Tucker condition
\begin{equation*}
\alpha^*_{i|\varepsilon} = - {f^*_{i|\varepsilon}}'({w^*}^\top x_i),
\end{equation*}
where $w^*$ is such that ${w^*}^\top x_i \geq \varepsilon$
(see Lemma~\ref{lemma:equivalence-constrained-primal}).
Hence Lemma~\ref{lemma:fenchel_conjugates_equality} applies and the dual formulation of the original
Problem~\eqref{eq:primal} that writes
\begin{equation*}
\max_{\alpha \in (\Dfsm)^n} D (\alpha), \quad
    D(\alpha) =
    \frac{1}{n} \sum_{i=1}^n - f_i^* (-\alpha_i)
    - \lambda g^* \bigg(\frac{1}{\lambda n} \sum_{i=1}^n \alpha_i x_i
                        - \frac{1}{\lambda} \psi \bigg),
\end{equation*}
is such that
$\nabla D(\alpha^*_{|\varepsilon}) = \nabla D_{|\varepsilon} (\alpha^*_{|\varepsilon}) = 0$.
Since $D(\alpha)$ is concave, this means that $\alpha^*_{|\varepsilon} = \alpha^*$ where $\alpha^*$
is the solution of the dual formulation of the original Problem~\eqref{eq:primal}.
Thus, the Karush-Kuhn-Tucker conditions that link the primal and dual optima of the problem
constrained on $\Pi_{|\varepsilon}(X)$ also links the primal and dual optima of the
original Problem~\eqref{eq:primal}.
The first one is given in Equation~\eqref{eq:general_primal_dual_relations_at_optimum_w} and the
second one writes
\begin{equation}
\label{eq:general_primal_dual_relations_at_optimum_alpha}
  \alpha_i^* = - {f_i^*}'({w^*}^\top x_i)
\end{equation}
for any $i \in \{1, \dots n\}$.
From the first we can define two functions linking vector $w \in \R^d$ to
$\alpha \in (\Dfsm)^{n}$ and such that $w(\alpha^*) = w^*$ and
\begin{equation}
  \label{eq:primal_dual_relation}
  v(\alpha) = \frac{1}{\lambda n} \sum_{i=1}^n \alpha_i x_i -\frac{1}{\lambda} \psi
  \quad \text{and} \quad
  w(\alpha) = \nabla g^*\big( v(\alpha) \big).
\end{equation}

% section duality (end)

\subsection{Proximal algorithm} % (fold)
\label{sec:proximal_algorithm_supp}

Given that $g^*$ is smooth since its Fenchel conjugate is strongly convex, the gradient-Lipschitz
property from Definition~\ref{def:smoothness} below entails
$g^*(v + \Delta v) \leq g^*(v) + \nabla g^*(v)^\top \Delta v + \tfrac{1}{2} \|\Delta v\|^2$.
Hence, maximization step of Algorithm~\ref{alg:general-shifted-sdca}, namely,
\begin{equation*}
\argmax_{\alpha_i \in \Dfsm}
      - f^*_i (-\alpha_i)
          - \lambda n g^* \big(v^{(t-1)} + (\lambda n)^{-1}(\alpha_i - \alpha_i^{(t-1)}) x_i
          \big),
\end{equation*}
where
$v^{(t - 1)} = \frac{1}{\lambda n} \sum_{i=1}^n \alpha_i^{(t-1)} x_i -\frac{1}{\lambda} \psi $
can be simplified by setting $\alpha_i^{t}$ such that it maximizes the lower bound
\begin{equation}
\label{eq:modified-argmax}
 \alpha_i^{t} = \argmax_{\alpha_i \in \Dfsm}
          - f^*_i (-\alpha_i)
          - \lambda n \bigg(
              g^* \big(v^{(t-1)}\big)
              + \frac{\alpha_i - \alpha_i^{(t-1)}}{\lambda n} x_i^\top \nabla g^* (v^{(t-1)})
              + \frac{1}{2} \bigg(
                  \frac{\alpha_i - \alpha_i^{(t-1)}}{\lambda n} \bigg)^2
                  \| x_i \|^2
                \bigg).
\end{equation}
Setting $w^{(t-1)} = \nabla g^* (v^{(t-1)})$ and discarding constants terms leads to the equivalent
relation,
\begin{equation*}
\alpha_i^{t} = \argmax_{\alpha_i \in \Dfsm}
      - f^*_i (-\alpha_i)
          - \frac{\lambda n}{2} \Big\| w^{(t-1)}
          - (\lambda n)^{-1} (\alpha_i - \alpha_i^{(t-1)}) x_i  \Big\|^2.
\end{equation*}
While convergence speed is guaranteed for any 1-strongly convex $g$, to simplify the algorithm
we will consider that $g$ is not only 1-strongly convex but that that it can also be decomposed as
\begin{equation*}
  g(w) = \tfrac{1}{2} \|w\|^2 + h(w)
\end{equation*}
where $h$ is a prox capable function.
With Proposition~\ref{prop:convex-conjugate-gradient} below the relation between $w^{t}$ and $v^t$
becomes
\begin{equation*}
  w^t
    = \nabla g^*(v^t)
    = \argsup_{u \in \R^d} \Big( u^\top v^t - \tfrac{1}{2} \|u\|^2 - h(u) \Big)
    = \arginf_{u \in \R^d} \Big(\tfrac{1}{2} \|v^t - u\|^2 + h(u) \Big),
\end{equation*}
which is the proximal operator stated in Definition~\ref{def:proximal-operator} below:
$w^t = \prox_{h} (v(\alpha^t))$.

\subsection{Preliminaries for the proofs} % (fold)
\label{sub:preliminaries}

Let us first recall some definitions and basic properties.

\begin{definition}
  \emph{Strong convexity.}
  A differentiable convex function $f : \Df \rightarrow \R$ is $\gamma$-strongly convex if
  \begin{equation}
    \label{eq:strong-convex-zero-order-lower-bound}
    \forall x, y \in \Df, \quad
    f(y) \geq f(x) + f' (x)^\top (y - x) + \frac{\gamma}{2} \|y - x\|^2.
  \end{equation}
  This is equivalent to
  \begin{equation}
    \label{eq:strong-convex-first-order-lower-bound}
    \forall x, y \in \Df, \quad
    (f'(y) - f'(x)) (y - x) \geq \gamma \|y - x\|^2.
  \end{equation}
\end{definition}

\begin{definition}
  \label{def:smoothness}
  \emph{Smoothness.}
  A differentiable convex function $f : \Df \rightarrow \R$ is $L$-smooth or $L$-gradient Lipschitz
  if
  \begin{equation*}
    \forall x, y \in \Df, \quad
    f(y) \leq f(x) + f'(x) (y - x) + \frac{L}{2} \|y - x\|^2.
  \end{equation*}
  This is equivalent to
  \begin{equation*}
    \forall x, y \in \Df, \quad
    (f'(y) - f'(x)) (y - x) \leq L \|y - x\|^2.
  \end{equation*}
\end{definition}

\begin{definition}
  \label{def:convex-conjugate}
  \emph{Fenchel conjugate.}
  For a convex function $f : \Df \rightarrow \R$ we call Fenchel conjugate the function $f^*$
  defined by
  \begin{equation}
  \label{eq:def-convex-conjugate}
    f^* : \Dfs \rightarrow \R, \quad \text{st.} \quad
    f^*(v) = \sup_{u \in \Df} \big( u^\top v - f(u) \big).
  \end{equation}
\end{definition}

\begin{proposition}
  \label{prop:convex-conjugate-gradient}
  For a convex differentiable function $f$, the gradient of its differentiable Fenchel conjugate
  $f^*$ is the
  maximizing argument of~\eqref{eq:def-convex-conjugate}\textup:
  \begin{equation*}
    {f^*}' (v) = \argsup_{u \in \Df} \big( u^\top v - f(u) \big).
  \end{equation*}
\end{proposition}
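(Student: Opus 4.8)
The plan is to identify the maximizing argument with a subgradient of $f^*$ and then invoke differentiability to pin it down uniquely. Write $u^*(v) = \argsup_{u \in \Df}(u^\top v - f(u))$ for the maximizer appearing in Definition~\ref{def:convex-conjugate}, so that by definition $f^*(v) = u^*(v)^\top v - f(u^*(v))$. The whole argument rests on recognizing that attaining the supremum is the same as achieving equality in the Fenchel--Young inequality, which is precisely the condition characterizing subgradients of $f^*$.

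First I would show that $u^*(v)$ belongs to the subdifferential $\partial f^*(v)$. For an arbitrary point $w$ in the domain of $f^*$, the supremum defining $f^*(w)$ dominates the value obtained at the particular point $u^*(v)$, giving
\begin{equation*}
  f^*(w) \geq u^*(v)^\top w - f(u^*(v)).
\end{equation*}
Subtracting the equality $f^*(v) = u^*(v)^\top v - f(u^*(v))$ cancels the $-f(u^*(v))$ terms and yields the subgradient inequality
\begin{equation*}
  f^*(w) \geq f^*(v) + u^*(v)^\top (w - v),
\end{equation*}
valid for every $w$. Hence $u^*(v) \in \partial f^*(v)$.

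Second, since $f^*$ is assumed differentiable at $v$, its subdifferential there is the singleton $\{ \nabla f^*(v) \}$, a standard fact of convex analysis. Combining this with the membership just established forces $\nabla f^*(v) = u^*(v)$, which is exactly the claim.

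The computation is short; the only point that needs care is justifying that a differentiable convex function has a singleton subdifferential equal to its gradient, and implicitly that the supremum in Definition~\ref{def:convex-conjugate} is attained so that $u^*(v)$ is well defined---both classical. An alternative route through the envelope (Danskin) theorem is available: at the maximizer Fermat's rule gives $v = f'(u^*(v))$, and differentiating $v \mapsto u^*(v)^\top v - f(u^*(v))$ makes the terms carrying the derivative of $u^*$ cancel against this stationarity condition. However, that route demands differentiability of $v \mapsto u^*(v)$, so I would prefer the subgradient argument, which reaches the conclusion without any such extra regularity assumption.
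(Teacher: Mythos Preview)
Your argument is correct: showing that the maximizer lies in $\partial f^*(v)$ via the Fenchel--Young inequality and then collapsing the subdifferential to a singleton by differentiability is the standard and cleanest route. Note, however, that the paper does not actually prove this proposition; it appears in Section~\ref{sub:preliminaries} among a list of recalled definitions and basic facts from convex analysis, stated without proof. So there is no ``paper's own proof'' to compare against---you have supplied a complete justification where the paper simply cites a classical result.
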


\begin{proposition}
  \label{prop:convex-conjugate-reciprocity}
  For a convex differentiable function $f$ and its differentiable Fenchel conjugate $f^*$ we have
  \begin{equation*}
    \forall u \in \Df, {f^*}'(f'(u)) = u
    \quad \text{and} \quad
    \forall v \in \Dfs, f'({f^*}'(v)) = v.
  \end{equation*}
  This leads to
  \begin{equation*}
  \forall u \in \Df, \forall v \in \Dfs, \quad
  f'(u) = v \Leftrightarrow u = {f^*}'(v).
  \end{equation*}
\end{proposition}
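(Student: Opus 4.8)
The plan is to derive both identities directly from Proposition~\ref{prop:convex-conjugate-gradient}, which characterizes ${f^*}'(v)$ as the maximizing argument $\argsup_{u \in \Df}\big(u^\top v - f(u)\big)$, and then to obtain the equivalence as a purely formal consequence of the two identities.

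First I would establish the second identity, $f'({f^*}'(v)) = v$ for all $v \in \Dfs$. Fix $v \in \Dfs$ and set $u = {f^*}'(v)$, so that by Proposition~\ref{prop:convex-conjugate-gradient} the point $u$ maximizes the concave differentiable map $z \mapsto z^\top v - f(z)$ over $\Df$. Since $\Df$ is open, this maximizer is interior and Fermat's rule applies: the derivative $v - f'(u)$ must vanish at $u$, which gives $f'(u) = v$, that is, $f'({f^*}'(v)) = v$.

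For the first identity, $\forall u \in \Df,\ {f^*}'(f'(u)) = u$, I would fix $u \in \Df$ and set $v = f'(u)$. The map $z \mapsto z^\top v - f(z)$ is concave, and $u$ is a critical point of it since $v - f'(u) = 0$ by construction; a critical point of a concave function is a global maximizer, so $u$ attains the supremum. Proposition~\ref{prop:convex-conjugate-gradient} then identifies ${f^*}'(v)$ with this maximizer, giving ${f^*}'(f'(u)) = u$. The stated equivalence follows immediately: if $f'(u) = v$, applying ${f^*}'$ to both sides and invoking the first identity yields $u = {f^*}'(f'(u)) = {f^*}'(v)$; conversely, if $u = {f^*}'(v)$, applying $f'$ and invoking the second identity yields $f'(u) = f'({f^*}'(v)) = v$.

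The main obstacle to watch is justifying that the supremum defining the conjugate is actually attained at a unique interior point, so that Fermat's rule is legitimate and the notation ${f^*}'(v) = \argsup(\cdots)$ is unambiguous. This is guaranteed in the present setting because $\Df = (0,+\infty)$ is open and $f$ is strictly monotone and convex, hence its derivative $f'$ is a strictly monotone bijection onto $\Dfs$; this makes the concave objective admit a unique interior maximizer and simultaneously shows $f'$ and ${f^*}'$ are genuine inverse maps, which is exactly what the equivalence records.
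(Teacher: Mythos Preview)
The paper does not actually prove Proposition~\ref{prop:convex-conjugate-reciprocity}; it is listed in Section~\ref{sub:preliminaries} among ``definitions and basic properties'' that are recalled without proof. Your argument is a correct and standard derivation: both identities follow from Proposition~\ref{prop:convex-conjugate-gradient} via Fermat's rule on the concave objective $z \mapsto zv - f(z)$ over the open domain $\Df$, and the equivalence is then immediate. Your closing remark about uniqueness of the maximizer (needed so that $\argsup$ is well-defined and ${f^*}'$ is a genuine function) is the only delicate point, and you handle it correctly by invoking strict monotonicity of $f'$, which holds in the paper's setting.
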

Note that if $f$ is $\gamma$-strongly convex (respectively $L$-smooth), then its Fenchel conjugate
$f^*$ is $1 /\gamma$ smooth (respectively $1 / L$ strongly convex).
We also recall results from \cite{nesterov1994interior} on self-concordant functions introduced in
Definition~\ref{def:self-concordance}.
This concept is widely used to study losses involving logarithms.
For the sake of clarity, the results will be presented for functions whose domain $\Df$ is a
subset of $\R$ as this leads to lighter notations.
Unlike smoothness and strong convexity, this property is affine invariant.
From this definition, some inequalities are derived in \cite{nesterov1994interior}.
Two of them provide lower bounds that are comparable to strong convexity inequalities:
\begin{equation}
  \label{eq:self-concordant-zero-order-lower-bound}
  \forall x, y \in \Df, \quad
  f(y) \geq f(x) + f' (x) (y - x) + \omega \big( \sqrt{f''(x)} |y - x|\big)
\end{equation}
where $\omega(t) = t - \log (1 + t)$, and
\begin{equation}
    \label{eq:self-concordant-first-order-lower-bound}
  \forall x, y \in \Df, \quad
  (f'(y) - f'(x))^\top (y - x) \geq \frac{f''(x) (y - x)^2}{1 + \sqrt{f''(x)} |y - x|}.
\end{equation}
Finally, we define the proximal operator used to apply the penalization $g$.
\begin{definition}
  \label{def:proximal-operator}
  \emph{Proximal operator.}
  For a convex function $g : \Dg \rightarrow \R$, the proximal operator associated to $g$ is given
  by
  \begin{equation*}
  \prox_g (y) = \argmin_{x \in \Dg} \Big( \frac{1}{2} \|y - x\|^2 + g(x) \Big).
  \end{equation*}
\end{definition}
The proximal operator always exists and is uniquely defined as the minimizer of a strongly convex
function.
Before the proof of Theorem~\ref{th:general-convergence}, we need to introduce new convex
inequalities for $L$-$\log$ smooth functions.
This class of function includes $x \mapsto - L \log x$ which is our function of interest in the
Poisson and in the Hawkes cases.
% These inequalities are built on the model of self-concordant inequalities from
% \cite{nesterov1994interior}.

% section proof_of_theorem_th:convergence_convergence_proof (end)

\subsection{Proof of Proposition~\ref{prop:log-smooth-second-order}} % (fold)
\label{sub:proof_of_proposition_prop:log-smooth-second-order}

\paragraph{First order implies second order}

We start by showing that if $f$ is a L $\log$-smooth function then we can bound its second
derivative by the square of its gradient.
For any $x \in \Df$, we set $y = x + h$ in the Definition~\ref{hyp:new-gradient-assumption}, which
now writes
\begin{equation*}
    \forall x \in \Df, \;
    \forall h \text{ s.t. } (x + h) \in \Df, \;
    \Big| \frac{f'(x) - f'(x + h)}{h} \Big| \leq \frac{1}{L} f'(x) f'(x + h).
\end{equation*}
Taking the limit of the previous inequality when $h$ tends towards $0$ leads to the desired
inequality,
\begin{equation*}
    \forall x \in \Df, \;
    \big| f''(x) \big| \leq \frac{1}{L} f'(x)^2.
\end{equation*}

\paragraph{Second order implies first order}

We now prove that if $f$ is convex strictly monotone, twice differentiable and
$|f''(x)| \leq \frac{1}{L} f'(x)^2$ then $f$
is $L$-$\log$ smooth.
If for all $x \in \Df$, we denote by $\phi : x \mapsto \frac{1}{f'(x)}$, (note that
$\forall x \in \Df, f'(x) \neq 0$ as $f$ is strictly monotone),
then
\begin{equation*}
\forall x \in \Df, \; | \phi'(x) | = \Big| \frac{f''(x)}{f'(x)^2} \Big| \leq \frac{1}{L}.
\end{equation*}
From this inequality, we limit the increasings of the function $\phi$,
\begin{equation*}
\forall x, y \in \Df, \;
- \tfrac{1}{L} |y - x| \leq \phi(y) - \phi(x) \leq \tfrac{1}{L}|y - x|,
\end{equation*}
which rewrites
\begin{equation*}
\forall x, y \in \Df, \;
|\phi(y) - \phi(x) | \leq \tfrac{1}{L} |y - x|
\; \Leftrightarrow \;
\Big|\frac{f'(x) - f'(y)}{f'(x) f'(y)} \Big| \leq \frac{1}{L} |x - y|,
\end{equation*}
that is the definition of a $L$-$\log$ smooth function for a convex strictly monotone function.

% subsection proof_of_proposition_prop:log-smooth-second-order (end)

\subsection{Proof of Proposition~\ref{prop:hyp_dual_hessian_equivalence}} % (fold)
\label{sec:proof_of_proposition_}

We working by exhibiting several statements equivalent to $\log$ smoothness.
First, we divide both sides of the $\log$ smoothness definition by $f'(x)f'(y) > 0$ since $f$ is
strictly monotone,
% \begin{equation*}
%   \Leftrightarrow \quad
%   \forall x, y \in \Dfs,\quad
%    | f'(x) - f'(y) | \leq \frac{1}{L} f'(x) f'(y) | x - y |
% \end{equation*}
\begin{equation*}
  \text{$f$ is $L$-$\log$ smooth}
  \quad \Leftrightarrow \quad
  \forall x, y \in \Df, \;
   \bigg| \frac{1}{f'(y)} - \frac{1}{f'(x)} \bigg| \leq \frac{1}{L} |x - y|.
\end{equation*}
Then we rewrite the equation in the dual space using
Proposition~\ref{prop:convex-conjugate-reciprocity},
\begin{equation*}
  \text{$f$ is $L$-$\log$ smooth}
  \quad \Leftrightarrow \quad
  \forall x, y \in \Dfs, \;
   \bigg| \frac{1}{y} - \frac{1}{x} \bigg|  \leq \frac{1}{L} |{f^*}'(x) - {f^*}'(y)|.
\end{equation*}
This can be rewritten into the following integrated form
\begin{equation*}
  \text{$f$ is $L$-$\log$ smooth}
  \quad \Leftrightarrow \quad
  \forall x, y \in \Dfs, \;
   \Big| \int_y^x t^{-2} dt \Big|  \leq \frac{1}{L} \Big| \int_y^x ({f^*}''(t) dt \Big|,
\end{equation*}
which becomes equivalent to the desired result with the fundamental theorem of calculus
\begin{equation*}
\text{$f$ is $L$-$\log$ smooth}
  \quad \Leftrightarrow \quad
\forall x \in \Dfs, \; x^{-2} \leq  \frac{1}{L} {f^*}''(x).
\end{equation*}

\subsection{Inequalities for log-smooth functions} % (fold)
\label{sub:convex_inequality}

The proof of SDCA \cite{shalev2013stochastic} relies on the smoothness of the functions $f_i$
which implies strong convexity of their Fenchel conjugates $f_i^*$.
Indeed, a $\gamma$ strongly convex function $f^*$ satisfies the following inequality
\begin{equation}
  \label{eq:strongly-convex-barycentre-inequality}
      s f^*(x) + (1-s) f^*(y)
      \geq f^*(s x + (1 - s) y) + \frac{\gamma}{2} s (1 - s) (y - x)^2.
\end{equation}
This inequality is not satisfied for $L$-$\log$ smooth functions.
However, we can derive for such functions another inequality which can be compared to
such inequalities based on self-concordance and strongly convex properties.
\begin{lemma}
  \label{lemma:first_order_inequality}
  Let $f : \Df \subset \R \rightarrow \R$ be a strictly monotone convex function and $f^*$ be its
  differentiable Fenchel conjugate.
  Then,
  \begin{equation*}
    \text{$f$ is $L$-$\log$ smooth}
    \quad \Leftrightarrow \quad
    \forall x, y \in \Dfs, \;
   ( {f^*}'(x) - {f^*}'(y))(x - y) \geq L \frac{(x - y)^2}{xy} .
  \end{equation*}
  This bound is an equality for $f(x) = -L \log x$.
\end{lemma}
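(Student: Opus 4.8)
The plan is to reduce the claimed inequality to the first-order dual characterisation of $\log$ smoothness that already appears \emph{inside} the proof of Proposition~\ref{prop:hyp_dual_hessian_equivalence}, and then convert that absolute-value inequality into the stated quadratic form by tracking signs. Concretely, the step of that proof obtained \emph{before} any integration, namely
\begin{equation*}
  \text{$f$ is $L$-$\log$ smooth}
  \quad \Leftrightarrow \quad
  \forall x, y \in \Dfs, \;
  \Big| \tfrac{1}{y} - \tfrac{1}{x} \Big| \leq \tfrac{1}{L} \big| {f^*}'(x) - {f^*}'(y) \big|,
\end{equation*}
uses only the reciprocity relation ${f^*}'(f'(u)) = u$ and hence holds under the present weaker hypothesis that $f^*$ is merely differentiable. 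I would take this equivalence as the starting point, rather than the second-order form, precisely because the lemma does not assume $f^*$ twice differentiable.

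Two sign observations then drive everything. First, since $f$ is strictly monotone its derivative $f'$ has constant sign, so $\Dfs$ — being the range of $f'$ — lies entirely in $(0,+\infty)$ or entirely in $(-\infty,0)$; in either case $xy > 0$ for $x,y \in \Dfs$, which lets me write $\big|\tfrac{1}{y}-\tfrac{1}{x}\big| = |x-y|/(xy)$. Second, $f^*$ is convex as the Fenchel conjugate of a convex function, so ${f^*}'$ is non-decreasing and the factors ${f^*}'(x)-{f^*}'(y)$ and $x-y$ always share the same sign, giving
\begin{equation*}
  \big( {f^*}'(x) - {f^*}'(y) \big)(x - y) = \big| {f^*}'(x) - {f^*}'(y) \big| \, |x - y| \geq 0.
\end{equation*}

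For the forward direction I would multiply the dual first-order inequality by $|x-y| \geq 0$ and substitute both identities above, landing exactly on $({f^*}'(x)-{f^*}'(y))(x-y) \geq L(x-y)^2/(xy)$. The converse runs the same manipulation backwards: using $xy>0$ and the signed-product identity, the stated inequality becomes $\big|{f^*}'(x)-{f^*}'(y)\big|\,|x-y| \geq L|x-y|^2/(xy)$, and dividing by $|x-y|>0$ (the case $x=y$ being trivial) recovers the first-order dual characterisation, hence $\log$ smoothness.

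The only genuine subtlety — not a real obstacle — is the passage between the unsigned $\big|{f^*}'(x)-{f^*}'(y)\big|$ and the signed product $({f^*}'(x)-{f^*}'(y))(x-y)$: this is exactly where convexity of $f^*$ must be invoked explicitly, and it is the single place where the division by $|x-y|$ has to be justified with the diagonal $x=y$ treated separately. Finally, for the sharpness claim I would take $f(x)=-L\log x$, so that $f'(u)=-L/u$ yields ${f^*}'(v)=-L/v$ by Proposition~\ref{prop:convex-conjugate-reciprocity}; then ${f^*}'(x)-{f^*}'(y)=L(x-y)/(xy)$, whence $({f^*}'(x)-{f^*}'(y))(x-y)=L(x-y)^2/(xy)$ and the inequality holds with equality.
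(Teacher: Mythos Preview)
Your proof is correct and follows essentially the same route as the paper's: both arguments use strict monotonicity to ensure $xy>0$ (equivalently $f'(x)f'(y)>0$), invoke convexity to replace the absolute value $|{f^*}'(x)-{f^*}'(y)|\,|x-y|$ by the signed product, and then multiply through by $|x-y|$ to pass from the first-order dual characterisation to the quadratic form. The only cosmetic difference is that the paper starts directly from Definition~\ref{hyp:new-gradient-assumption} in the primal and passes to the dual via Proposition~\ref{prop:convex-conjugate-reciprocity}, whereas you recycle the already-dualised intermediate step from the proof of Proposition~\ref{prop:hyp_dual_hessian_equivalence}; your treatment of the converse and the equality case is also a touch more explicit.
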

\begin{proof}
  From $\log$ smoothness definition, we obtain by multiplying both sides by $|f'(x) - f'(y)|$ and
  dividing by $f'(x)f'(y) > 0$ (since $f$ is strictly monotone),
  \begin{equation*}
  \text{$f$ is $L$-$\log$ smooth}
  \quad \Leftrightarrow \quad
  \forall x, y \in \Df, \;
  \frac{\big(f'(x) - f'(y) \big)^2}{f'(x)f'(y)} \leq \frac{1}{L} | x - y | | f'(x) - f'(y) |
  \end{equation*}
  Since $f$ is a convex function, $(f'(x) - f'(y))(x - y) \geq 0$ and using Proposition~
  \ref{prop:convex-conjugate-reciprocity}, we can rewrite the previous equivalence
  in the dual space:
  \begin{equation*}
  \text{$f$ is $L$-$\log$ smooth}
  \quad \Leftrightarrow \quad
  \forall x, y \in \Dfs, \;
  \frac{(x - y)^2}{xy} \leq \frac{1}{L} ({f^*}'(x) - {f^*}'(y)) (x - y),
  \end{equation*}
  which concludes the proof.
\end{proof}

\begin{lemma}
  \label{lemma:zero_order_inequality}
  Let $f : \Df \subset \R \rightarrow \R$ be a strictly monotone convex function and $f^*$ be its
  differentiable Fenchel conjugate.
  Then,
  \begin{equation*}
    \text{$f$ is $L$-$\log$ smooth}
    \quad \Leftrightarrow \quad
      \forall x, y \in \Dfs, \;
    {f^*}(x) - {f^*}(y) - {f^*}'(y)(x - y)
    \geq L \Big( \frac{x}{y} - 1 - \log \frac{x}{y} \Big).
  \end{equation*}
  This bound is an equality for $f(x) = - L \log x$.
\end{lemma}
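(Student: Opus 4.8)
The plan is to prove both implications by recognizing the left-hand side as the Bregman divergence $D(x,y) := f^*(x)-f^*(y)-{f^*}'(y)(x-y)$ of $f^*$ and relating it to the first-order inequality of Lemma~\ref{lemma:first_order_inequality}, which is already known to be equivalent to $L$-$\log$ smoothness. Throughout I use that, since $f$ is strictly monotone, $f'$ keeps a constant nonzero sign, so its range $\Dfs$ is an interval avoiding $0$; hence any $x,y\in\Dfs$ have the same sign, $xy>0$, and $\log(x/y)$ is well defined.

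For the forward direction (assuming $\log$ smoothness) I would write the Bregman divergence as $D(x,y)=\int_y^x ({f^*}'(t)-{f^*}'(y))\,dt$ and bound the integrand pointwise. Lemma~\ref{lemma:first_order_inequality} gives $({f^*}'(t)-{f^*}'(y))(t-y)\geq L (t-y)^2/(ty)$; dividing by $(t-y)$ yields ${f^*}'(t)-{f^*}'(y)\geq L(\tfrac1y-\tfrac1t)$ when $t>y$ and the reverse inequality when $t<y$. In either case the sign of $(t-y)$ is exactly the one that preserves the bound under integration, so $D(x,y)\geq L\int_y^x(\tfrac1y-\tfrac1t)\,dt = L\big(\tfrac{x-y}{y}-\log\tfrac{x}{y}\big)=L\big(\tfrac{x}{y}-1-\log\tfrac{x}{y}\big)$, which is the claim. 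The only subtlety here is the bookkeeping of the two orderings $x>y$ and $x<y$, since the pointwise bound flips direction precisely when $(t-y)$ changes sign; verifying that the conclusion is identical in both cases is the most delicate part of this half.

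For the reverse direction I would avoid assuming twice-differentiability (the lemma only posits a differentiable $f^*$) and instead symmetrize: writing the assumed inequality once for the pair $(x,y)$ and once for $(y,x)$ and adding them, the terms $\log(x/y)$ and $\log(y/x)$ cancel, the two left-hand sides combine into $({f^*}'(x)-{f^*}'(y))(x-y)$, and the right-hand side becomes $L\big(\tfrac{x}{y}+\tfrac{y}{x}-2\big)=L(x-y)^2/(xy)$. This is exactly the first-order inequality of Lemma~\ref{lemma:first_order_inequality}, hence $f$ is $L$-$\log$ smooth. I expect this cancellation to be the cleanest step; the main care is again the sign and domain conventions ensuring $xy>0$.

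Finally, the equality for $f(x)=-L\log x$ is immediate: Lemma~\ref{lemma:first_order_inequality} is stated to be an equality in this canonical case, so the pointwise bound integrated in the forward direction holds with equality for every $t$, and therefore $D(x,y)$ equals the right-hand side exactly.
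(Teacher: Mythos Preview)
Your forward direction is exactly the paper's argument: write the Bregman divergence as $\int_y^x({f^*}'(u)-{f^*}'(y))\,du$, bound the integrand using the first-order inequality of Lemma~\ref{lemma:first_order_inequality}, and compute $\int_y^x L\,\tfrac{u-y}{uy}\,du = L\big(\tfrac{x}{y}-1-\log\tfrac{x}{y}\big)$. Your care with the two orderings $x>y$ and $x<y$ is the same bookkeeping the paper leaves implicit.

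Your reverse direction is different and, in a sense, cleaner. The paper asserts the equivalence of the pointwise first-order inequality and its integrated form ``by the fundamental theorem of calculus''; the nontrivial half of that (integrated $\Rightarrow$ pointwise) is stated but not detailed. You instead add the assumed inequality for $(x,y)$ and for $(y,x)$: the Bregman terms combine to $({f^*}'(x)-{f^*}'(y))(x-y)$, the logarithms cancel, and the right-hand side becomes $L\big(\tfrac{x}{y}+\tfrac{y}{x}-2\big)=L\,(x-y)^2/(xy)$, which is precisely Lemma~\ref{lemma:first_order_inequality}. This symmetrization needs only differentiability of $f^*$ (no second derivative, no differentiation under an inequality) and makes the reduction to Lemma~\ref{lemma:first_order_inequality} fully explicit, at the cost of one extra line. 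The equality case for $f(x)=-L\log x$ is handled identically in both approaches.
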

\begin{proof}
Let $x, y \in \Dfs$, we have the following on the one hand,
\begin{equation*}
{f^*}(x) - {f^*}(y) - {f^*}'(y)(x - y)
= \int_y^x \big( {f^*}'(u) - {f^*}'(y) \big) \dif u
\end{equation*}
On the other hand, applying Lemma~\ref{lemma:first_order_inequality} together with the fundamental
theorem of calculus gives
\begin{equation*}
\text{$f$ is $L$-$\log$ smooth}
  \quad \Leftrightarrow \quad
  \forall x, y \in \Dfs, \;
\int_y^x ( {f^*}'(u) - {f^*}'(y)) \dif u
  \geq \int_y^x L \; \frac{u - y}{uy} \dif u.
\end{equation*}
Finally, solving the integral leads to the desired result:
\begin{equation*}
\forall x, y \in \Dfs, \;
\int_y^x L \; \frac{u - y}{uy} \dif u
  =  L \int_y^x \bigg( \frac{1}{y} - \frac{1}{u }\bigg) \dif u
  = L \bigg( \frac{x - y}{y} - \log \frac{x}{y} \bigg).
\end{equation*}
\end{proof}

\begin{lemma}
  \label{lemma:barycentre_inequality}
  Assume that $f$ is $L$-$\log$ smooth and $f^*$ is its differentiable Fenchel conjugate,
  then,
  \begin{equation*}
    s {f^*}(x) + (1 - s) {f^*}(y) - {f^*}(y + s (x - y))
    \geq L \bigg( \log \Big(1 - s + s \frac{x}{y} \Big) - s \log \frac{x}{y} \bigg)
  \end{equation*}
  for any $y, x \in \Dfs$ and $s \in [0, 1]$.
  This bound is an equality for $f(x) = - L \log x$.
\end{lemma}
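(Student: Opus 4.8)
The plan is to reduce this barycentric inequality to the zero-order Bregman inequality already established in Lemma~\ref{lemma:zero_order_inequality}. The key idea is to apply that inequality twice, both times anchored at the barycentre point $z := y + s(x-y) = (1-s)y + sx$, once with running point $x$ and once with running point $y$, and then take the convex combination of the two bounds with weights $s$ and $1-s$. Note first that $z \in \Dfs$ since $\Dfs \subset (-\infty, 0)$ is an interval and $z$ is a convex combination of $x, y \in \Dfs$, so the inequalities apply at $z$.

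Concretely, I would write the left-hand side as $s\,\bigl[f^*(x) - f^*(z)\bigr] + (1-s)\,\bigl[f^*(y) - f^*(z)\bigr]$, which is valid because the weights sum to one. Applying Lemma~\ref{lemma:zero_order_inequality} with base point $z$ gives $f^*(x) - f^*(z) \geq {f^*}'(z)(x-z) + L\bigl(\tfrac{x}{z} - 1 - \log\tfrac{x}{z}\bigr)$ and the analogous bound with $y$ in place of $x$. Forming the $s$-weighted combination, the gradient contributions collect into ${f^*}'(z)\,\bigl[s(x-z) + (1-s)(y-z)\bigr]$, which vanishes precisely because $z = sx + (1-s)y$ is the barycentre. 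For the same reason the affine pieces cancel, since $s\bigl(\tfrac{x}{z}-1\bigr) + (1-s)\bigl(\tfrac{y}{z}-1\bigr) = \tfrac{sx+(1-s)y}{z} - 1 = 0$.

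After these cancellations, what survives is the purely logarithmic lower bound $-L\bigl[s\log\tfrac{x}{z} + (1-s)\log\tfrac{y}{z}\bigr]$. The final step is routine bookkeeping with logarithms: writing $z = y\bigl(1 - s + s\tfrac{x}{y}\bigr)$ so that $\log z = \log y + \log\bigl(1 - s + s\tfrac{x}{y}\bigr)$, one checks that $\log z - s\log x - (1-s)\log y = \log\bigl(1 - s + s\tfrac{x}{y}\bigr) - s\log\tfrac{x}{y}$. Multiplying by $L$ yields exactly the claimed right-hand side, completing the bound.

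I do not expect any serious obstacle: the only genuine insight is to anchor the Bregman inequality at the barycentre $z$ rather than at $x$ or $y$, which is what forces both the first-order term and the affine term to disappear and isolates the logarithmic remainder; everything else is elementary algebra. The equality claim for $f(x) = -L\log x$ is then immediate, since Lemma~\ref{lemma:zero_order_inequality} holds with equality for that function and equalities are preserved under the convex combination used above.
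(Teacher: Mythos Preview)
Your proposal is correct and follows essentially the same route as the paper: anchor Lemma~\ref{lemma:zero_order_inequality} at the barycentre $z = u(s) = (1-s)y + sx$, apply it twice (with running points $x$ and $y$), combine with weights $s$ and $1-s$ so that the gradient and affine terms cancel, and simplify the remaining logarithms. One minor cosmetic point: since $x, y, z \in \Dfs \subset (-\infty,0)$, the intermediate step ``$\log z = \log y + \log(1 - s + s\tfrac{x}{y})$'' should be phrased in terms of the ratio $\tfrac{z}{y} = 1 - s + s\tfrac{x}{y}$ rather than $\log z$ and $\log y$ separately, but your final identity is correct either way.
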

\begin{proof}
Let $x, y \in \Dfs$ and define for any $s \in [0, 1]$,
$ u(s) = s x + (1 - s) y$.
We apply Lemma~\ref{lemma:zero_order_inequality} twice for $x$, $u(s)$ and $y$, $u(s)$:
\begin{equation}
  \label{eq:zero-order-in-x}
  f^*(x) - f^*(u(s)) - {f^*}'(u(s))(x - u(s))
  \geq L \Big( \frac{x}{u(s)} - 1 - \log \frac{x}{u(s)} \Big),
\end{equation}
\begin{equation}
  \label{eq:zero-order-in-y}
  f^*(y) - f^*(u(s)) - {f^*}'(u(s))(y - u(s))
  \geq L \Big( \frac{y}{u(s)} - 1 - \log \frac{y}{u(s)} \Big).
\end{equation}
Combining $s$\eqref{eq:zero-order-in-x} and $(1 - s)$\eqref{eq:zero-order-in-y} leads to
\begin{align*} s f^*(x) + (1 - s) f^*(y) - f^*(u(s))
    &\geq - s L \log \frac{x}{u(s)} - (1 - s) L \log \frac{y}{u(s)} \\
    & \qquad + L\bigg( s\frac{x}{u(s)} + (1 - s) \frac{y}{u(s)} - 1 \bigg) \\
    &= sL \log \frac{u(s)}{x} + (1 - s) L \log \frac{u(s)}{y}\\
    % &= s \log \frac{u(s)}{x} + (1 - s) \log \frac{u(s)}{x}  + (1 - s) \log \frac{x}{y} \\
    % &= \log \Big(s + (1 - s)\frac{y}{x} \Big) + (1 - s) \log \frac{x}{y}. \\
    &= s L \log \frac{u(s)}{y} + s L \log \frac{y}{x}  + (1 - s) L \log \frac{u(s)}{y}  \\
    &= L \log \Big(1 - s + s\frac{x}{y} \Big) + s L \log \frac{y}{x}. \qedhere
\end{align*}
\end{proof}
This Lemma which implies the barycenter $u(s) = y + s (x - y)$ for
$s \in [0, 1]$ is the lower bound that we actually use in proof of
Theorem~\ref{th:general-convergence}.
To compare this result with strong convexity and self-concordance assumptions, we will suppose that
$f^*$ is twice differentiable and hence that Proposition~\ref{prop:hyp_dual_hessian_equivalence}
applies.

\paragraph{Comparison with self-concordant functions} % (fold)
\label{par:comparison_with_self_concordant_functions}

Instead of building our lower bounds on $\log$ smoothness, we rather exhibit what can be obtained
with self-concordance combined with the lower bound ${{f^*}''(y) \geq L y^{-2}}$ from
Proposition~\ref{prop:hyp_dual_hessian_equivalence}.
In this paragraph, we consider that $\frac{1}{L} f^*$ is standard self-concordant, such an
hypothesis is verified for $f : t \mapsto - \frac{1}{L} \log(t)$.
Hence, using lower bound~\eqref{eq:self-concordant-first-order-lower-bound} on $\frac{1}{L} f$ and
then Proposition~\ref{prop:hyp_dual_hessian_equivalence}, we obtain
\begin{equation*}
  \forall x, y \in \Dfs, ~
  ({f^*}'(x) - {f^*}'(y))(x - y)
      \geq \frac{{f^*}''(y)(x - y)^2}{1 + \sqrt{\frac{1}{L} {f^*}''(y)}|x - y|}
      \geq L \frac{(x - y)^2}{y^2 + |y (x - y)|}.
\end{equation*}
Since $\forall x, y \in \Dfs$, $xy > 0$, this lower bound is equivalent to
Lemma~\ref{lemma:first_order_inequality} if $|x| \geq |y|$ but not as good otherwise.
Lemma~\ref{lemma:zero_order_inequality} can also be compared to what can be obtained applying
Inequality~\eqref{eq:self-concordant-zero-order-lower-bound} on $\frac{1}{L} f$.
Since $\omega : t \mapsto t - \log(1 + t)$ is an increasing function, it leads to
\begin{equation*}
  \forall x, y \in \Dfs, ~
  f^*(x) - f^*(y) - {f^*}'(y) (x - y)
       \geq L \; \omega \Big( \sqrt{\tfrac{1}{L} {f^*}''(y)} \; |x - y|\Big)
       \geq L \; \omega \Big( \big| \tfrac{x}{y} - 1 \big| \Big).
\end{equation*}
Again, this lower bound the same as Lemma~\ref{lemma:zero_order_inequality} if $|x| \geq |y|$
but not as good otherwise.
Finally, a bound equivalent to Lemma~\ref{lemma:barycentre_inequality} for self-concordant functions
is not easy to explicit in a clear form.
However, it is numerically smaller than the lower bound stated in
Lemma~\ref{lemma:barycentre_inequality} for
any $s \in [0, 1]$ and any $x, y \in \Dfs$.

% paragraph comparison_with_self_concordant_functions (end)

\paragraph{Comparison with strongly convex functions} % (fold)
\label{par:comparison_with_strongly_convex_functions}

We cannot directly assume that $f^*$ is strongly convex as it would mean that $f$ is
gradient-Lipschitz.
But, for fixed values of $x$ and $y$ $\in \Dfs$, we define on
$\{u \in \Dfs, |u| < \max(|x|, |y|) \}$ the function $f^*_{\{x, y\}} : u \mapsto f^*(u)$
as the restriction of $f^*$ on this interval.
The lower bound ${{f^*}''(y) \geq L y^{-2}}$ from
Proposition~\ref{prop:hyp_dual_hessian_equivalence} implies that $f^*_{\{x, y\}}$ is
$L / \max(x^2, y^2)$ strongly-convex on its domain to which $x$ and $y$ belong.
Equation~\eqref{eq:strong-convex-first-order-lower-bound} leads to the following inequality, valid
for $f^*_{\{x, y\}}$ and thus for $f^*$,
\begin{equation*}
  \forall x, y \in \Dfs, ~
  ({f^*}'(x) - {f^*}'(y))(x - y) \geq L \frac{(x - y)^2}{\max(x^2, y^2)}.
\end{equation*}
As soon as $x \neq y$, this lower bound is not as good as the one provided by
Lemma~\ref{lemma:first_order_inequality}.
Following the same logic, we exhibit the two following lower bounds.
The first one corresponds to
Lemma~\ref{lemma:zero_order_inequality} and is entailed by
Equation~\eqref{eq:strong-convex-zero-order-lower-bound},
\begin{equation}
\forall x, y \in \Dfs, ~
f^*(x) - f^*(y) - {f^*}'(y) (x - y) \geq \frac{L}{2} \frac{(x - y)^2}{\max(x^2, y^2)},
\end{equation}
and the second to Lemma~\ref{lemma:barycentre_inequality} and is entailed by
Equation~\eqref{eq:strongly-convex-barycentre-inequality}
\begin{equation*}
\forall x, y \in \Dfs, ~
\forall s \in [0, 1], ~
s {f^*}(x) + (1 - s) {f^*}(y) - {f^*}(y + s (x - y))
    \geq s (1 - s) \frac{L}{2} \frac{(x - y)^2}{\max(x^2, y^2)}.
\end{equation*}
In both cases the reached lower bounds are not as tight as the ones stood in
Lemmas~\ref{lemma:zero_order_inequality} and~\ref{lemma:barycentre_inequality}.
All these bounds are reported in Table~\ref{tab:convex-comparison} for an easy comparison.

\begin{table}
  \small
  \centering
  \begin{tabular*}{\textwidth}{c@{\extracolsep{\fill}}ccc}
    \toprule
    & strongly convex & self-concordant & $\log$ smoothness \\
    \midrule
    % $f'(y) - f'(x) (y - x)$
    Lemma~\ref{lemma:first_order_inequality}
        & $\frac{(x - y)^2}{\max(x^2, y^2)}$
        & $\frac{(x - y)^2}{y^2 + |y (x - y)|}$
        & $\frac{(x - y)^2}{xy}$ \\
    %$f(y) - f(x) - f'(x)(y - x)$
    Lemma~\ref{lemma:zero_order_inequality}
        & $\frac{(x - y)^2}{2 \max(x^2, y^2)}$
        & $\big| \tfrac{x}{y} - 1 \big| - \log \big( 1 + \big| \tfrac{x}{y} - 1 \big| \big) $
        & $\frac{x}{y} - 1 - \log \big(\frac{x}{y} \big)$ \\
    %$(1 - s) f(y) + s f(x) - f(u(s))$
    %Barycenter $u(s)$
    Lemma~\ref{lemma:barycentre_inequality}
        & $s (1 - s) \frac{(x - y)^2}{2 \max(x^2, y^2)}$
        & $-$
        & $\log \big(1 - s + s \frac{x}{y} \big) + s \log \frac{y}{x}$\\
    % $\lim_{s\rightarrow 0} \frac{u(s)}{s}$
    %     & $\frac{(y - x)^2}{2 \max(x, y)^2}$
    %     & $\frac{(1 - \log(2)) (y - x)^2}{\max(x, y)^2}$
    %     & $-1 + \frac{x}{y} + \log \big( \frac{y}{x} \big)$\\
    Reached for $f = - \log$
        & \xmark & \xmark & \cmark \\
    \bottomrule
  \end{tabular*}
  \caption{Comparison of lower bounds obtained with different hypotheses.
    These lower bounds come from Lemmas~\ref{lemma:first_order_inequality},
    \ref{lemma:zero_order_inequality} and \ref{lemma:barycentre_inequality}.
    It shows that both the strongly-convex and self-concordant hypotheses are not enough to reach
    the inequality obtained under $\log$ smoothness.
    The inequality coming from Lemma~\ref{lemma:barycentre_inequality} is missing as it cannot be
    easily exhibited for self-concordant functions.}
  \label{tab:convex-comparison}
\end{table}

% paragraph comparison_with_strongly_convex_functions (end)

$~$\newline
Finally, two lemmas to lower bound Lemma~\ref{lemma:barycentre_inequality} are needed as well.

\begin{lemma}
\label{lemma:barycentre-lower-bound-ratio2-increasing}
The function $f$ defined by
\begin{equation*}
  f(s, z) = \frac{\log \big( (1 - s) + \frac{s}{z} \big) + s \log z}{(1 - z)^2}
\end{equation*}
for all $z \in \R^{++}$ and $s \in [0, 1]$ is a decreasing function in $z$.
\end{lemma}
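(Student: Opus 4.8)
The plan is to reduce the claim to the sign of a single function of $z$ and to expose a perfect-square factorization at the end. First I would rewrite the numerator as $N(s,z) = \log\big((1-s) + \tfrac{s}{z}\big) + s\log z = \log\big((1-s)z + s\big) - (1-s)\log z$, and set $D(z) = (1-z)^2$, so that $f(s,z) = N(s,z)/D(z)$. The boundary cases $s \in \{0,1\}$ give $N \equiv 0$, hence $f \equiv 0$, which is trivially non-increasing, so from now on fix $s \in (0,1)$. Both $N$ and $D$ vanish to second order at $z = 1$ (one checks $N(s,1) = N'(s,1) = 0$), so $f$ extends continuously across $z=1$ with value $s(1-s)/2$; it therefore suffices to prove $\partial_z f \le 0$ on $(0,1) \cup (1,\infty)$ and invoke continuity at $z=1$.

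Since $D(z) > 0$ for $z \neq 1$, the sign of $\partial_z f$ equals that of $\Phi(z) := N'(z)D(z) - N(z)D'(z)$, where derivatives are taken in $z$ with $s$ fixed. A direct computation gives $N'(z) = s(1-s)\frac{z-1}{z((1-s)z+s)}$ and $D'(z) = 2(z-1)$; both carry a factor $(z-1)$ (forced by the double zero at $z=1$), so I would factor $\Phi(z) = (z-1)\Psi(z)$ with $\Psi(z) = s(1-s)\frac{(z-1)^2}{z((1-s)z+s)} - 2N(z)$ and $\Psi(1) = 0$. It then suffices to show $\Psi \ge 0$ on $(0,1)$ and $\Psi \le 0$ on $(1,\infty)$, i.e. that $\Psi$ decreases through its zero at $z=1$, since this makes $(z-1)\Psi(z) \le 0$ throughout.

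The decisive computation is $\Psi'$. Differentiating and simplifying, I expect the quadratic $R(z) := -2(1-s)z^2 + (2-3s)z + s$ to appear, with $\Psi'(z) = s(1-s)(z-1)\frac{R(z)}{(z((1-s)z+s))^2}$. The key observation is that $R(1)=0$, so $R(z) = -2(1-s)(z-1)\big(z + \tfrac{s}{2(1-s)}\big)$; substituting this back gives the manifestly non-positive form $\Psi'(z) = -2s(1-s)^2 \frac{(z-1)^2\,\big(z + \frac{s}{2(1-s)}\big)}{(z((1-s)z+s))^2} \le 0$ on $(0,\infty)$, since $z + \frac{s}{2(1-s)} > 0$ there. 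Hence $\Psi$ is non-increasing; together with $\Psi(1)=0$ this yields the required signs of $\Psi$, so $\Phi \le 0$, $\partial_z f \le 0$, and $f$ is non-increasing in $z$.

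I expect the only real obstacle to be the algebraic bookkeeping in the two derivative computations, and in particular recognizing the two hidden $(z-1)$ factors (first in $\Phi$, then inside $R$) that are forced by the second-order contact of $N$ and $D$ at $z=1$. Once the factorization $R(z) = -2(1-s)(z-1)(z + \tfrac{s}{2(1-s)})$ is spotted, every sign becomes transparent and the argument needs nothing beyond $s(1-s) \ge 0$ and $z > 0$.
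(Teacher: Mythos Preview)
Your argument is correct. The rewriting $N(s,z)=\log((1-s)z+s)-(1-s)\log z$ is valid, the factorization $\Phi(z)=(z-1)\Psi(z)$ follows directly from the explicit forms of $N'$ and $D'$, and the computation of $\Psi'(z)$ checks out line by line: the numerator is indeed $R(z)=-2(1-s)z^2+(2-3s)z+s$, one verifies $R(1)=0$, and the resulting factorization $R(z)=-2(1-s)(z-1)\big(z+\tfrac{s}{2(1-s)}\big)$ makes $\Psi'(z)\le 0$ manifest on $(0,\infty)$ for $s\in(0,1)$. From $\Psi(1)=0$ and $\Psi$ non-increasing one gets the sign pattern of $\Psi$, hence $(z-1)\Psi(z)\le 0$, hence $\partial_z f\le 0$ away from $z=1$, and continuity of $f$ at $z=1$ (which you established via the second-order vanishing) finishes the monotonicity claim.

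As for comparison with the paper: the paper explicitly declines to give an analytical argument for this lemma, stating that the proof is ``very technical and not much informative'' and offering only a numerical illustration in a figure instead. Your proposal therefore goes strictly beyond what the paper provides. The route you take---factoring out the forced $(z-1)$ from $\Phi$, then differentiating once more and spotting the second hidden root $z=1$ in the quadratic $R$---is exactly the kind of bookkeeping the authors presumably had in mind but chose to omit; it is self-contained, elementary, and fills the gap cleanly.
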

% This Lemma is illustrated in Figure~\ref{fig:barycentre-lower-bound-ratio2-increasing}.
%
% \begin{figure}
% \centering
% \includegraphics[width=0.6\textwidth]{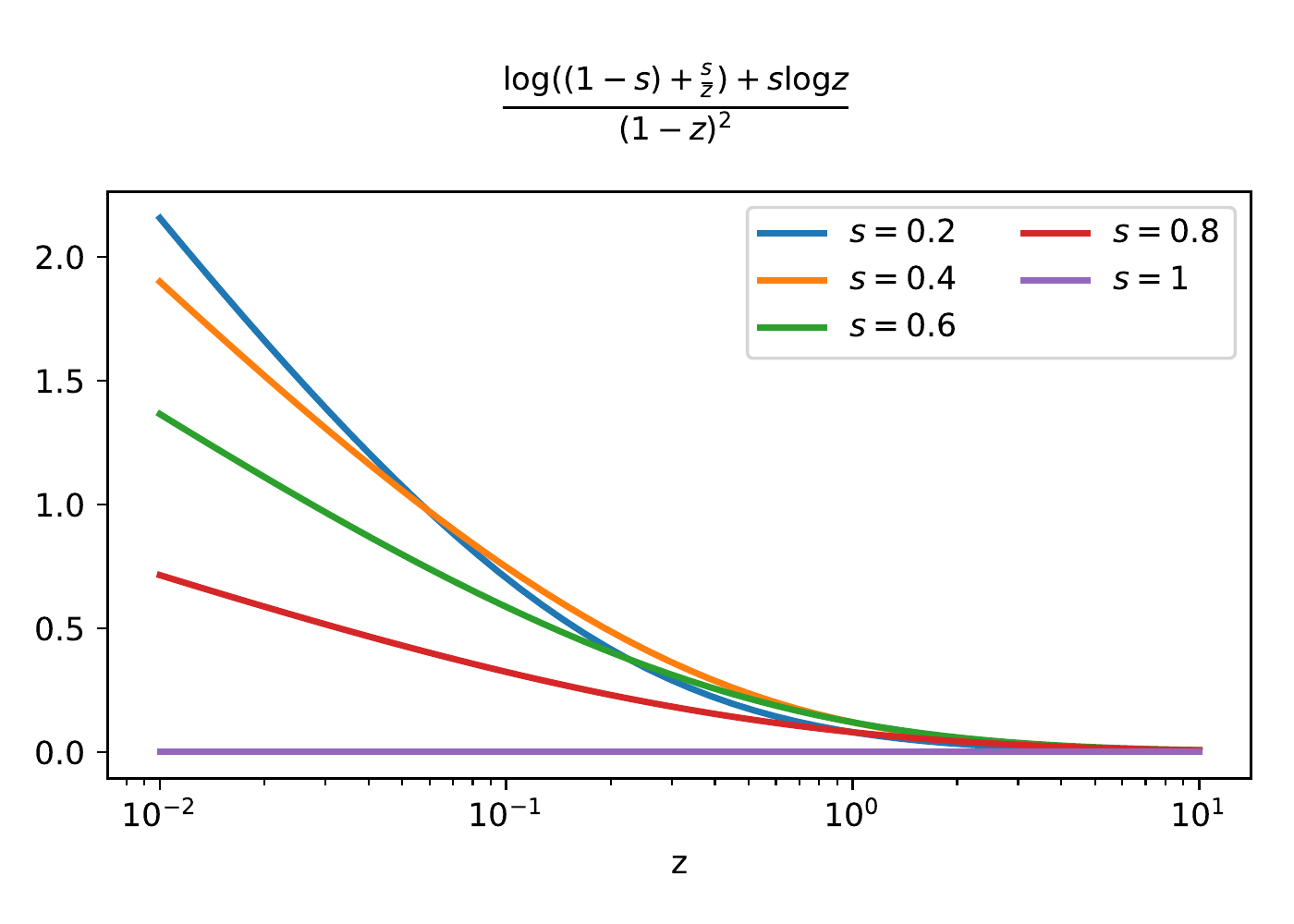}
% \caption{Illustration of Lemma~\ref{lemma:barycentre-lower-bound-ratio2-increasing}. The plotted
% function is decreasing in $z$ for any fixed $s \in [0, 1]$.}
% \label{fig:barycentre-lower-bound-ratio2-increasing}
% \end{figure}
%
%
\begin{lemma}
  \label{lemma:barycentre-lower-bound-ratio}
  We have
  \begin{equation*}
    \log \Big( (1 - s) + \frac{s}{z} \Big) + s \log z
    \geq s (1 - s) \Big( \frac{1}{z} - 1 + \log z \Big)
  \end{equation*}
  for all $z \geq 1$ and $s \in [0, 1]$.
\end{lemma}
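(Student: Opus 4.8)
The plan is to fix $s \in [0,1]$, regard the inequality as a statement about the single variable $z$, and prove it by a boundary check plus a monotonicity argument. Concretely, set
\begin{equation*}
  F(z) = \log\Big((1-s) + \tfrac{s}{z}\Big) + s \log z
         - s(1-s)\Big(\tfrac{1}{z} - 1 + \log z\Big),
\end{equation*}
so that the claim is exactly $F(z) \geq 0$ for all $z \geq 1$. The first step is the boundary evaluation $F(1) = \log 1 + 0 - 0 = 0$, which shows the bound is tight at $z=1$. It then suffices to prove that $F$ is nondecreasing on $[1,+\infty)$, i.e.\ that $F'(z) \geq 0$ there, and conclude by integrating from $1$ to $z$.

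The second step is the derivative computation. Differentiating the first two terms of $F$ gives $-\frac{s/z}{(1-s)z+s} + \frac{s}{z}$, which, after combining over a common denominator and using the identity $(1-s)z + s - 1 = (1-s)(z-1)$, collapses to $\frac{s(1-s)(z-1)}{z\,((1-s)z+s)}$; the last term contributes $-s(1-s)\frac{z-1}{z^2}$. Factoring out $s(1-s)(z-1)$ and simplifying the remaining bracket $\frac{1}{z((1-s)z+s)} - \frac{1}{z^2}$, whose numerator is $z - ((1-s)z+s) = s(z-1)$, I expect everything to consolidate into the clean form
\begin{equation*}
  F'(z) = \frac{s^2 (1-s) (z-1)^2}{z^2 \big((1-s)z + s\big)}.
\end{equation*}

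The third step is to read off the sign. For $s \in [0,1]$ and $z \geq 1$ the numerator is a product of nonnegative factors, and the denominator is strictly positive because $(1-s)z + s \geq (1-s) + s = 1 > 0$ on this range. Hence $F'(z) \geq 0$, and combining this with $F(1) = 0$ yields $F(z) \geq 0$, which is precisely the assertion. The degenerate cases $s=0$ and $s=1$ reduce both sides to $0$ and need no separate treatment.

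The only delicate point is the algebraic cancellation in the second step: the simplification producing the factored expression for $F'$ must be carried out carefully, since a sign slip there would be fatal, but once the quotient is reduced the positivity is immediate. It is worth noting that this route is self-contained and does not seem to require Lemma~\ref{lemma:barycentre-lower-bound-ratio2-increasing}; that monotonicity statement would instead deliver the complementary \emph{upper} estimate, since its decreasing-in-$z$ property combined with the limit $z \to 1^{+}$ (where the ratio tends to $\tfrac{1}{2}s(1-s)$) bounds the same quantity from above.
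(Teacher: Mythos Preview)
Your argument is correct. The boundary check $F(1)=0$ is immediate, the derivative computation is carried out accurately, and the resulting closed form
\begin{equation*}
  F'(z) = \frac{s^{2}(1-s)(z-1)^{2}}{z^{2}\big((1-s)z+s\big)}
\end{equation*}
is manifestly nonnegative on the stated range; integrating from $1$ yields the inequality. The edge cases $s\in\{0,1\}$ are indeed trivial.

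Regarding comparison with the paper: the paper does \emph{not} give an analytical proof of this lemma (nor of Lemma~\ref{lemma:barycentre-lower-bound-ratio2-increasing}). It declares the analytical proofs ``very technical and not much informative'' and replaces them with illustrative plots. Your argument shows that, at least for the present lemma, this assessment is pessimistic: the monotonicity-in-$z$ route you take is short, elementary, and fully rigorous. Your closing remark is also accurate---Lemma~\ref{lemma:barycentre-lower-bound-ratio2-increasing} is not needed here; in the paper it is used separately (in the proof of Theorem~\ref{th:general-convergence}) to replace $\alpha_i^{(t-1)}/\alpha_i^*$ by the bound $\beta_i/\alpha_i^*$ before the present lemma is invoked.
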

%
% This lemma is illustrated in Figure~\ref{fig:barycentre-lower-bound}.
%
% \begin{figure}
% \centering
% \includegraphics[width=0.55\textwidth]{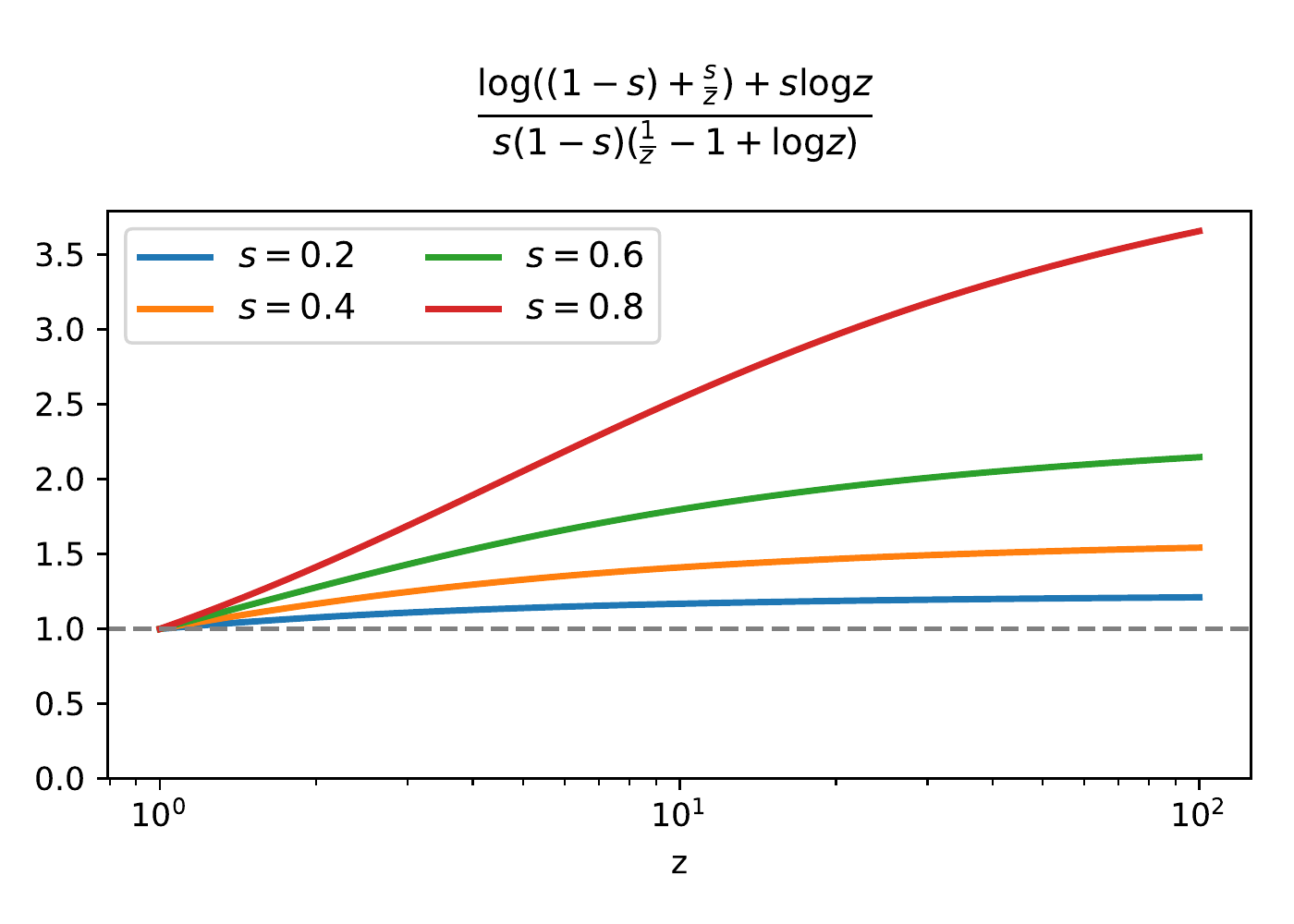}
% \caption{Illustration of Lemma~\ref{lemma:barycentre-lower-bound-ratio} showing that for any
% $z \geq 1$,
% $\log \big( (1 - s) + \tfrac{s}{z} \big) + s \log z
% \geq s (1 - s) \big( \tfrac{1}{z} - 1 + \log z \big)$.}
% \label{fig:barycentre-lower-bound}
% \end{figure}
% subsection convex_inequality (end)
The analytical proof of these lemmas are very technical and not much informative.
Thus, we rather illustrate them with the two following figures
\begin{figure}
\centering
\includegraphics[width=0.45\textwidth]{plot_barycentre_lower_bound_decreasing}
\includegraphics[width=0.45\textwidth]{plot_barycentre_lower_bound}
\caption{Illustration of Lemma~\ref{lemma:barycentre-lower-bound-ratio} showing that for any
$z \geq 1$,
$\log \big( (1 - s) + \tfrac{s}{z} \big) + s \log z
\geq s (1 - s) \big( \tfrac{1}{z} - 1 + \log z \big)$.}
\label{fig:barycentre-lower-bound}
\end{figure}

\subsection{Proof of Theorem~\ref{th:general-convergence}} % (fold)
\label{sub:contraction}

This proof is very similar to SDCA's proof \cite{shalev2014accelerated} but it uses the new
convex inequality on the Fenchel conjugate of $\log$ smooth functions from Lemma~
\ref{lemma:barycentre_inequality} to get a tighter inequality.
We first prove the following lemma which is an equivalent of Lemma~6 from
\cite{shalev2014accelerated} but with convex functions $f_i$ that are $L_i$-$\log$ smooth
instead of being $L_i$-gradient Lipschitz.
\begin{lemma}
\label{lemma:weighted_dual_gain}
  Suppose that we known bounds $\beta_i \in \Dfsm$ such that
  $R_i = \beta_i / \alpha_i^* \geq 1$ for $i=1,\ldots,n$
  and assume that all $f_i$ are $L_i$-$\log$ smooth with differentiable Fenchel conjugates
  and that $g$ is 1-strongly convex.
  Then, if $\alpha^{(t, i)}$ is the value of $\alpha^{(t)}$ when $i$ is sampled at
  iteration $t$ for Algorithms~\ref{alg:general-shifted-sdca} and \ref{alg:prox-shifted-sdca}, we
  have
  \begin{equation}
    \label{eq:dual_gain}
    \sum_{i=1}^n s_i^{-1} \big( D(\alpha^{(t, i)}) - D(\alpha^{(t - 1)}) \big)
     \geq D(\alpha^*) - D(\alpha^{(t - 1)}) + G(s_i, \alpha_i^{(t-1)}, \alpha_i^*)
  \end{equation}
  for any ${s_1, \dots, s_n} \in [0, 1]$, where
  \begin{equation*}
  G(s, \alpha^{(t-1)}, \alpha^*) = \frac{1}{n} \sum_{i = 1}^{n} \Big(
                    L_i \gamma(s_i, \alpha_i^{(t-1)}, \alpha_i^*) -
                    \frac{s_i}{2 \lambda n} \|x_i\|^2  (\alpha_i^* - \alpha_i^{(t-1)})^2
                \Big)
  \end{equation*}
  and
  \begin{equation*}
  \gamma(s_i, \alpha_i^{(t-1)}, \alpha_i^*) =
  % \frac{1}{s_i} \log \Bigg(s_i + (1 - s_i) \frac{\alpha_i^{(t-1)}}{\alpha_i^*} \Bigg) -
  % \frac{1 - s_i}{s_i} \log \Bigg(\frac{\alpha_i^{(t-1)}}{\alpha_i^*} \Bigg)
  \frac{1}{s_i} \log \Big( 1 - s_i + s_i \frac{\alpha_i^*}{\alpha_i^{(t-1)}} \Big)
  - \log \frac{\alpha_i^*}{\alpha_i^{(t-1)}}.
  \end{equation*}
\end{lemma}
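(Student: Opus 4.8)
The plan is to mirror the proof of Lemma~6 in \cite{shalev2014accelerated}, but to replace the strong-convexity inequality for the dual losses by the barycentre inequality for $\log$ smooth conjugates from Lemma~\ref{lemma:barycentre_inequality}. The starting point is that the coordinate update dominates any feasible competitor differing from $\alpha^{(t-1)}$ only in entry $i$. I would take as competitor the point $\tilde\alpha$ equal to $\alpha^{(t-1)}$ off coordinate $i$ and with $\tilde\alpha_i = \alpha_i^{(t-1)} + s_i(\alpha_i^* - \alpha_i^{(t-1)})$, i.e. a fraction $s_i$ of the way towards the optimum. Since $s_i\in[0,1]$ and both $\alpha_i^{(t-1)}$ and $\alpha_i^*$ lie in $(0,\beta_i]$ (the former by the constraint maintained along the iterations, the latter precisely because $R_i=\beta_i/\alpha_i^*\geq 1$), the convex combination $\tilde\alpha_i$ is again feasible; together with concavity of the one-dimensional dual this gives $D(\alpha^{(t,i)})\geq D(\tilde\alpha)$ for Algorithm~\ref{alg:general-shifted-sdca}. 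For the proximal variant the same chain applies verbatim once $D$ is replaced by the proximal surrogate maximized at Line~\ref{lst:line:local_max}, which is nothing but the descent-lemma lower bound constructed next.

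Next I would bound $D(\tilde\alpha)-D(\alpha^{(t-1)})$ from below by splitting it into its two parts. Changing only coordinate $i$ moves the argument of $g^*$ from $v^{(t-1)}$ to $v^{(t-1)}+(\lambda n)^{-1}\Delta_i x_i$ with $\Delta_i=s_i(\alpha_i^*-\alpha_i^{(t-1)})$; since $g$ is $1$-strongly convex, $g^*$ is $1$-smooth, and the descent lemma (Definition~\ref{def:smoothness}) yields $-\lambda\bigl[g^*(\cdot)-g^*(v^{(t-1)})\bigr]\geq -\tfrac{\Delta_i}{n}(w^{(t-1)})^\top x_i-\tfrac{\Delta_i^2}{2\lambda n^2}\|x_i\|^2$, using $w^{(t-1)}=\nabla g^*(v^{(t-1)})$. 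For the conjugate-loss part $-f_i^*(-\tilde\alpha_i)+f_i^*(-\alpha_i^{(t-1)})$, I would apply Lemma~\ref{lemma:barycentre_inequality} with $x=-\alpha_i^*$ and $y=-\alpha_i^{(t-1)}$ (both in the domain of $f_i^*$, which sits in $(-\infty,0)$), so that $y+s_i(x-y)=-\tilde\alpha_i$ and the ratio $x/y$ equals $\alpha_i^*/\alpha_i^{(t-1)}$. This produces exactly $s_i\bigl[f_i^*(-\alpha_i^{(t-1)})-f_i^*(-\alpha_i^*)\bigr]$ together with $L_i\,s_i\,\gamma(s_i,\alpha_i^{(t-1)},\alpha_i^*)$.

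Collecting both bounds, dividing by $s_i$, and summing over $i$, the $\gamma_i$ and $\|x_i\|^2$ contributions assemble precisely into $G(s,\alpha^{(t-1)},\alpha^*)$, and it remains to identify the leading terms. The sum $\tfrac1n\sum_i\bigl[f_i^*(-\alpha_i^{(t-1)})-f_i^*(-\alpha_i^*)\bigr]$ reproduces the conjugate part of $D(\alpha^*)-D(\alpha^{(t-1)})$, while the cross term $\tfrac1n\sum_i(\alpha_i^*-\alpha_i^{(t-1)})(w^{(t-1)})^\top x_i$ equals $\lambda(w^{(t-1)})^\top(v^*-v^{(t-1)})$ by the definition of $v(\alpha)$ in~\eqref{eq:primal_dual_relation}. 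The first-order convexity inequality for $g^*$, namely $g^*(v^*)-g^*(v^{(t-1)})\geq\nabla g^*(v^{(t-1)})^\top(v^*-v^{(t-1)})$, then shows that this cross term is dominated by $\lambda\bigl[g^*(v^*)-g^*(v^{(t-1)})\bigr]$, so that the leading terms are bounded below by the full gap $D(\alpha^*)-D(\alpha^{(t-1)})$, which establishes~\eqref{eq:dual_gain}.

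I expect the main obstacle to be the first step, i.e. justifying $D(\alpha^{(t,i)})\geq D(\tilde\alpha)$ in the presence of the clipping at Line~\ref{lst:line:general-bounding}. This is exactly where the hypothesis $R_i\geq 1$ is indispensable: it guarantees $\alpha_i^*\leq\beta_i$, so that the interpolated competitor stays below the clipping threshold, and concavity of the one-dimensional dual then ensures that clipping an unconstrained maximizer down to $\beta_i$ never drops the value below that at any feasible $\tilde\alpha_i\leq\beta_i$. Some care is also needed in the sign bookkeeping when feeding the positive dual variables into Lemma~\ref{lemma:barycentre_inequality}, whose arguments live in the negative domain of $f_i^*$.
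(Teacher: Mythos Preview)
Your proposal is correct and follows essentially the same route as the paper: compare the coordinate update to the feasible competitor $\tilde\alpha_i=(1-s_i)\alpha_i^{(t-1)}+s_i\alpha_i^*$, apply the $1$-smoothness of $g^*$ for the regularization part, invoke Lemma~\ref{lemma:barycentre_inequality} with $x=-\alpha_i^*$, $y=-\alpha_i^{(t-1)}$ for the conjugate-loss part, divide by $s_i$, sum, and close with the convexity inequality for $g^*$. Your handling of the clipping step and the role of $R_i\geq 1$ matches the paper's reasoning as well.
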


\begin{proof}
At iteration $t$, if the dual vector is set to $\alpha^{(t)}$ (and
$v^{(t)} = v(\alpha^{(t)})$, see Equation~\eqref{eq:primal_dual_relation}) the dual gain is
\begin{equation*}
n (D(\alpha^{(t)}) - D(\alpha^{(t-1)})) =
    \underbrace{\big( -f_i^* (-\alpha_i^t) - \lambda n g^*(v^{(t)}) \big)}_{A_i}
    - \underbrace{\big( -f_i^* (-\alpha_i^{(t-1)}) - \lambda n g^*(v^{(t-1)}) \big)}_{B_i}
\end{equation*}
where $i$ is the index sampled at iteration $t$ (see Line~\ref{lst:line:random}).
For Algorithm~\ref{alg:general-shifted-sdca}, by the definition of $\alpha_i^{(t)}$ given on
Lines~\ref{lst:line:general_local_max} and~\ref{lst:line:general-bounding} we have
\begin{equation*}
A_i = \max_{\substack{\alpha_i \in \Dfsm \\ \text{s.t. } \beta_i / \alpha_i \geq 1}}
      - f_i^*(-\alpha_i) - \lambda n g^*\Big(
        \frac{1}{\lambda n} (\alpha_i - \alpha_i^{(t-1)}) x_i
        + \frac{1}{\lambda n} \sum_{j = 1}^n \alpha_j^{(t-1)} x_j
        - \frac{1}{\lambda} \psi
    \Big).
\end{equation*}
Using the smoothness inequality on $g^*$ which is 1-smooth as $g$ is 1-strongly convex,
\begin{multline*}
g^*(v^{(t-1)} + \Delta v) \leq h(v^{(t-1)}, \Delta v) \\ \text{where} \;\;
h(v^{(t-1)}, \Delta v) = g^*(v^{(t-1)}) + \nabla g^*(v^{(t-1)})^\top \Delta v + \frac{1}{2} \|\Delta v\|^2.
\end{multline*}
Hence setting $\Delta v$ to $(\lambda n)^{-1} (\alpha_i - \alpha_i^{(t-1)}) x_i$
% and given
%Relation~\eqref{eq:primal_dual_relation} linking $v$ to $\alpha$ and $\psi$
, we can lower bound $A_i$ with
\begin{equation*}
A_i \geq
    \max_{\substack{\alpha_i \in \Dfsm \\ \text{s.t. } \beta_i / \alpha_i \geq 1}}
     - f_i^* (- \alpha_i)
     - \lambda n h \big(
        v^{(t-1)}, (\lambda n)^{-1} (\alpha_i - \alpha_i^{(t-1)}) x_i
    \big).
\end{equation*}
For Algorithm~\ref{alg:prox-shifted-sdca}, by definition of $\alpha_i^{(t)}$ stated at
Lines~\ref{lst:line:local_max} and~\ref{lst:line:bounding} combined with the modified argmax
relation~\eqref{eq:modified-argmax},
\begin{equation*}
A_i = \max_{\substack{\alpha_i \in \Dfsm \\ \text{s.t. } \beta_i / \alpha_i \geq 1}}
       - f_i^* (- \alpha_i)
       - \lambda n
         h \big( v^{(t-1)}, (\lambda n)^{-1} (\alpha_i - \alpha_i^{(t-1)}) x_i \big).
\end{equation*}
As both $\alpha_i^{(t-1)}$ and $\alpha_i^*$ belong to
$\{\alpha_i \in \Dfsm, \beta_i / \alpha_i \geq 1\}$, for any $s_i \in [0, 1]$, the convex
combination $\alpha_i = (1 - s_i)\alpha_i^{(t-1)} + s_i \alpha_i^*$ belongs to it as well.
Hence, for both algorithms, $A_i$ is higher than the previous quantity evaluated at this specific
$\alpha_i$.
Namely,
\begin{equation*}
A_i \ge
    - f_i^* \big(- \big( (1 - s_i)\alpha_i^{(t-1)} + s_i \alpha_i^* \big) \big)
    - \lambda n h \big(v^{(t-1)}, (\lambda n)^{-1} s_i(\alpha_i^* - \alpha_i^{(t-1)}) x_i \big).
\end{equation*}
We then use Lemma~\ref{lemma:barycentre_inequality},
in which $-\alpha_i^* \in \Dfs$ stands for $x$ and $-\alpha_i^{(t-1)} \in \Dfs$
for $y$:
\begin{equation*}
  (1 - s_i) f_i^*(- \alpha_i^{(t-1)}) + s_i f_i^*(-\alpha_i^*)
  - f_i^*(- (1 - s_i) \alpha_i^{(t-1)} - s_i \alpha_i^*)
  \geq s_i L_i \gamma(s_i, \alpha_i^{(t-1)}, \alpha_i^*)
\end{equation*}
where
\begin{equation*}
  \gamma(s_i, \alpha_i^{(t-1)}, \alpha_i^*) =
  \frac{1}{s_i} \log \Big( 1 - s_i + s_i \frac{\alpha_i^*}{\alpha_i^{(t-1)}} \Big)
  - \log \frac{\alpha_i^*}{\alpha_i^{(t-1)}}.
\end{equation*}
This inequality is used instead of the strong convex inequality of the classic SDCA analysis
\cite{shalev2014accelerated}. If we plug this inequality into $A_i$ we obtain
\begin{align*}
A_i
    %&\geq  - s_i f^*(-\alpha_i^*) - (1-s_i) f^*(-\alpha_i^{(t-1)})
    %       + \frac{\gamma_i^{(t-1)}}{2} s_i (1 - s_i) (\alpha_i^{(t-1)} - \alpha_i^*)^2 \\
    %&\quad \quad
    % - \lambda n h \Big(v^{(t-1)}, (\lambda n)^{-1} s_i(\alpha_i^*
    % - \alpha_i^{(t-1)}) x_i \Big) \\
    &\geq   - s_i f^*(-\alpha_i^*) - (1 - s_i) f^*(-\alpha_i^{(t-1)})
          + s_i L_i \gamma(s_i, \alpha_i^{(t-1)}, \alpha_i^*)\\
    &\quad\quad
          - \lambda n g^* (v^{(t-1)})
          - s_i(\alpha_i^* - \alpha_i^{(t-1)}) x_i^\top \nabla g^*(v^{(t-1)})
          - \frac{s_i^2 (\alpha_i^* - \alpha_i^{(t-1)})^2}{2 \lambda n} \|x_i\|^2 \\
    & =  - s_i ( f^*(-\alpha_i^*) - f^*(-\alpha_i^{(t-1)}) )
          - f^*(-\alpha_i^{(t-1)}) - \lambda n g^*(v^{(t-1)})\\
    &\quad\quad - s_i(\alpha_i^* - \alpha_i^{(t-1)}) x_i^\top \nabla g^*(v^{(t-1)})
           + s_i \Big( L_i \gamma(s_i, \alpha_i^{(t-1)}, \alpha_i^*) -
             \frac{s_i}{2 \lambda n} \|x_i\|^2  (\alpha_i^* - \alpha_i^{(t-1)})^2 \Big).
\end{align*}
Hence, we retrieve $B_i$ and rewrite the previous inequality as
\begin{align*}
s_i^{-1} (A_i - B_i) &\geq  - \big( f^*(-\alpha_i^*) - f^*(-\alpha_i^{(t-1)}) \big)
    - (\alpha_i^* - \alpha_i^{(t-1)}) x_i^\top \nabla g^*(v^{(t-1)})\\
    &\quad\quad
           + L_i \gamma(s_i, \alpha_i^{(t-1)}, \alpha_i^*) -
             \frac{s_i}{2 \lambda n} \|x_i\|^2  (\alpha_i^* - \alpha_i^{(t-1)})^2.
\end{align*}
We can sum over all possible sampled $i$ and weight each entry with $s_i^{-1}$ to obtain
\begin{align}
\label{eq:dual_gain_step}
\sum_{i = 1}^{n} s_i^{-1} (A_i - B_i) &\geq
         - \sum_{i = 1}^{n} \Big( f^*(-\alpha_i^*) - f^*(-\alpha_i^{(t-1)}) \Big)
               -  \bigg\langle \nabla g^*(v^{(t-1)}) \; \Big| \;
                   \sum_{i=1}^n (\alpha_i^* - \alpha_i^{(t-1)}) x_i \bigg\rangle
               \nonumber \\
        &\quad\quad
               + \sum_{i = 1}^{n} \Big(
                    L_i \gamma(s_i, \alpha_i^{(t-1)}, \alpha_i^*) -
                    \frac{s_i}{2 \lambda n} \|x_i\|^2  (\alpha_i^* - \alpha_i^{(t-1)})^2
                \Big).
\end{align}
Then since $g^*$ is convex, we obtain
\begin{align*}
\bigg\langle \nabla g^*(v^{(t-1)}) \; \Big| \;
     \sum_{i=1}^n (\alpha_i^* - \alpha_i^{(t-1)}) x_i \bigg\rangle
           &= \big\langle \nabla g^*(v^{(t-1)})  \; | \;
                  \lambda n (v(\alpha^*) - v^{(t-1)}) \big\rangle \\
           &\leq \lambda n  \big( g^*(v(\alpha^*)) - g^*(v^{(t-1)}) \big),
\end{align*}
which can be injected in Equation~\eqref{eq:dual_gain_step} leading to
\begin{align*}
\sum_{i = 1}^{n} s_i^{-1} (A_i - B_i) &\geq
         - \sum_{i = 1}^{n} \Big( f^*(-\alpha_i^*) - f^*(-\alpha_i^{(t-1)}) \Big)
               + \lambda n g^*(v(\alpha^*)) - \lambda n g^*(v^{(t-1)}) \\
        &\quad\quad
               + \sum_{i = 1}^{n} \Big(
                    L_i \gamma(s_i, \alpha_i^{(t-1)}, \alpha_i^*) -
                    \frac{s_i}{2 \lambda n} \|x_i\|^2  (\alpha_i^* - \alpha_i^{(t-1)})^2
                \Big).
\end{align*}
Finally, since $A_i - B_i = n ( D(\alpha^{(t, i)}) - D(\alpha^{(t-1)}) )$, we obtain
\begin{multline*}
\sum_{i=1}^n s_i^{-1} \big( D(\alpha^{(t, i)}) - D(\alpha^{(t - 1)}) \big) \\
     \geq D(\alpha^*) - D(\alpha^{(t - 1)})
          + \frac{1}{n} \sum_{i = 1}^{n} \Big(
                    L_i \gamma(s_i, \alpha_i^{(t-1)}, \alpha_i^*) -
                    \frac{s_i}{2 \lambda n} \|x_i\|^2  (\alpha_i^* - \alpha_i^{(t-1)})^2
                \Big).
\end{multline*}
This concludes the proof of Lemma~\ref{lemma:weighted_dual_gain}.
\end{proof}

From Lemma~\ref{lemma:weighted_dual_gain}, we obtain a contraction speed as soon as
$G(s, \alpha^{(t-1)}, \alpha^*) \geq 0$.
If $\alpha_i^{(t-1)} \neq \alpha_i^*$ this is obtained if
\begin{equation}
  \label{eq:convergence-condition}
  \forall i \in \{1, \dots, n\}, \quad
  L_i \gamma(s_i, \alpha_i^{(t-1)}, \alpha_i^*) -
                    \frac{s_i}{2 \lambda n} \|x_i\|^2  (\alpha_i^* - \alpha_i^{(t-1)})^2 \geq 0
\end{equation}
\begin{equation*}
  \Leftrightarrow \quad
  \forall i \in \{1, \dots, n\}, \quad
  \frac{\gamma(s_i, \alpha_i^{(t-1)}, \alpha_i^*)}
       {\Big(1 - \frac{\alpha_i^{(t-1)}}{\alpha_i^*} \Big)^2} -
    s_i  \frac{ \|x_i\|^2{\alpha_i^*}^2}{2 \lambda n L_i} \geq 0.
\end{equation*}
By definition of $\gamma$ we have
\begin{equation*}
\frac{\gamma(s_i, \alpha_i^{(t-1)}, \alpha_i^*)}
     {\Big(1 - \frac{\alpha_i^{(t-1)}}{\alpha_i^*} \Big)^2}
=
% \frac{\log \Big(s_i + (1 - s_i) \frac{\alpha_i^{(t-1)}}{\alpha_i^*} \Big)
% - (1 - s_i) \log \Big(\frac{\alpha_i^{(t-1)}}{\alpha_i^*} \Big)}
  \frac{\log \Big( 1 - s_i + s_i \frac{\alpha_i^*}{\alpha_i^{(t-1)}} \Big)
  - s_i \log \frac{\alpha_i^*}{\alpha_i^{(t-1)}}}
       {s_i \Big(1 - \frac{\alpha_i^{(t-1)}}{\alpha_i^*} \Big)^2}.
\end{equation*}
As $\alpha_i^{(t-1)} / \alpha_i^*$ is bounded by $\beta_i / \alpha_i^*$, we apply
Lemma~\ref{lemma:barycentre-lower-bound-ratio2-increasing} to obtain
\begin{equation*}
\frac{\gamma(s_i, \alpha_i^{(t-1)}, \alpha_i^*)}
     {\Big(1 - \frac{\alpha_i^{(t-1)}}{\alpha_i^*} \Big)^2}
\geq
    % \frac{\log \Big(s_i + (1 - s_i) \frac{\beta_i}{\alpha_i^*} \Big)
    % - (1 - s_i) \log \Big(\frac{\beta_i}{\alpha_i^*} \Big)}
    \frac{\log \Big( 1 - s_i + s_i \frac{\alpha_i^*}{\beta_i} \Big)
          - s_i \log \frac{\alpha_i^*}{\beta_i}}
       {s_i \Big(1 - \frac{\beta_i}{\alpha_i^*} \Big)^2},
\end{equation*}
and as $\beta_i / \alpha_i^* \geq 1$, we can apply
Lemma~\ref{lemma:barycentre-lower-bound-ratio} leading to
\begin{equation*}
\frac{\gamma(s_i, \alpha_i^{(t-1)}, \alpha_i^*)}
     {\Big(1 - \frac{\alpha_i^{(t-1)}}{\alpha_i^*} \Big)^2}
\geq \frac{\big( 1 - s_i \big) \Big( \frac{\alpha_i^*}{\beta_i}
           - 1 + \log \frac{\beta_i}{\alpha_i^*} \Big)}
          {\Big(1 - \frac{\beta_i}{\alpha_i^*} \Big)^2}.
\end{equation*}
Finally the convergence condition from Equation~\eqref{eq:convergence-condition} is
satisfied when
\begin{equation*}
  \forall i \in \{1, \dots, n\}, \quad
  \big( 1 - s_i \big) \bigg( \frac{\alpha_i^*}{\beta_i} - 1 + \log \frac{\beta_i}{\alpha_i^*} \bigg)
  -
    %s_i  \frac{(\beta_i - \alpha_i^*)^2 \|x_i\|^2}{2 \lambda n L_i}
    s_i  \frac{ \|x_i\|^2{\alpha_i^*}^2}{2 \lambda n L_i}
    \bigg(1 - \frac{\beta_i}{\alpha_i^*} \bigg)^2
    \geq 0
\end{equation*}
which is true for any $s_i \in [0, \sigma_i]$ where
\begin{equation*}
  \sigma_i
  = \left( 1 +
      \frac{\|x_i\|^2  {\alpha_i^*}^2}{2 \lambda n L_i}
      \frac{\Big(1 - \frac{\beta_i}{\alpha_i^*} \Big)^2}
                    {\frac{\alpha_i^*}{\beta_i}
                       + \log \frac{\beta_i}{\alpha_i^*} - 1
                    }
    \right)^{-1}.
\end{equation*}
Theorem~\ref{th:general-convergence} is obtained by sampling uniformly $i$, meaning haing all
$s_i$ equal.
Hence, to fulfill Equation~\eqref{eq:convergence-condition}, we set
\begin{equation}
  \label{eq:s-uniform}
  s_i = \min_{j \in \{1, \dots, n\}}  \sigma_j
\end{equation}
for all $i \in \{1, \dots, n\}$.
We then lower bound the expectation of $D(\alpha^{(t)}) - D(\alpha^{(t-1)})$ over all possible
sampled $i$ and obtain
\begin{equation*}
\E [D(\alpha^{(t)}) - D(\alpha^{(t-1)})]
    = \frac{1}{n} \sum_{i=1}^n D(\alpha^{(t, i)}) - D(\alpha^{(t-1)})
    \geq \frac{\min_j  \sigma_j}{n}(D(\alpha^*) - D(\alpha^{(t - 1)})),
\end{equation*}
by multiplying Equation~\eqref{eq:dual_gain} with $\min_{j \in \{1, \dots, n\}}  \sigma_j / n$ and
removing the quantity $G^{(t-1)} \geq 0$.
This leads to the following convergence speed after $t$ iterations,
\begin{equation*}
    \E [D(\alpha^*) - D(\alpha^{(t)})]
    \leq \Big( 1 - \frac{\min_j  \sigma_j}{n} \Big)^t ( D(\alpha^*) - D(\alpha^{(0)}) ),
\end{equation*}
which concludes the proof of Theorem~\ref{th:general-convergence}. $\hfill \qed$

% subsection contraction (end)

\subsection{Proof of Theorem~\ref{th:convergence-is}} % (fold)
\label{sub:theorem_th:convergence-is}

Instead of taking all $s_i$ equal as in the uniform sampling setting (see
Equation~\eqref{eq:s-uniform}), we rather parametrize
$s_i$ by $\frac{\bar{\sigma}}{\rho_i n}$ where $\rho_i$ is the probability of sampling $i$.
Then, we obtain the following expectation under $\rho$,
\begin{equation*}
\E_\rho [D(\alpha^{(t)}) - D(\alpha^{(t-1)})]
    = \sum_{i=1}^n \rho_i D(\alpha^{(t, i)}) - D(\alpha^{(t-1)}).
\end{equation*}
Since we have $\rho_i = \frac{n}{\bar{\sigma}} s_i^{-1} $ we obtain the
following inequality using Lemma~\ref{lemma:weighted_dual_gain}:
\begin{equation}
\label{eq:contraction-is-unconstrained}
  \E_\rho [D(\alpha^{(t)}) - D(\alpha^{(t-1)})]
    \geq \frac{\bar{\sigma}}{n} \Big(D(\alpha^*) - D(\alpha^{(t - 1)})
                         + G\big(\bar{\sigma} (\rho n)^{-1}, \alpha^{(t-1)}, \alpha^* \big) \Big).
\end{equation}
To ensure that $G\big(\bar{\sigma} (\rho n)^{-1}, \alpha^{(t-1)}, \alpha^* \big) \geq 0$ while
keeping the biggest gain, we must satisfy the constraint from
Equation~\eqref{eq:convergence-condition} and find feasible $\rho$ and $\bar{\sigma}$ that maximize
the following problem:
\begin{equation*}
\max_{\bar{\sigma}} \bar{\sigma}
\quad \text{subject to}
\quad \frac{\bar{\sigma}}{\rho_i n} \in [0, \sigma_i],
\quad \rho_i \geq 0,
\quad \sum_{i=1}^n \rho_i = 1.
\end{equation*}
This problem is solved by Proposition~1 of \cite{zhao2015stochastic} and leads to the following choices:
\begin{equation}
  \label{eq:is-optimal-parameters}
  \rho_i = \frac{\sigma_i^{-1}}{\sum_{j=1}^{n}\sigma_j^{-1}}
  \quad \text{and} \quad
  \bar{\sigma} = \Big(\frac{1}{n} \sum_{i=1}^{n} \sigma_i^{-1} \Big)^{-1}.
\end{equation}
This choice for $\rho$ and $\bar{\sigma}$ ensures
$G(\bar{\sigma} (\rho n)^{-1}, \alpha^{(t-1)}, \alpha^*) \geq 0$
hence Equation~\eqref{eq:contraction-is-unconstrained} without
$G\big(\bar{\sigma} (\rho n)^{-1}, \alpha^{(t-1)}, \alpha^* \big)$ leads to
\begin{equation*}
\E_\rho [D(\alpha^{(t)}) - D(\alpha^{(t-1)})]
    \geq \frac{\bar{\sigma}}{n}(D(\alpha^*) - D(\alpha^{(t - 1)})),
\end{equation*}
and finally, after $t$ iterations, we have
\begin{equation*}
    \E [D(\alpha^*) - D(\alpha^{(t)})]
    \leq \Big( 1 - \frac{\bar{\sigma}}{n} \Big)^t ( D(\alpha^*) - D(\alpha^{(0)}) ),
\end{equation*}
which concludes the proof of Theorem~\ref{th:convergence-is}. $\hfill \qed$

\subsection{Proof of Proposition~\ref{prop:closed_form_solution_log_losses}}
\label{sub:closed_for_solution_for_log_losses}

With $f_i^*(- \alpha_i) = -y_i - y_i \log \frac{\alpha_i}{y_i}$, let
\begin{equation*}
\phi(\alpha_i) = y_i + y_i \log \frac{\alpha_i}{y_i}
           - \frac{\lambda n}{2} \Big\|
             w^{(t-1)}
             + (\lambda n)^{-1} (\alpha_i - \alpha_i^{(t - 1)}) x_i
           \Big\|^2
\end{equation*}
be the function to optimize. Note that $\phi$ is a concave function from $\Dfsm$ to $\R$ and hence
it reaches its minimum if its gradient is zero:
\begin{equation*}
\phi'(\alpha_i) = \frac{y_i}{\alpha_i}
           - x_i^\top w^{(t-1)}
           - \frac{\|x_i \|^2}{\lambda n}  (\alpha_i - \alpha_i^{(t - 1)})  = 0.
\end{equation*}
This second order equation in $\alpha_i$ has a unique positive solution, the one stated in
Proposition~\ref{prop:closed_form_solution_log_losses}.

\subsection{Proof of Proposition~\ref{prop:dual_large_bound}} % (fold)
\label{sub:dual_variable_bound}

Given that we are using Ridge regularization, the values of $f_i^*$ and $g^*$ are
\begin{equation*}
f_i^* (v) = -y_i -y_i \log \Big(\frac{-v}{y_i}\Big)
\quad \text{and} \quad
g^*(w) = g(w) = \frac{1}{2} \|w\|^2.
\end{equation*}
Hence the conditions at optimum~\eqref{eq:general_primal_dual_relations_at_optimum_w}
and~\eqref{eq:general_primal_dual_relations_at_optimum_alpha} become
\begin{equation}
\label{eq:primal_dual_relations_at_optimum}
  w^* = \frac{1}{\lambda n} \sum_{i=1}^n \alpha_i^* x_i -\frac{1}{\lambda} \psi
  \quad \text{and} \quad
  \forall i \in \{1, \dots n\}, \quad \alpha_i^* = \frac{y_i}{{w^*}^\top x_i }.
\end{equation}
By combining both equations with Equation~\eqref{eq:primal_dual_relations_at_optimum}, we have
\begin{equation}
\label{eq:dual_optimum_relation}
\forall i \in \{1, \dots n\}, \quad \alpha_i^* =
   \frac{\lambda n y_i}{\sum_{j=1}^n \alpha_j^* x_j^\top x_i - n \psi^\top x_i}.
\end{equation}
Since the inner products $x_i^\top x_j$ and $\alpha_i$ are non-negative, we can remove the
terms $\sum_{j \neq i} \alpha_j^* x_j^\top x_i$ and upper bound the dual variable with
\begin{equation*}
\forall i \in \{1, \dots n\}, \quad \alpha_i^* \leq
   \frac{\lambda n y_i}{ \alpha_i^* \| x_i \|^2 - n \psi^\top x_i }.
\end{equation*}
By solving this second order inequality, we can derive the following upper bound for all $\alpha_i^*$:
\begin{equation*}
       \alpha_i^* \leq \frac{1}{2 \|x_i\|^2} \Big( n \psi^\top x_i
                + \sqrt{ ( n \psi^\top x_i )^2 +
                         4 \lambda n y_i \|x_i\|^2 }\Big),
\end{equation*}
which concludes the proof. $\hfill \qed$

% subsection dual_variable_bound (end)

% section dual_variable (end)

\subsection{Proof of Remark~\ref{rmk:closed-form-bounded}} % (fold)
\label{sec:proof_of_remark_rmk:closed-form-bounded}

At each iteration the closed form solution is given by
Proposition~\ref{prop:closed_form_solution_log_losses}:
\begin{equation*}
\alpha^t_i =
    \frac{1}{2} \Bigg(
           \sqrt{
              \Big(
                  \alpha_i^{(t - 1)}
                   - \frac{\lambda n}{\|x_i\|^2} x_i^\top w^{(t-1)}
              \Big)^2
               + 4 \lambda n \frac{y_i}{\|x_i\|^2}
            }
            + \alpha_i^{(t - 1)}
            - \frac{\lambda n}{\|x_i\|^2} x_i^\top w^{(t-1)}
    \Bigg).
\end{equation*}
Since the inner products $x_i^\top x_j$ are non-negative, we obtain
\begin{equation*}
\alpha_i^{(t - 1)} - \frac{\lambda n}{\|x_i\|^2} x_i^\top w^{(t-1)}
= n \frac{\psi^\top x_i }{\|x_i\|^2}
  - \sum_{j \neq i} \alpha_j^{(t - 1)} \frac{x_j^\top x_i }{\|x_i\|^2}
\leq n \frac{\psi^\top x_i }{\|x_i\|^2}
\end{equation*}
and since $\alpha^t_i$ is increasing with
${(\alpha_i^{(t - 1)} - \frac{\lambda n}{\|x_i\|^2} x_i^\top w^{(t-1)} )}$,
we obtain
\begin{equation*}
\alpha^t_i \leq
\frac{1}{2} \Bigg(
        \sqrt{ n \frac{\psi^\top x_i^2}{\|x_i\|^4}
               + 4 \lambda n \frac{y_i}{\|x_i\|^2} }
        + n \frac{\psi^\top x_i }{\|x_i\|^2}
\Bigg) = \beta_i,
\end{equation*}
which concludes the proof. $\hfill \qed$

\subsection{Proof of Proposition~\ref{prop:dual-optimum-linked-to-norm}} % (fold)
\label{sub:proposition_prop:dual-optimum-linked-to-norm}

This proposition easily follows from the following computation
\begin{align*}
\zeta^* &= \argmax_{\zeta \in (0, +\infty)^n}
      \frac{1}{n} \sum_{i=1}^n - y_i  - y_i \log \frac{\zeta_i}{y_i}
          - \lambda g^* \bigg(\frac{1}{\lambda n} \sum_{i=1}^n \zeta_i \xi_i -
          \frac{1}{\lambda} \psi \bigg) \\
      &= \argmax_{\zeta \in (0, +\infty)^n}
      \frac{1}{n} \sum_{i=1}^n - y_i  - y_i \log \frac{\zeta_i}{y_i}
          + y_i \log(c_i)
          - \lambda g^* \bigg(\frac{1}{\lambda n} \sum_{i=1}^n c_i \zeta_i x_i -
          \frac{1}{\lambda} \psi \bigg) \\
      &= \argmax_{\zeta \in (0, +\infty)^n} D(c \cdot \zeta),
\end{align*}
where $D$ is the original dual problem and $c \cdot \zeta$ is the element wise product of the
vectors $c$ and $\zeta$.
Then, since
\begin{equation*}
      \argmax_x \{ x \mapsto f(c x) \}
          = \frac{1}{c} \argmax \{ x \mapsto f(x) \},
\end{equation*}
which remains valid in the multivariate case, we obtain $\zeta^*_i = \alpha_i^* / c_i$ for any
$i = 1, \dots, n$. $\hfill \qed$

\subsection{Proof of Proposition~\ref{prop:dual-initial-guess-rescale}} % (fold)
\label{sub:proposition_prop:dual-initial-guess-rescale}

Using $\alpha^* = \bar{\alpha} \kappa$ the dual problem becomes one dimensional
\begin{equation*}
  D(\bar{\alpha}) = \frac{1}{n} \sum_{i=1}^n y_i + y_i \log \tfrac{\kappa_i \bar{\alpha}}{y_i}
              - \frac{\lambda}{2} \bigg\|
                  \frac{1}{\lambda n} \sum_{i=1}^n \kappa_i \bar{\alpha} x_i
                  - \frac{1}{\lambda} \psi
                \bigg\|^2.
\end{equation*}
This problem is concave in $\bar{\alpha}$ and the optimal $\bar{\alpha}$ is obtained by setting
the derivative to zero:
\begin{equation*}
D'(\bar{\alpha})
 = \frac{1}{n} \sum_{i=1}^n \frac{y_i}{\bar{\alpha}}
              - \bigg\langle
              \frac{1}{\lambda n} \bar{\alpha}  \sum_{i=1}^n \kappa_i x_i - \frac{1}{\lambda} \psi
              \; \Big| \;
              \frac{1}{n}  \sum_{i=1}^n \kappa_i x_i \bigg\rangle = 0.
\end{equation*}
This leads to the following second order equation
\begin{equation*}
\bigg\| \frac{1}{n} \sum_{i=1}^n \kappa_i x_i \bigg\|^2  \bar{\alpha}^2
- \bigg\langle\psi \; \Big| \; \frac{1}{n}\sum_{i=1}^n \kappa_i x_i \bigg\rangle \bar{\alpha}
- \frac{\lambda}{n}\sum_{i=1}^n y_i= 0,
\end{equation*}
which has a unique positive solution
\begin{equation*}
\bar{\alpha} = \frac{1}{2 \big\| \frac{1}{n} \sum_{i=1}^n \kappa_i x_i \big\|^2}
\left( \bigg\langle\psi \; \Big| \; \frac{1}{n}\sum_{i=1}^n \kappa_i x_i \bigg\rangle + \sqrt{
  \bigg\langle\psi \; \Big| \; \frac{1}{n}\sum_{i=1}^n \kappa_i x_i \bigg\rangle^2
+ 4 \frac{\lambda}{n} \sum_{i=1}^n y_i \bigg\| \frac{1}{n} \sum_{i=1}^n \kappa_i x_i \bigg\|^2
} \right),
\end{equation*}
and concludes the proof. $\hfill \qed$

% \bibliography{bibliothese}
% \bibliographystyle{abbrv}

% paragraph implementation_details (end)

% section speeding_up_sdca (end)

% section specifications_for_poisson_regression_and_hawkes_processes (end)

\end{document}